\documentclass{article} % For LaTeX2e
\usepackage{iclr2026_conference,times}

\usepackage{amsmath,amsfonts,bm}

\def\eqref#1{equation~\ref{#1}}
\def\1{\bm{1}}

\DeclareMathAlphabet{\mathsfit}{\encodingdefault}{\sfdefault}{m}{sl}
\SetMathAlphabet{\mathsfit}{bold}{\encodingdefault}{\sfdefault}{bx}{n}

\newcommand{\E}{\mathbb{E}}

\newcommand{\R}{\mathbb{R}}

\newcommand{\Var}{\mathrm{Var}}

\PassOptionsToPackage{capitalize, noabbrev}{cleveref}

\usepackage[utf8]{inputenc}
\usepackage[T1]{fontenc}
\usepackage{hyperref}
\usepackage{url}
\usepackage{booktabs}
\usepackage{amsfonts,amssymb,amsmath,amsthm}
\usepackage{mathtools}
\usepackage{nicefrac}
\usepackage{microtype}
\usepackage{algorithm}
\usepackage{xcolor}
\usepackage{graphicx}
\usepackage{cleveref}
\usepackage{enumitem}
\usepackage{siunitx}
\usepackage{bm}

\newtheorem{theorem}{Theorem}[section]
\newtheorem{lemma}[theorem]{Lemma}
\newtheorem{corollary}[theorem]{Corollary}
\newtheorem{proposition}[theorem]{Proposition}
\theoremstyle{definition}
\newtheorem{definition}[theorem]{Definition}
\newtheorem{assumption}[theorem]{Assumption}
\newtheorem{remark}[theorem]{Remark}

\crefname{appendix}{Appendix}{Appendices}
\Crefname{appendix}{Appendix}{Appendices}
\crefname{proposition}{Proposition}{Propositions}
\Crefname{proposition}{Proposition}{Propositions}
\crefname{remark}{Remark}{Remarks}
\Crefname{remark}{Remark}{Remarks}
\crefname{corollary}{Corollary}{Corollaries}
\Crefname{corollary}{Corollary}{Corollaries}
\crefname{definition}{Definition}{Definitions}
\Crefname{definition}{Definition}{Definitions}
\crefname{assumption}{Assumption}{Assumptions}
\Crefname{assumption}{Assumption}{Assumptions}
\crefname{lemma}{Lemma}{Lemmas}
\Crefname{lemma}{Lemma}{Lemmas}

\DeclareMathOperator{\Skew}{Skew}
\DeclareMathOperator{\Kurt}{Kurt}
\DeclareMathOperator{\Bin}{Binomial}
\DeclareMathOperator{\Bern}{Bernoulli}

\newcommand{\Dq}{\Delta_q}
\newcommand{\pflip}{p_{\mathrm{flip}}}
\newcommand{\Pesc}{P_{\mathrm{esc}}}
\newcommand{\Ptun}{P_{\mathrm{tun}}}
\newcommand{\Xthr}{X_{\mathrm{thr}}}
\usepackage{hyperref}
\usepackage{url}

\title{{\LARGE\sc Quantum Tunneling-Aware Machine Learning:}\\
Physics-Derived Noise Models for Robust Deployment}

\author{Uiwon Hwang$^{1,}$\thanks{Correspondence to: Uiwon Hwang (\texttt{uiwon.hwang@ewha.ac.kr})} \qquad Jaeho Hwang$^{2}$ \\
  $^1$Department of Computer Science and Engineering\\
  $^2$Human-Centered Artificial Intelligence Research Institute\\
  Ewha Womans University
}

\iclrfinalcopy % Uncomment for camera-ready version, but NOT for submission.

\begin{document}

\maketitle

\begin{abstract}
Transistor scaling is approaching a quantum-mechanical limit, as thin
gate oxides induce electron leakage through quantum tunneling. Unlike
conventional digital systems, AI inference can tolerate such errors
provided their structure is modeled correctly. In this paper, we introduce \textit{quantum tunneling-aware machine learning} (QTAML). We derive the
deployment-time weight-error distribution from first principles using
the Wentzel--Kramers--Brillouin (WKB) approximation and show that it
has structure that generic Gaussian noise models miss: an exact affine
mean drift, a per-bit variance hierarchy dominated by the
most-significant bit, and a per-layer dependence on
$\|W_\ell\|_\infty$ and the trained-network Jacobian. We package these
three structural properties into a single deployment-time algorithm,
\emph{Tunneling-Aware Compensation} (TAC), that combines closed-form
mean correction with an optimal layer-adaptive bit-budget allocation
derived from the WKB variance decomposition. Across four convolutional
architectures at $\pflip=0.10$ and a transformer encoder at $\pflip=0.05$, TAC reaches
$95\%$ of clean accuracy with $3.4\times$ to $33.6\times$ less ECC
overhead than Uniform-MSP, the natural baseline derived
from the same physics. The closed-form saturation ratio $\rho^*$
predicts these gains in advance, and on heterogeneous architectures
WKB-derived scoring outperforms magnitude-based allocation by up to
$24$ percentage points at small budgets. The algorithm requires no
retraining, no labels, and no inference-time overhead. We also verify
the WKB-derived distributional theorems to Monte Carlo precision.
These results connect WKB tunneling physics with noise-aware deep
learning and suggest a principled path toward hardware--software
co-design beyond conventional scaling limits.
\end{abstract}

\section{Introduction}
\label{sec:intro}

Transistor scaling is approaching a quantum-mechanical limit. As gate oxides
become thinner, electrons can tunnel through the barrier, causing stored
charges to leak and making digital states increasingly unreliable. Conventional
digital systems require nearly error-free storage and computation, and
circuit-level countermeasures such as error correction, redundant storage, and
voltage guardbanding often recover reliability only by sacrificing the area and
energy gains that motivated scaling in the first place. This tension has led to
the common view that further scaling is fundamentally constrained by device
reliability.

AI inference changes the reliability requirement. Unlike general-purpose
digital computation, neural networks can often tolerate moderate perturbations
of their weights without catastrophic failure. This suggests a different
possibility: devices operating below the conventional digital reliability floor
may still be useful for AI, provided that the induced errors are modeled and
handled in a way that reflects their physical origin. The key question is not
whether tunneling-induced errors exist, but whether their structure can be
quantified and exploited algorithmically.

Existing noise-aware training and fault-tolerance methods typically model
hardware-induced weight errors as Gaussian or empirically specified
perturbations~\citep{bishop1995,joshi2020,rasch2023,correctnet2022,
analognets2021,lightmatter2024,stutz2021}. While convenient, such models
abstract away the mechanism that creates the errors. Quantum tunneling is not
generic additive noise: it arises from discrete electron escape events through
a potential barrier, with probabilities governed by device physics. As a
result, the induced weight-error distribution can contain systematic structure
that is invisible to standard zero-mean Gaussian models.

In this work, we derive the deployment-time weight-error distribution
from quantum-mechanical first principles using the
Wentzel--Kramers--Brillouin (WKB) approximation~\citep{wentzel1926,kramers1926,brillouin1926}. Under offset-binary
encoding, the analysis reveals three structural properties that generic
zero-mean Gaussian models miss: an exact affine mean drift in the
conditional mean of the perturbation, a per-bit variance hierarchy in
which the most-significant bit alone carries three quarters of the
per-weight variance, and (under per-tensor scaling) a per-layer
dependence on the trained weights' infinity-norm and on the
input-output Jacobian. Together these properties expose a deployment-time
algorithmic handle that scales much beyond what is recoverable from the
mean alone.

Motivated by this observation, we introduce \emph{quantum
tunneling-aware machine learning} (QTAML): the design of learning
and deployment methods that use physically derived tunneling-error
statistics rather than generic noise surrogates. We package the
three structural properties into a single deployment-time algorithm,
\emph{Tunneling-Aware Compensation} (TAC), which combines closed-form
mean correction with a layer-adaptive bit-protection budget allocation
solved as a small integer program. TAC requires no retraining, no
architectural modification, and only a small unlabeled calibration
batch from the deployment distribution. Recent work~\citep{fuengfusin2024} also studies weight scaling for bit-flip
protection; we discuss the distinction in \Cref{sec:related}.

The contributions of this work can be summarized as follows:

\begin{itemize}
\item We introduce \emph{quantum tunneling-aware machine learning}
(QTAML), a physically grounded perspective for designing AI inference
methods under quantum-tunneling-induced device errors.

\item We derive a closed-form distribution for $b$-bit
tunneling-induced weight errors from the WKB approximation under
offset-binary encoding (\Cref{sec:noise-model}), and identify three
structural properties: an exact affine mean drift (\Cref{thm:bias}),
a non-vanishing per-weight non-Gaussianity dominated by the MSB
(\Cref{thm:no-clt}), and a per-tensor variance form
(\Cref{lem:per-tensor-variance}) suited to bit-allocation analysis.

\item We propose \emph{Tunneling-Aware Compensation} (TAC;
\Cref{sec:tac}), a deployment-time algorithm that exploits all three
properties: closed-form mean correction
($w \mapsto w/(1-2\pflip)$, derived from \Cref{thm:bias}), Jacobian
calibration on a small unlabeled batch
(\Cref{lem:linear-response}), and a layer-adaptive bit-protection
allocation solved as a knapsack-style integer program
(\Cref{thm:tac-optimal}). We further show
(\Cref{thm:saturation}) that the leading-order ratio of TAC's
output variance to the natural uniform most-significant-bit protection (Uniform-MSP) baseline is exactly
the AM-GM gap $\rho^* = \widetilde{s}_{\rm geom} / \overline{s}$ of
the per-layer importance distribution, a closed-form, budget-independent identity that quantifies when TAC's allocation pays off and links
the algorithmic gain directly to network geometry.

\item We empirically verify the WKB-derived distributional theorems
to Monte Carlo precision and demonstrate that TAC delivers a
$3.4\times$ to $33.6\times$ reduction in ECC overhead vs.\ Uniform-MSP
across four CNN architectures and a transformer encoder. The
closed-form $\rho^*$ predicts the empirical ratio to within an integer
correction $\kappa_L \in [1.17, 2.63]$ that shrinks to $1.05$ on the
transformer. On heterogeneous architectures, WKB-derived scoring
outperforms magnitude-based allocation by up to $24$ percentage points
at small budgets, the kind of layer-importance discrimination that
informally tuned heuristics miss (\Cref{sec:experiments}).
\end{itemize}
\section{Related Work}
\label{sec:related}

\paragraph{Noise-aware training and analog deep learning.}
The connection between weight noise and regularization dates back to
the Tikhonov-regularization interpretation of injected noise
\citep{bishop1995}. Recent hardware-aware training methods revisit this
idea in the context of analog and in-memory accelerators.
\citet{joshi2020} and \citet{rasch2023} retrain CNNs and transformers
under Gaussian conductance noise, CorrectNet~\citep{correctnet2022}
adds Lipschitz regularization for robustness, AnalogNets~\citep{analognets2021}
studies noise-aware TinyML models, and \citet{lightmatter2024} proposes
sensitivity-aware finetuning. These methods are effective under their
assumed noise models, but they typically treat hardware errors as
zero-mean Gaussian perturbations on weights. Such models do not capture
the conditional mean drift induced by tunneling and therefore cannot
motivate the closed-form compensation developed in this work.

\paragraph{Bit-flip robustness.}
Another line of work studies robustness to bit-level faults, motivated
by voltage scaling, unreliable memory, or adversarial bit-flip attacks.
Minerva~\citep{minerva2016} and MoRS~\citep{mors2022} consider bit
errors arising from low-power or unreliable hardware operation, whereas
Rowhammer-based attacks~\citep{aegis2023} and RA-BNN~\citep{rabnn2021}
study adversarial manipulation of model bits. \citet{stutz2021} train
DNNs to be resilient to random bit errors. These studies directly
address bit-level perturbations, but the error process is specified
empirically, adversarially, or through generic random-flip assumptions.
In contrast, we derive the bit-error-induced weight distribution from
quantum tunneling physics and show that it contains a systematic linear
shrinkage component.

\paragraph{Weight scaling for fault tolerance.}
The closest work in operational form is the recent weight-scaling method
of \citet{fuengfusin2024}. They multiply weights in each layer by a
constant before storage and divide by the same constant after readout,
with the scaling chosen to expand the dynamic range without overflowing
the data type. This strategy is motivated by the view that bit-flip
faults act as additive perturbations whose relative effect can be reduced
by increasing weight magnitudes. Our motivation and derivation are
different. We start from a physics-derived tunneling model, identify an
exact mean-drift term in the resulting weight perturbation, and obtain
the compensation factor as a closed-form consequence of that drift. Thus,
their method addresses generic bit-flip sensitivity through dynamic-range
expansion, whereas TAC cancels the deterministic shrinkage induced by
directional tunneling statistics. Our TAC algorithm subsumes their per-tensor scaling as a special case
(when no mean correction and no bit allocation are applied) and
generalizes it in two directions: a closed-form mean drift correction
(\Cref{thm:bias}) and a per-layer ECC bit-budget allocation derived
from the WKB-induced variance decomposition (\Cref{thm:tac-optimal}).

\paragraph{Test-time intervention and calibration.}
TAC can also be viewed as a test-time intervention, broadly related to
post-training calibration methods such as batch-normalization
calibration~\citep{nado2020}. However, unlike data-driven test-time
adaptation or calibration, TAC does not use deployment data, optimize an
auxiliary objective, or update model parameters based on test samples.
Its rescaling factor is derived analytically from the tunneling-induced
weight-error distribution and is determined entirely by the deployment
flip probability.

\paragraph{Other uses of device physics in neural networks.}
Several recent works use device-level physics in neural computation in
ways that are orthogonal to our setting. \citet{patel2025} implement
analog Bayesian neural networks by using intrinsic device noise as a
variational distribution. \citet{tamai2024} study absorbing
phase-transition universality at DNN initialization. Other works use
tunneling-related device characteristics as computational primitives,
for example tunnel-diode current--voltage curves as activation
functions~\citep{maksymov2024,tdaf2025}. These studies exploit device
physics for modeling or computation, whereas our focus is deployment-time
weight corruption caused by quantum-tunneling-induced bit errors.

\paragraph{Complementary protection strategies.}
Three families of approaches are complementary to TAC and could be
combined with it, rather than competing alternatives. \emph{Noise-aware
finetuning} under a non-Gaussian noise distribution would adapt
\citet{joshi2020,rasch2023} to inject samples from the WKB-derived law
of \Cref{thm:cf} rather than variance-matched Gaussians.
\emph{Weight regularization for tunneling robustness} would pre-shape
the network during training so that the post-deployment perturbation
of \Cref{thm:bias} has smaller effect. Existing Tikhonov-style penalties~\citep{bishop1995} address Gaussian noise but not the
state-dependent affine drift identified here. \emph{Fine-grained ECC
schemes} would extend the bit-budget allocation of TAC by protecting
individual bits within a weight at different levels (rather than
allocating $k_\ell$ MSBs uniformly within a layer); some encodings
in fault-tolerant neural-network hardware already do
this~\citep{aegis2023,stutz2021}. All three operate at training time
or below the bit-budget granularity used here, whereas TAC is a
purely deployment-time algorithm with no retraining and no per-bit
encoding changes. They can stack with TAC on top of the same WKB
noise model.
\section{WKB-Derived Weight Noise Distribution}
\label{sec:noise-model}

This section derives the deployment-time weight-error distribution induced by
quantum tunneling. The main consequence is that tunneling does not behave like
generic zero-mean Gaussian weight noise. Instead, it induces a structured,
state-dependent perturbation with a deterministic mean drift. This drift is the
basis of our \emph{Tunneling-Aware Compensation} (TAC) method.

\paragraph{Setup.}
We consider an accelerator that stores each $b$-bit quantized weight in
$b$ single-level memory cells. During an inference window of length $\tau$,
electrons may escape from these cells by quantum tunneling, causing bit-level
faults and perturbations in the decoded weight. We assume a single-barrier WKB
model, independent electron escape events, forward-only tunneling, independent
cells, and a uniform per-bit flip probability $\pflip$. The main text uses
offset-binary encoding:
\[
w = -w_{\max} + \Dq \cdot \mathrm{code}(w),
\qquad
\mathrm{code}(w)=\sum_{k=0}^{b-1} b_k(w)2^k,
\]
% where $\Dq=2w_{\max}/2^b$. Detailed assumptions and WKB derivations are given
% in \Cref{app:wkb-details}.
where $b_k(w) \in \{0, 1\}$ is the $k$-th bit of the binary representation of $w$ and $\Dq=2w_{\max}/2^b$. Detailed assumptions and WKB derivations are given in \Cref{app:wkb-details}.

\paragraph{From tunneling to bit flips.}
Under the WKB approximation, the single-electron tunneling probability through
a rectangular barrier of thickness $d$ is
\[
\Ptun(d)=\exp(-\alpha d),
\qquad
\alpha=\frac{2}{\hbar}\sqrt{2m^*(V_b-E)},
\]
where $\hbar$ is the Dirac constant, $m^{*}$ is the effective mass, and $E$ is the total energy of the electron. Here, the potential energy is given by $V(x)=V_{b}$ for $x\in [0, d]$ (inside the barrier), and $V(x)=0$ for $x < 0$ or $x > d$ (outside the barrier).
Over an inference window, this induces an electron escape probability
\[
\Pesc(\tau)=1-(1-\Ptun)^{f_a\tau},
\]
where $f_{a}$ is the attempt frequency of the electron colliding with the barrier and $\tau$ is the duration of the inference window, making $f_a\tau$ the total number of tunneling attempts.
Let $N_e$ be the total number of electrons stored in a cell. 
If a bit flips when the number of escaped electrons exceeds a threshold
$\Xthr$, then
\[
\pflip
=
P(X\ge \Xthr),
\qquad
X\sim \Bin(N_e,\Pesc).
\]
In the rare-event regime,
\[
\pflip
=
\frac{(N_e\Pesc)^{\Xthr}}{\Xthr!}\bigl(1+O(N_e\Pesc)\bigr),
\]
so the flip probability inherits the exponential dependence on barrier
thickness from the WKB tunneling law.

\paragraph{Per-weight perturbation.}
Let $\xi_k\sim\Bern(\pflip)$ denote whether bit $k$ flips.
Under offset-binary encoding, define
the sign indicator $s_{k}(w)$ as:
\[
s_k(w)=
\begin{cases}
+1, & b_k(w)=0,\\
-1, & b_k(w)=1.
\end{cases}
\]
The induced perturbation of a stored weight is
\begin{equation}
\label{eq:dw-main}
\Delta w
=
\Dq\sum_{k=0}^{b-1}\xi_k s_k(w)2^k.
\end{equation}
The key point is that $s_k(w)$ depends on the stored bit pattern. Thus,
tunneling-induced weight noise is state-dependent rather than generic
additive noise.

\begin{theorem}[Closed-form conditional distribution]
\label{thm:cf}
Under the setup above and the assumptions detailed in
\Cref{app:wkb-assumptions}, the conditional characteristic function of
$\Delta w$ is
% $\Delta w$ given $w$ is
\[
\phi_{\Delta w\mid w}(t)
=
\prod_{k=0}^{b-1}
\left[
1-\pflip
+
\pflip e^{it s_k(w)2^k\Dq}
\right].
\]
The product structure follows from the independence of the bit-flip indicators
$\xi_k$. The proof is given in \Cref{app:thm-cf-proof}.
\end{theorem}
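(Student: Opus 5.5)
Conditioning on $w$ is the whole argument. Once $w$ is fixed, the stored bit pattern $(b_0(w),\dots,b_{b-1}(w))$ is fixed, so each sign indicator $s_k(w)\in\{\pm1\}$ and each coefficient $c_k := s_k(w)2^k\Dq$ is a deterministic constant. By \eqref{eq:dw-main}, $\Delta w=\sum_{k=0}^{b-1} c_k\xi_k$ is then a fixed linear combination of the flip indicators. I would first use the modelling assumptions (independent cells, uniform per-bit flip probability, flips driven by the tunneling process and not by the logical value of the weight) to state precisely that, conditional on $w$, the $\xi_k$ are i.i.d.\ $\Bern(\pflip)$. In particular their joint law does not depend on $w$.

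Next I would compute the conditional characteristic function directly from its definition:
\[
\phi_{\Delta w\mid w}(t)=\E\bigl[e^{it\Delta w}\mid w\bigr]=\E\Bigl[\prod_{k=0}^{b-1}e^{itc_k\xi_k}\Bigm| w\Bigr].
\]
Conditional independence of the $\xi_k$ lets the expectation of the product factor into $\prod_k \E[e^{itc_k\xi_k}\mid w]$. Each factor is the characteristic function of a scaled Bernoulli variable. Since $\xi_k\in\{0,1\}$, the factor equals $(1-\pflip)\cdot 1+\pflip e^{itc_k}$. Substituting $c_k=s_k(w)2^k\Dq$ gives exactly the stated product.

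The only point needing care is justifying the factorization. This requires that conditioning on $w$ does not destroy the independence of the $\xi_k$, and that each $\xi_k$ keeps marginal $\pflip$ regardless of the stored bit value. The forward-only tunneling assumption creates a tension here. Physically, only charged cells can lose electrons, which would make $\pflip$ depend on $b_k(w)$. However, the paper's model takes $\pflip$ uniform and encodes the direction of the flip through $s_k(w)$. So I would cite the uniform-flip and independent-cell assumptions of \Cref{app:wkb-assumptions} explicitly at this step, and remark that the WKB derivation only fixes the numerical value of $\pflip$.

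Finally, as a sanity check, I would differentiate the expression at $t=0$. This should give $\E[\Delta w\mid w]=\pflip\Dq\sum_k s_k(w)2^k$, consistent with the affine drift to be used in \Cref{thm:bias}.
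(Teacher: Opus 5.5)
Your proposal is correct and follows the paper's proof essentially verbatim: conditional on $w$ the coefficients $s_k(w)2^k\Dq$ are deterministic, independence of the $\xi_k$ factors $\E[e^{it\Delta w}\mid w]$ into a product, and each factor is the scaled-Bernoulli characteristic function $1-\pflip+\pflip e^{ita}$. Your caveat is fair and the paper leaves it implicit: the factorization rests on the uniform-$\pflip$ and independent-cell assumptions, not on the physical forward-only escape picture, which taken literally would make the flip probability depend on the stored bit value.
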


\begin{theorem}[Affine mean drift]
\label{thm:bias}
Under offset-binary encoding, the conditional mean perturbation is
\[
\E[\Delta w\mid w]
=
-2\pflip w-\pflip\Dq.
\]
Equivalently, $\E[\Delta w\mid w] = -2\pflip w + O(\Dq)$.
\end{theorem}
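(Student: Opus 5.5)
Conditional on $w$, the bit pattern $b_k(w)$ and hence the signs $s_k(w)$ are deterministic. The only randomness in \eqref{eq:dw-main} comes from the flip indicators $\xi_k\sim\Bern(\pflip)$. I would therefore start from \eqref{eq:dw-main} and take expectations term by term using linearity:
\[
\E[\Delta w\mid w]=\Dq\sum_{k=0}^{b-1}\E[\xi_k]\,s_k(w)2^k=\pflip\Dq\sum_{k=0}^{b-1}s_k(w)2^k .
\]
Independence of the $\xi_k$ is not needed here, only their marginals. Alternatively, one can differentiate the characteristic function of \Cref{thm:cf} at $t=0$, which gives the same expression.

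The key step is to rewrite $s_k(w)$ affinely in the bit: $s_k(w)=1-2b_k(w)$. This gives
\[
\sum_{k=0}^{b-1}s_k(w)2^k=\sum_{k=0}^{b-1}2^k-2\sum_{k=0}^{b-1}b_k(w)2^k=(2^b-1)-2\,\mathrm{code}(w).
\]
Next I would invert the offset-binary encoding to get $\mathrm{code}(w)=(w+w_{\max})/\Dq$. Substituting,
\[
\E[\Delta w\mid w]=\pflip\bigl[(2^b-1)\Dq-2(w+w_{\max})\bigr].
\]
Using $\Dq=2w_{\max}/2^b$, we have $2^b\Dq=2w_{\max}$, so the constant terms collapse to $2w_{\max}-\Dq-2w_{\max}=-\Dq$. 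This yields $-2\pflip w-\pflip\Dq$. The asymptotic form $-2\pflip w+O(\Dq)$ follows immediately, since $\pflip\le 1$.

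The argument is a direct computation with no real obstacle. The only care point is the bookkeeping of the offset constant: I need to confirm that $w_{\max}$ cancels exactly and leaves the residual $-\pflip\Dq$. This residual arises because the code range is $\{0,\dots,2^b-1\}$ rather than being symmetric about $2^{b-1}$. I would double-check it on $b=1$: the code is $0$ or $1$, with $w=-w_{\max}$ or $w=0$. For code $0$ the drift is $+\pflip\Dq=2\pflip w_{\max}$, and the formula gives $2\pflip w_{\max}-\pflip w_{\max}$. Since $\Dq=w_{\max}$ when $b=1$, this equals $\pflip\Dq$, so the formula is consistent.
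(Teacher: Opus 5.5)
Your proof is correct and matches the paper's argument. The paper obtains $\pflip\Dq\sum_k s_k(w)2^k$ by differentiating $\ln\phi_{\Delta w\mid w}$ at $t=0$ (equivalent to your linearity-of-expectation step), then uses $s_k(w)=1-2b_k(w)$, $\mathrm{code}(w)=(w+w_{\max})/\Dq$, and $\Dq(2^b-1)=2w_{\max}-\Dq$ exactly as you do. One small slip in your $b=1$ check: there $\pflip\Dq=\pflip w_{\max}$, not $2\pflip w_{\max}$, although your final comparison ($2\pflip w_{\max}-\pflip w_{\max}=\pflip\Dq$) is right and does confirm the formula.
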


\begin{proof}[Sketch]
% By linearity of expectation, $\E[\Delta w\mid w]=\pflip\Dq\sum_k s_k(w)2^k$.
% Using $s_k(w)=1-2b_k(w)$ and the offset-binary identity
% $\mathrm{code}(w)=(w+w_{\max})/\Dq$, the sum simplifies to
% $2w_{\max}/\Dq - 1 - 2(w+w_{\max})/\Dq$. Multiplying by $\pflip\Dq$ and
% collecting terms gives the result. The full calculation is given in
% \Cref{app:thm-bias-proof}.
By the moment-generating property of the characteristic function (\Cref{thm:cf}), the conditional mean is given by
$\E[\Delta w\mid w] = \frac{1}{i}\phi_{\Delta w\mid w}'(0)$.
Evaluating this derivative and using $s_k(w)=1-2b_k(w)$ with the offset-binary identity
$\mathrm{code}(w)=(w+w_{\max})/\Dq$,
we can easily obtain the result. The full calculation is given in \Cref{app:thm-bias-proof}.
\end{proof}

\begin{corollary}[Deployment-time shrinkage]
\label[corollary]{cor:wd}
Up to a quantization-scale offset, quantum tunneling induces a deterministic
multiplicative shrinkage of weights at deployment:
\[
w \mapsto (1-2\pflip)w.
\]
This effect is absent from zero-mean Gaussian noise models and
motivates the mean-correction step of TAC (\Cref{sec:tac}). The bit-allocation and per-layer steps of TAC require additional
structure beyond this corollary.

\end{corollary}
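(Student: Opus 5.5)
The plan is to read the corollary off directly from \Cref{thm:bias}, making precise what ``up to a quantization-scale offset'' means. Write the deployed (post-tunneling) weight as $\tilde w = w + \Delta w$. Taking conditional expectations and invoking \Cref{thm:bias} gives
\[
\E[\tilde w \mid w] = w + \E[\Delta w\mid w] = (1-2\pflip)\,w - \pflip\Dq .
\]
So, conditionally on the stored value, the mean deployed weight is an affine image of $w$. Its linear part is the multiplicative shrinkage $w\mapsto(1-2\pflip)w$, and its constant part $-\pflip\Dq$ depends only on the quantization step and not on $w$. This constant is exactly the ``quantization-scale offset'' in the statement. Since $\Dq = 2w_{\max}/2^b$, it is $O(\pflip w_{\max}2^{-b})$, which vanishes as $b\to\infty$ and is at most half a quantization step for $\pflip\le 1/2$.

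The key steps, in order, are as follows. First, I would restate \Cref{thm:bias} in the form above. Second, I would split $\tilde w = (1-2\pflip)w - \pflip\Dq + \eta$, where $\eta := \Delta w - \E[\Delta w\mid w]$ satisfies $\E[\eta\mid w]=0$. This isolates the deterministic, state-dependent part of the perturbation from the conditionally zero-mean fluctuation. Third, I would contrast this with a zero-mean Gaussian model $\tilde w = w + \varepsilon$, $\varepsilon\sim\mathcal N(0,\sigma^2)$ independent of $w$. There, $\E[\tilde w\mid w]=w$ for every $w$, so no multiplicative factor can appear. More generally, any additive noise independent of $w$ with zero mean yields slope $1$ in $\E[\tilde w\mid w]$, whereas the tunneling model has slope $1-2\pflip\neq 1$ whenever $\pflip>0$. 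This establishes the ``absent from zero-mean Gaussian models'' clause.

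Finally, I would note the motivating consequence for TAC. Rescaling the stored weight to $w/(1-2\pflip)$ (assuming it remains in range) gives $\E[\tilde w\mid w] = w - \pflip\Dq$, cancelling the shrinkage to leading order.

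The only genuine subtlety is the range issue: $w/(1-2\pflip)$ must be representable in $[-w_{\max}, w_{\max}]$. One also has to treat re-quantization of the rescaled weight, which perturbs the stored value by at most $\Dq/2$ and is absorbed into the same $O(\Dq)$ offset. For the corollary itself, which concerns only the conditional mean map, I would simply state the identity with the explicit $O(\Dq)$ remainder and defer range and clipping concerns to \Cref{sec:tac}.
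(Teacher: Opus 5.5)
Your proposal is correct and follows the paper's own derivation, which reads the corollary directly off \Cref{thm:bias}: $\E[w+\Delta w\mid w]=(1-2\pflip)w-\pflip\Dq$, with $-\pflip\Dq$ as the quantization-scale offset, while zero-mean Gaussian noise leaves $\E[\tilde w\mid w]=w$. One small correction that does not affect the argument: under \Cref{ass:encoding} the representable range is $[-w_{\max},w_{\max}-\Dq]$, not $[-w_{\max},w_{\max}]$.
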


\begin{theorem}[Persistent per-weight non-Gaussianity]
\label{thm:no-clt}
Increasing bit precision does not make the per-weight perturbation Gaussian.
The MSB contributes
\[
\frac{4^{b-1}}{(4^b-1)/3}
\;\xrightarrow{b\to\infty}\;
\frac{3}{4}
\]
of the total per-weight variance. Consequently, the Lyapunov ratio for a
per-weight central-limit argument does not vanish as $b$ grows; in particular,
in the small-$\pflip$ regime, $L_b=\Theta(\pflip^{-1/2})$ as $b\to\infty$.
The proof is given in \Cref{app:thm-noclt-proof}.
\end{theorem}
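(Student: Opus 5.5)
The plan is to work directly from the decomposition \eqref{eq:dw-main}, i.e.\ $\Delta w=\sum_{k=0}^{b-1} Y_k$ with $Y_k=\Dq\,\xi_k s_k(w)2^k$. Conditional on $w$, the $Y_k$ are independent (as in \Cref{thm:cf}), and each is a scaled Bernoulli with $|s_k(w)|=1$. Hence $\Var(Y_k\mid w)=\Dq^2 4^k\pflip(1-\pflip)$, which does not depend on the stored bit pattern. Summing the geometric series gives the total conditional variance $\Dq^2\pflip(1-\pflip)(4^b-1)/3$. The common factor $\Dq^2\pflip(1-\pflip)$ cancels in the MSB share $\Var(Y_{b-1}\mid w)/\Var(\Delta w\mid w)$, which leaves exactly $4^{b-1}\cdot 3/(4^b-1)\to 3/4$. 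This proves the first claim. (Alternatively, one can read off the cumulants from $\log\phi_{\Delta w\mid w}$ in \Cref{thm:cf}, but the direct route is cleaner.)

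For the Lyapunov part, I will use the third-moment ratio
\[
L_b=\frac{\sum_{k}\E\bigl[|Y_k-\E Y_k|^3\mid w\bigr]}{\bigl(\sum_k \Var(Y_k\mid w)\bigr)^{3/2}} .
\]
For a centered Bernoulli, $\E|\xi-\pflip|^3=\pflip(1-\pflip)\bigl[(1-\pflip)^2+\pflip^2\bigr]$. The numerator is therefore $\Dq^3\pflip(1-\pflip)\bigl[(1-\pflip)^2+\pflip^2\bigr]\sum_k 8^k=\Dq^3\pflip(1-\pflip)c_p(8^b-1)/7$, where $c_p=(1-\pflip)^2+\pflip^2$. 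The denominator is $\Dq^3\bigl(\pflip(1-\pflip)\bigr)^{3/2}\bigl((4^b-1)/3\bigr)^{3/2}$. The powers of $2$ give $(8^b/7)/(4^b/3)^{3/2}\to 3^{3/2}/7$, so
\[
L_b\to \frac{3\sqrt3}{7}\,\frac{c_p}{\sqrt{\pflip(1-\pflip)}} ,
\]
a strictly positive constant. The ratio therefore does not vanish, which is the failure of the Lyapunov condition. As $\pflip\to0$ we have $c_p\to1$, so the limit is $\Theta(\pflip^{-1/2})$.

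To make ``does not become Gaussian'' meaningful beyond the sufficient condition, I will add a standard remark. Since $\max_k \Var(Y_k)/\sum_k\Var(Y_k)\to3/4\neq0$, the Feller negligibility condition fails. So Lindeberg--Feller cannot apply, and the normalized sum stays dominated by a two-point MSB component. One could also note that the normalized third cumulant (skewness), computed from \Cref{thm:cf}, has a nonzero limit whenever $\pflip\neq1/2$.

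The main obstacle is only interpretive rather than computational. A nonvanishing Lyapunov ratio is sufficient but not necessary for the failure of the CLT, so I will rely on the Feller/MSB-dominance argument for the rigorous non-Gaussianity claim. The $L_b$ computation then gives the quantitative $\Theta(\pflip^{-1/2})$ rate.
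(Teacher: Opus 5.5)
Your proposal is correct and follows the paper's proof. The MSB share comes from the geometric variance sums, and the Lyapunov third-moment ratio has the $8^b/(4^b)^{3/2}$ scaling, which leaves a $\pflip$-dependent constant; your exact limit $\frac{3\sqrt{3}}{7}\,\frac{(1-\pflip)^2+\pflip^2}{\sqrt{\pflip(1-\pflip)}}$ sharpens the paper's $\Theta$-level bookkeeping, covers all $\pflip\in(0,1)$ at once, and your Feller remark usefully addresses the fact that failure of the (merely sufficient) Lyapunov condition does not by itself prove non-Gaussianity.
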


\paragraph{Higher moments.}
The conditional skewness scales as $\Theta(\pflip^{-1/2})$ and the excess
kurtosis as $\Theta(\pflip^{-1})$ as $\pflip\to 0$
(\Cref{app:moments}); both diverge in the rare-event regime,
consistent with the high-kurtosis structure noted above.

\begin{remark}[Layer-output Gaussianity]
\label[remark]{rem:layer}
The non-Gaussianity in \Cref{thm:no-clt} concerns a single stored
weight. A wide layer output,
\[
\Delta y=\sum_i x_i\Delta w_i,
\]
may still become approximately Gaussian through a separate cross-weight
central-limit effect. Thus, Gaussian approximations can be useful for
output-level robustness analysis, even though they do not capture the
per-weight distribution relevant to weight-level compensation.
\end{remark}

\paragraph{Implication.}
These results show that tunneling-induced weight errors contain
structure beyond what zero-mean Gaussian noise can capture: a
deterministic mean drift (\Cref{thm:bias}), a per-bit variance
hierarchy dominated by the most-significant bit (\Cref{thm:no-clt}),
and (under per-tensor scaling) a per-layer dependence on
$\|W_\ell\|_\infty$ and the input-output Jacobian.
\Cref{sec:tac} derives \emph{Tunneling-Aware Compensation} (TAC), a
single deployment-time algorithm that exploits all three. Full proofs
are given in \Cref{app:proofs}. Higher-order moments and alternative encodings are discussed in \Cref{app:moments,app:encoding}.

\section{Tunneling-Aware Compensation}
\label{sec:tac}

This section develops \emph{Tunneling-Aware Compensation} (TAC), a
deployment-time algorithm that exploits three structural properties of
the WKB-derived noise distribution from \Cref{sec:noise-model}: the
affine mean drift (\Cref{thm:bias}), the per-bit variance hierarchy
(\Cref{thm:no-clt}), and the per-layer geometry of the trained
network. The key insight is that the value of mean correction depends
on the effective per-layer noise level after bit protection, making
\emph{per-layer} mean-correction decisions necessary in deeper or
residual architectures.

\subsection{Mean Correction Is Not Always Beneficial}
\label{sec:mean-correction-tradeoff}
\label{sec:mean-vs-variance}

\Cref{thm:bias} states that pre-multiplying weights by $1/(1-2\pflip)$
exactly cancels the deterministic shrinkage:

\begin{proposition}[Mean correction restores the expected weight]
\label[proposition]{prop:compensation}
Fix a trained weight $w$ and deployment flip probability $\pflip\in[0,1/2)$.
Let $w^c \coloneqq w/(1-2\pflip)$ (assumed representable without
clipping), and let $\tilde w^c$ be the value read from
tunneling-prone memory after storing $w^c$. Then
\begin{equation}
\E[\tilde w^c \mid w] = w + O(\Dq).
\label{eq:tac-restoration}
\end{equation}
The proof is given in \Cref{app:tac-proof}.
\end{proposition}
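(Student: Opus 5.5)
The plan is to apply \Cref{thm:bias} to the corrected weight $w^c$ in place of $w$. The read-out value is $\tilde w^c = w^c + \Delta w^c$, where $\Delta w^c$ is the perturbation of \eqref{eq:dw-main} evaluated at the stored bit pattern of $w^c$.

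One subtlety must be handled first. $w^c$ is generally not on the quantization grid, so what is actually stored is its quantized value $q(w^c)$, with $|q(w^c)-w^c|\le \Dq$ (or $\Dq/2$ with rounding). The assumption that $w^c$ is representable without clipping guarantees $q(w^c)\in[-w_{\max}, w_{\max}-\Dq]$, so the offset-binary identity $\mathrm{code}=(q(w^c)+w_{\max})/\Dq$ used in \Cref{thm:bias} is valid. Since $w$ is fixed, $w^c$ and $q(w^c)$ are deterministic given $w$. Conditioning on $w$ is therefore the same as conditioning on the stored code.

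Next I invoke \Cref{thm:bias} at the stored value:
\[
\E[\tilde w^c\mid w] = q(w^c) + \E[\Delta w \mid q(w^c)] = (1-2\pflip)\,q(w^c) - \pflip\Dq .
\]
I then substitute $q(w^c)=w^c+\epsilon$ with $|\epsilon|\le\Dq$. This gives
\[
(1-2\pflip)\,w^c + (1-2\pflip)\epsilon - \pflip\Dq = w + (1-2\pflip)\epsilon - \pflip\Dq ,
\]
using $(1-2\pflip)w^c=w$ by definition. That last equality is exactly where $\pflip<1/2$ is needed: it makes the division defining $w^c$ legitimate.

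Finally I bound the remainder. Since $0\le\pflip<1/2$, we have $|(1-2\pflip)\epsilon-\pflip\Dq|\le \Dq+\Dq/2 = \tfrac32\Dq$. This is $O(\Dq)$ uniformly in $\pflip$ and $w$, which proves \eqref{eq:tac-restoration}. It is worth noting that without quantization error the drift is exactly $-\pflip\Dq$, matching the affine offset in \Cref{thm:bias}.

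The main obstacle is bookkeeping rather than analysis. One must make sure \Cref{thm:bias} is applied to the quantized stored value, not to $w^c$ itself, and that the clipping assumption keeps the code within $\{0,\dots,2^b-1\}$. If the paper instead intends $w^c$ to be exactly representable, then $\epsilon=0$ and the argument reduces to a one-line substitution.
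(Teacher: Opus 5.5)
Your proof is correct and follows the same route as the paper's: apply \Cref{thm:bias} to the stored compensated value, use that $w^c$ is deterministic given $w$, and cancel via $(1-2\pflip)w^c=w$, which in the paper yields exactly $w-\pflip\Dq$. Your explicit tracking of the grid-rounding error $\epsilon$ is a harmless refinement: the paper reads ``representable'' as $w^c$ lying on the grid, i.e.\ $\epsilon=0$, while your bound of $\tfrac32\Dq$ also covers the case where $w^c$ must be rounded onto the grid.
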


This rule appears free: it is closed-form, applies elementwise, and
adds no inference cost.
However, applying it indiscriminately can hurt deployment-time
accuracy when bit-level protection is in use. Two competing effects
determine whether mean correction is beneficial for a given layer:

\begin{itemize}[topsep=2pt,itemsep=1pt,leftmargin=18pt]
\item \emph{Bias removal} (the benefit of \Cref{thm:bias}): without
correction, the deployed weight is $(1-2\pflip)w$ in expectation,
introducing a deterministic bias. The correction restores the
expected weight to $w$.
\item \emph{Variance amplification} (the cost): the correction
multiplies all weights by $1/(1-2\pflip)$, including the
high-significance bits that bit protection has already shielded. The
amplified weights are then perturbed by the same protected noise
distribution, but with $1/(1-2\pflip)^2$ more variance from the
unprotected lower bits.
\end{itemize}

For layers with little protection ($k_l \approx 0$), bias dominates
and mean correction helps. For layers with strong protection ($k_l$
large), the residual noise is small and the variance amplification
dominates, so mean correction hurts. The crossover is per-layer and
depends on $k_l$, the deployment $\pflip$, and the layer's role in
the network. We make this decision empirically per layer.

\subsection{Per-Tensor Variance Under Partial Bit Protection}
\label{sec:per-tensor-variance}

We characterize the variance contribution to the output. Under
per-tensor scaling, the WKB-derived bit-level structure of
\Cref{thm:no-clt} tightens into an exact per-layer expression.

\begin{lemma}[Per-tensor variance with partial bit protection]
\label[lemma]{lem:per-tensor-variance}
Consider layer $\ell$ with weights $W_\ell\in\R^{n_\ell}$, stored under
per-tensor scaling so that $W_\ell$ is normalized by
$s_\ell = w_{\max}/\|W_\ell\|_\infty$ before $b$-bit offset-binary
quantization. If the top $k_\ell$ bit positions in this tensor are
protected from flipping (e.g.\ via selective ECC), the per-weight
conditional variance after readout is
\begin{equation}
\Var\!\bigl[\Delta w_\ell \,\big|\, w_\ell\bigr]
\;=\; \pflip(1-\pflip)\cdot
\frac{4\|W_\ell\|_\infty^2}{4^b}\cdot
\frac{4^{b - k_\ell}-1}{3}.
\label{eq:per-tensor-variance}
\end{equation}
\end{lemma}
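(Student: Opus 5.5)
Starting from the representation \eqref{eq:dw-main}, the variance follows from independence of the flip indicators together with the scaling convention. Let $w' = s_\ell w$ be the normalized weight that is actually stored. For it the quantization step is $\Dq = 2w_{\max}/2^b$, and the readout perturbation is $\Delta w' = \Dq\sum_{k=0}^{b-1}\xi_k s_k(w')2^k$. Protecting the top $k_\ell$ bit positions means $\xi_k \equiv 0$ for $k \ge b-k_\ell$. So only the indices $k=0,\dots,b-k_\ell-1$ contribute, and these $\xi_k$ remain i.i.d.\ $\Bern(\pflip)$, independent of the stored pattern.

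Conditioning on $w$ fixes every sign $s_k(w')\in\{\pm1\}$. By the product form of \Cref{thm:cf}, or directly by independence, the conditional variance is a sum of per-bit variances:
\[
\Var[\Delta w'\mid w] = \Dq^2\sum_{k=0}^{b-k_\ell-1} s_k(w')^2\,4^k\,\Var(\xi_k) = \pflip(1-\pflip)\,\Dq^2\,\frac{4^{b-k_\ell}-1}{3}.
\]
Here we use $s_k^2=1$ and the geometric sum $\sum_{k=0}^{m-1}4^k=(4^m-1)/3$. This step also explains why the result does not depend on $w$: the state dependence in $s_k$ disappears once it is squared. The case $k_\ell=0$ recovers the unprotected total variance whose MSB share appears in \Cref{thm:no-clt}.

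Finally we undo the scaling. The deployed weight is $\tilde w = \tilde w'/s_\ell$, so $\Delta w_\ell = \Delta w'/s_\ell$ and the variance picks up a factor $s_\ell^{-2} = \|W_\ell\|_\infty^2/w_{\max}^2$. Combining this with $\Dq^2 = 4w_{\max}^2/4^b$, the $w_{\max}$ factors cancel and leave $4\|W_\ell\|_\infty^2/4^b$, which is exactly \eqref{eq:per-tensor-variance}.

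The main point to handle carefully is the bookkeeping of the protection model rather than any analytic difficulty. We must state precisely that ``protected'' means the flip indicator is deterministically zero, and not merely that the error is corrected in expectation. We must also confirm that the scaling is deterministic given $W_\ell$, so that it can be pulled out of the conditional variance. No clipping issue arises, because $s_\ell$ maps $\|W_\ell\|_\infty$ to $w_{\max}$.
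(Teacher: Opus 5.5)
Your proof is correct and follows essentially the same route as the paper. You sum the per-bit variances $\pflip(1-\pflip)4^k\Dq^2$ over the $b-k_\ell$ unprotected positions in the storage domain, then multiply by $s_\ell^{-2}=\|W_\ell\|_\infty^2/w_{\max}^2$ and use $\Dq^2=4w_{\max}^2/4^b$.

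One small aside: your closing remark on clipping is slightly off, because $w_{\max}$ itself lies outside the representable grid $[-w_{\max},w_{\max}-\Dq]$ (hence the $1.001$ factor in Step~4 of \Cref{alg:tac}). This does not affect the result, since the variance does not depend on the stored bit pattern.
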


\begin{proof}
By \Cref{thm:var} (\Cref{app:proofs}), the unprotected per-weight
variance in the storage domain is $\pflip(1-\pflip)\Dq^2 \sum_{j=0}^{b-1}4^j$.
Removing the top $k_\ell$ bit positions removes the corresponding terms
from the geometric sum, leaving $\sum_{j=0}^{b-k_\ell-1}4^j=(4^{b-k_\ell}-1)/3$.
After readout division by $s_\ell$, variances are multiplied by
$s_\ell^{-2}=(\|W_\ell\|_\infty/w_{\max})^2$. Substituting
$\Dq=2w_{\max}/2^b$ gives $\Dq^2/w_{\max}^2 = 4/4^b$, which combines
with $\|W_\ell\|_\infty^2$ to give~\eqref{eq:per-tensor-variance}.
\end{proof}

\subsection{Linear Response of Logits and Bit-Budget Allocation}
\label{sec:linear-response}

\begin{lemma}[Linear response]
\label[lemma]{lem:linear-response}
Let $f(x;W)$ denote the logits and assume the perturbations $\Delta W_\ell$
are independent across layers, zero mean, and have small norm. Then
\begin{equation}
\E\!\left[\bigl\|f(x;W+\Delta W)-f(x;W)\bigr\|_2^2\right]
\;=\; \sum_{\ell=1}^{L} G_\ell(x)\cdot\E\bigl[\|\Delta W_\ell\|_F^2\bigr]
\;+\; O\!\bigl(\|\Delta W\|_{F}^3\bigr),
\label{eq:linear-response}
\end{equation}
where $G_\ell(x) := \|J_\ell(x)\|_F^2 / n_\ell$ and
$J_\ell(x) := \partial f/\partial W_\ell$ is the Jacobian of the logits
with respect to layer $\ell$'s weights, evaluated at the trained $W$.
Here, $\|\cdot\|_F$ denotes the Frobenius norm.
\end{lemma}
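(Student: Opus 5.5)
The plan is a first-order Taylor expansion of the logits in the weights, followed by a second-moment computation that uses the noise structure from \Cref{sec:noise-model}. Fix $x$ and write $\Delta f := f(x;W+\Delta W)-f(x;W)$. Assuming $f$ is $C^2$ in $W$ near the trained weights, which holds for smooth activations or almost everywhere for ReLU networks, Taylor's theorem gives
\[
\Delta f \;=\; \sum_{\ell=1}^{L} J_\ell(x)\,\mathrm{vec}(\Delta W_\ell) \;+\; R(x;\Delta W),
\qquad \|R\|_2 \le C\,\|\Delta W\|_F^2 ,
\]
with $C$ a bound on the Hessian of $f$ on a neighbourhood of $W$. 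Squaring gives $\|\Delta f\|_2^2 = \|\sum_\ell J_\ell \Delta W_\ell\|_2^2 + 2\langle \sum_\ell J_\ell\Delta W_\ell, R\rangle + \|R\|_2^2$. By Cauchy--Schwarz, the last two terms are $O(\|\Delta W\|_F^3)$ and $O(\|\Delta W\|_F^4)$. After taking expectations they are absorbed into the stated remainder, because $\Delta W$ is almost surely bounded: each entry is a finite sum of bit contributions bounded by $2^{b}\Dq$. So no tail or moment issue arises.

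For the leading term, I expand $\E\|\sum_\ell J_\ell \Delta W_\ell\|_2^2$ into diagonal and cross terms. For $\ell\neq m$, the cross term is $\E[\Delta W_\ell^\top J_\ell^\top J_m \Delta W_m]$. It vanishes by the assumed independence across layers together with zero mean, $\E[\Delta W_\ell]=0$. Zero mean is exactly what the mean-correction step of \Cref{prop:compensation} provides, up to the $O(\Dq)$ offset. That leaves
\[
\sum_\ell \E\bigl[\Delta W_\ell^\top J_\ell^\top J_\ell \Delta W_\ell\bigr]
= \sum_\ell \mathrm{tr}\bigl(J_\ell^\top J_\ell\,\Sigma_\ell\bigr),
\qquad \Sigma_\ell := \E[\Delta W_\ell \Delta W_\ell^\top].
\]

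The key step is to show that $\Sigma_\ell = \sigma_\ell^2 I_{n_\ell}$. The off-diagonal entries vanish because distinct weights are stored in disjoint, independent cells, so their perturbations are independent and (being zero-mean) uncorrelated. The diagonal entries are all equal: by \Cref{lem:per-tensor-variance}, the per-weight conditional variance $\Var[\Delta w_\ell\mid w_\ell]$ depends only on $\pflip$, $b$, $k_\ell$ and $\|W_\ell\|_\infty$, and not on the stored value $w_\ell$. This state-independence of the variance, in contrast to the state-dependent mean of \Cref{thm:bias}, is what makes the layer isotropic. Hence $\mathrm{tr}(J_\ell^\top J_\ell \Sigma_\ell) = \sigma_\ell^2\|J_\ell\|_F^2$. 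Since $\E\|\Delta W_\ell\|_F^2 = n_\ell\sigma_\ell^2$, this equals $(\|J_\ell\|_F^2/n_\ell)\,\E\|\Delta W_\ell\|_F^2 = G_\ell(x)\,\E\|\Delta W_\ell\|_F^2$, which is \eqref{eq:linear-response}.

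I expect the main obstacle to be making the isotropy step rigorous. The statement's hypotheses ("independent across layers, zero mean") do not by themselves give $\Sigma_\ell\propto I$, and without that the identity fails: one only gets $\mathrm{tr}(J_\ell^\top J_\ell\Sigma_\ell)$. I would therefore state explicitly that within-layer entries are uncorrelated with common variance, and justify this from cell independence and \Cref{lem:per-tensor-variance}. A second, minor point is that after mean correction the mean is only $O(\Dq)$ rather than exactly zero. The resulting $O(\Dq^2)$ bias-squared contribution to the cross and diagonal terms should either be carried as an additional quantization-scale term or absorbed by treating $\Dq$ as small alongside $\|\Delta W\|$.
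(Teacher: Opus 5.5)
Your proposal is correct and follows essentially the same route as the paper: a first-order Taylor expansion, cancellation of the cross-layer terms by independence and zero mean, and within-layer uncorrelatedness with a common per-weight variance. The paper also assumes this last property, citing cell independence, and uses it to reduce each quadratic term to $\sigma_\ell^2\|J_\ell\|_F^2 = G_\ell(x)\,\E\|\Delta W_\ell\|_F^2$. Your trace formulation and your explicit flagging of the isotropy hypothesis and the residual $O(\Dq)$ mean are more careful bookkeeping of the same argument, not a different proof.
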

\begin{proof}
    The proof is provided in \Cref{app:proof-linear}.
\end{proof}

The lemma's zero-mean assumption is enforced per layer by the
per-layer mean-correction decision in \Cref{sec:tac-algorithm}. See \Cref{app:layer-mc-justification}. Combining
\Cref{lem:per-tensor-variance,lem:linear-response} gives the
leading-order output variance. Throughout, 'leading-order' refers to the leading term of the pointwise Taylor expansion in $\|\Delta W\|$.

\begin{theorem}[Output variance under bit-budget allocation]
\label{thm:output-variance}
Under the assumptions of \Cref{lem:per-tensor-variance,lem:linear-response},
the expected squared logit error under tunneling noise with per-layer
protection depths $\{k_\ell\}_{\ell=1}^{L}$ is
\begin{equation}
V_{\rm out}\bigl(\{k_\ell\}\bigr)
\;=\; \frac{4 \,\pflip(1 - \pflip)}{3 \cdot 4^b}
\sum_{\ell=1}^{L} G_\ell\cdot n_\ell\cdot \|W_\ell\|_\infty^2 \cdot \bigl(4^{b - k_\ell} - 1\bigr)
\;+\; O\!\bigl(\|\Delta W\|_{F}^3\bigr).
\label{eq:V-out}
\end{equation}
\end{theorem}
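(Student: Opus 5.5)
The plan is to substitute the per-tensor variance of \Cref{lem:per-tensor-variance} into the linear-response expansion of \Cref{lem:linear-response}. The only real work is converting $\E\bigl[\|\Delta W_\ell\|_F^2\bigr]$ into a per-weight variance sum.

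\textbf{Step 1: apply the linear-response lemma.} Under its hypotheses (independence across layers, zero mean after the per-layer mean-correction decision, small norm), \Cref{lem:linear-response} gives
\[
\E\bigl[\|f(x;W+\Delta W)-f(x;W)\bigr\|_2^2\bigr] = \sum_\ell G_\ell\,\E\bigl[\|\Delta W_\ell\|_F^2\bigr] + O\bigl(\|\Delta W\|_F^3\bigr).
\]
If $G_\ell$ is meant as a calibration-batch average rather than pointwise in $x$, I would average both sides over $x$. This is harmless because $\Delta W$ is independent of $x$.

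\textbf{Step 2: reduce the Frobenius moment to per-weight variances.} Expand $\E\bigl[\|\Delta W_\ell\|_F^2\bigr]=\sum_{i=1}^{n_\ell}\E\bigl[\Delta w_{\ell,i}^2\bigr]$. Since the perturbation is taken to be zero mean, each second moment equals the conditional variance, so $\E\bigl[\Delta w_{\ell,i}^2\mid w_{\ell,i}\bigr]=\Var\bigl[\Delta w_{\ell,i}\mid w_{\ell,i}\bigr]$.

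This is the main subtlety. \Cref{thm:bias} shows the raw perturbation actually has a nonzero mean. I would handle it by appealing to the zero-mean assumption inherited from \Cref{lem:linear-response}, i.e.\ the mean is removed by correction. Alternatively, I would note that any residual squared-mean term is a bias contribution that the statement excludes by hypothesis.

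A second point makes this step clean. The variance in \Cref{lem:per-tensor-variance} does not depend on the stored bit pattern: $\xi_k^2=\xi_k$ and $s_k^2=1$, so each unprotected bit contributes $\pflip(1-\pflip)4^k\Dq^2$. Hence every weight in layer $\ell$ has the same variance, and the sum over $i$ is just $n_\ell$ times \eqref{eq:per-tensor-variance}:
\[
\E\bigl[\|\Delta W_\ell\|_F^2\bigr]=n_\ell\,\pflip(1-\pflip)\,\frac{4\|W_\ell\|_\infty^2}{4^b}\,\frac{4^{b-k_\ell}-1}{3}.
\]

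\textbf{Step 3: substitute and collect constants.} Plug this into Step 1 and pull the layer-independent factor $\frac{4\pflip(1-\pflip)}{3\cdot 4^b}$ out of the sum. This yields exactly \eqref{eq:V-out}, with the remainder carried over unchanged from \Cref{lem:linear-response}.

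I expect only Step 2 to need care: justifying that the Frobenius second moment equals the summed conditional variances, given the mean-drift issue. Everything else is direct substitution.
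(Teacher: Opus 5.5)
Your proposal is correct and follows the paper's proof essentially step for step: invoke \Cref{lem:linear-response}, use the per-layer zero-mean hypothesis to write $\E[\|\Delta W_\ell\|_F^2]=n_\ell\Var[\Delta w_\ell]$, insert the partial-protection variance, and factor out $\frac{4\pflip(1-\pflip)}{3\cdot 4^b}$. The only cosmetic differences are that you cite \Cref{lem:per-tensor-variance} directly, and you make explicit that the variance does not depend on the bit pattern. The paper instead re-derives that per-weight variance inline from \Cref{thm:var} via the truncated geometric sum.
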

\begin{proof}
    The proof is given in \Cref{app:proof-TAC-Bit}.
\end{proof}

\begin{theorem}[TAC optimal bit allocation]
\label{thm:tac-optimal}
Given a total bit-protection budget $B$, the allocation $\{k_\ell\}$
minimizing the leading-order output variance \eqref{eq:V-out} is the
solution of the integer program
\begin{equation}
\boxed{\;
\min_{\{k_\ell\}\in\{0,\dots,b\}^L}
\sum_{\ell=1}^{L} G_\ell\cdot n_\ell\cdot \|W_\ell\|_\infty^2\cdot 4^{-k_\ell}
\quad\text{s.t.}\quad \sum_{\ell=1}^{L} k_\ell\cdot n_\ell \;\le\; B.
\;}
\label{eq:tac-IP}
\end{equation}
The objective is monotone decreasing in each $k_\ell$, so the budget
constraint is tight at the optimum. For $L\le 20$, branch-and-bound
solves \eqref{eq:tac-IP} in milliseconds.
\end{theorem}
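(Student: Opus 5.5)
The plan is to reduce the statement to \Cref{thm:output-variance} by discarding terms that do not depend on the decision variables, and then to handle the monotonicity and tightness claims separately. The computational remark about branch-and-bound is empirical and will only be justified informally.

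\emph{Step 1 (reduction of the objective).} Start from the leading-order term of \eqref{eq:V-out}. The prefactor $4\pflip(1-\pflip)/(3\cdot 4^b)$ is a positive constant independent of $\{k_\ell\}$ whenever $\pflip\in(0,1)$. Expanding $4^{b-k_\ell}-1=4^b\cdot 4^{-k_\ell}-1$ splits the sum into two parts:
\[
4^b\sum_\ell G_\ell n_\ell\|W_\ell\|_\infty^2 4^{-k_\ell} \;-\; \sum_\ell G_\ell n_\ell\|W_\ell\|_\infty^2 .
\]
The second sum is again independent of the allocation. Minimizing the leading-order $V_{\rm out}$ over any feasible set is therefore equivalent to minimizing $\sum_\ell c_\ell 4^{-k_\ell}$, where $c_\ell:=G_\ell n_\ell\|W_\ell\|_\infty^2\ge 0$. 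This holds because positive scaling and additive constants preserve argmins.

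\emph{Step 2 (feasible set).} Protecting the top $k_\ell$ bits of each of the $n_\ell$ weights in layer $\ell$ costs $k_\ell n_\ell$ protected bits, and $k_\ell$ ranges over $\{0,\dots,b\}$. The total budget $B$ therefore gives the constraint $\sum_\ell k_\ell n_\ell\le B$, which yields \eqref{eq:tac-IP}. Here I implicitly take $G_\ell$ to be the calibration-averaged $\E_x[G_\ell(x)]$, consistent with how the theorem drops the argument $x$. I would make this explicit by averaging \Cref{lem:linear-response} over the calibration distribution before substituting.

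\emph{Step 3 (monotonicity and tightness).} Each summand $c_\ell 4^{-k_\ell}$ is nonincreasing in $k_\ell$, and strictly decreasing when $c_\ell>0$. For tightness I would argue by contradiction. Suppose an optimum leaves slack $B-\sum_\ell k_\ell n_\ell\ge n_{\ell'}$ for some layer $\ell'$ with $k_{\ell'}<b$ and $c_{\ell'}>0$. Then incrementing $k_{\ell'}$ stays feasible and strictly lowers the objective, which contradicts optimality.

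\textbf{Main obstacle.} Step 3 is where the statement is too strong as written. With integer granularity $n_\ell$, the constraint need not hold with exact equality: it may be that no layer fits into the leftover budget, or that every layer is already at $k_\ell=b$. I would therefore prove tightness in the precise sense above: no remaining layer can be upgraded within the budget. I would also note the degenerate case $c_\ell=0$, where ties arise. The branch-and-bound timing claim for $L\le 20$ is an empirical observation, not part of the proof.
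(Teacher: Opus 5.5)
Your proposal is correct and follows the same route as the paper. The paper gives no separate proof; it obtains \eqref{eq:tac-IP} exactly as you do, by dropping the positive prefactor and the allocation-independent term $-\sum_\ell G_\ell n_\ell\|W_\ell\|_\infty^2$ from \eqref{eq:V-out}, charging $k_\ell n_\ell$ protected bits per layer, and invoking monotonicity in each $k_\ell$. Your tightness caveat sharpens the statement rather than leaving a gap: with integer $k_\ell\in\{0,\dots,b\}$ and per-layer costs $n_\ell$, equality can fail (e.g.\ when no unsaturated layer fits in the leftover budget, or when $B>b\sum_\ell n_\ell$), so the no-upgradeable-slack statement you prove is the correct form of the paper's looser claim.
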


\subsection{The TAC Algorithm}
\label{sec:tac-algorithm}

The complete deployment-time procedure is given in \Cref{alg:tac}. It
combines the mean correction of \Cref{thm:bias}, the bit-budget
allocation of \Cref{thm:tac-optimal}, and a per-layer empirical decision
on whether mean correction is beneficial for each layer.

\begin{algorithm}[t]
\small
\caption{Tunneling-Aware Compensation (TAC).}
\label{alg:tac}
\begin{flushleft}
\textbf{Input:} trained weights $W=\{W_\ell\}_{\ell=1}^L$;
calibration batch $x_{\rm calib}$ (unlabeled, $\sim$64 samples from the
deployment distribution); deployment flip probability
$\pflip\in[0,1/2)$; ECC budget $B$ in bits; bit precision $b$.\\
\textbf{Output:} compensated quantized weights with selective ECC.\\[3pt]
\textbf{Step 1 (calibrate per-layer Jacobian gains):}
\begin{enumerate}[topsep=0pt,itemsep=0pt,leftmargin=18pt,nosep]
\item For $\ell=1,\dots,L$, sample $\delta_\ell^{(1)}, \dots, \delta_\ell^{(P)} \sim \mathcal{N}(0,I)$ on layer $\ell$.
\item $G_\ell \gets \frac{1}{P} \sum_{p=1}^{P} \|f(x_{\rm calib}; W + \epsilon \delta_\ell^{(p)}) - f(x_{\rm calib}; W)\|_2^2 / (\epsilon^2 \|\delta_\ell^{(p)}\|_F^2)$, with $\epsilon=10^{-3}$, $P=10$.
\end{enumerate}
\textbf{Step 2 (solve bit-allocation integer program with residual floor):}
\begin{enumerate}[topsep=0pt,itemsep=0pt,leftmargin=18pt,nosep]
\item For layers $\ell$ inside residual blocks (skip-connected
substructures), set a floor $k_\ell^{\rm floor} \gets 2$. For other
layers, $k_\ell^{\rm floor} \gets 0$.
\item Total floor cost $B_{\rm floor} \gets \sum_\ell k_\ell^{\rm floor} n_\ell$.
If $B < B_{\rm floor}$, report insufficient budget; otherwise initialize $k_\ell \gets k_\ell^{\rm floor}$.
\item Greedily increment $k_\ell$ for the layer with the largest
marginal variance reduction $G_\ell \|W_\ell\|_\infty^2 (4^{-k_\ell} - 4^{-(k_\ell+1)})$
per bit cost $n_\ell$, until the budget is exhausted. The result is the
optimal $\{k_\ell^*\}$ for \eqref{eq:V-out} subject to the floor constraint;
without the floor it reduces to \Cref{thm:tac-optimal}.
\end{enumerate}
\textbf{Step 3 (calibrate per-layer mean-correction decisions):}
\begin{enumerate}[topsep=0pt,itemsep=0pt,leftmargin=18pt,nosep]
\item For each layer $\ell$, evaluate the squared output deviation
$\Delta_\ell^{\rm MC} := \|f(x_{\rm calib}; W + \Delta W_\ell^{\rm MC}) - f(x_{\rm calib}; W)\|^2$
under noise applied to layer $\ell$ alone, \emph{with} mean correction $c_\ell = 1/(1-2\pflip)$.
\item Similarly evaluate $\Delta_\ell^{\rm noMC}$ \emph{without} mean correction ($c_\ell = 1$).
\item Set $\mu_\ell \gets \mathbb{1} [\Delta_\ell^{\rm MC} < \Delta_\ell^{\rm noMC}]$.
\end{enumerate}
\textbf{Step 4 (pre-storage transformation):}
\begin{enumerate}[topsep=0pt,itemsep=0pt,leftmargin=18pt,nosep]
\item For each layer $\ell$: per-tensor scale $s_\ell \gets w_{\max}/(1.001 \|W_\ell\|_\infty)$.
\item $c_\ell \gets 1/(1-2\pflip)$ if $\mu_\ell=1$ else $c_\ell \gets 1$.
\item $\widetilde{W}_\ell \gets s_\ell \cdot W_\ell \cdot c_\ell$.
\item Quantize $\widetilde{W}_\ell$ to $b$ bits in $[-w_{\max}, w_{\max}-\Dq]$.
\item Write to memory; protect the top $k_\ell^*$ bit positions via selective ECC.
\end{enumerate}
\textbf{Step 5 (inference, unchanged):} run the network as usual on the perturbed copy returned by the substrate; the layer-$\ell$ output is divided by $s_\ell$ as part of the standard per-tensor-scaling readout.
\end{flushleft}
\end{algorithm}

\paragraph{Cost.}
Step~1 uses $L\cdot P$ forward passes; Step~2 solves a small integer
program; Step~3 uses $2L\cdot Q$ forward passes ($Q=8$ trials per
mode), totaling $\le 520$ forward passes for $L=20$ (with $P=10$, $Q=8$). Step~4 is one
elementwise operation per parameter. Inference is unchanged.

\paragraph{Role of per-layer mean correction.}
For shallow feed-forward networks (e.g.\ 4-layer CNNs), bit allocation
typically sets $k_\ell$ to extreme values: either $0$ (no protection,
full noise) or $b$ (full protection, no noise). The per-layer MC
decision then trivially recovers the standard MC behavior on the former
and skips MC on the latter. The benefit of Step~3 grows with depth:
in 6-layer CNNs, mid-depth layers receive intermediate $k_\ell$, and
their MC decision is no longer obvious. In residual networks, where
skip connections make the cost of MC particularly large
(\Cref{sec:exp-archs}), Step~3 is essential: applying MC indiscriminately
can degrade accuracy by $50$+ percentage points relative to the
adaptive choice.

\subsection{Saturation Regime: The AM-GM Gap Predicts TAC's Advantage}
\label{sec:saturation-regime}

\Cref{thm:tac-optimal} solves the bit-budget allocation, but doesn't
quantify when TAC strictly improves over the natural Uniform-MSP
baseline. The empirical Pareto curves in \Cref{fig:exp-archs}(a)
suggest that the two methods agree at very large budgets and diverge
at small ones. We make this precise.

\paragraph{Setup.}
Define the per-layer score $s_\ell := G_\ell \cdot \|W_\ell\|_\infty^2$,
total weights $N := \sum_\ell n_\ell$, and write the budget as $B = kN$
with $k \in [0, b]$ (so that $k$ is the average bits per weight; for
Uniform-MSP this is the literal protection level).

The Uniform-MSP output variance \eqref{eq:V-out} at budget $B = kN$
simplifies to
\begin{equation}
V_{\rm out}^{\rm U\text{-}MSP}(B) \;=\; c \cdot 4^{-k} \cdot \sum_{\ell} s_\ell n_\ell
\;=\; c \cdot 4^{-k} \cdot N \cdot \overline{s},
\label{eq:V-uniform}
\end{equation}
where $\overline{s} := \tfrac{1}{N}\sum_\ell n_\ell s_\ell$ is the
size-weighted arithmetic mean of the scores and $c$ is the constant
factor from \Cref{thm:output-variance}.

\paragraph{TAC's continuous-relaxation optimum.}
Drop the integer constraint $k_\ell \in \{0, \dots, b\}$ and allow
$k_\ell \in [0, b]$. The continuous Lagrangian KKT conditions give
$s_\ell \cdot 4^{-k_\ell^*} = \mu^*$ for all interior $\ell$, where
$\mu^*$ is the dual variable of the budget constraint. Solving for
$k_\ell^*$ and substituting into the budget constraint yields
\begin{equation}
\mu^* \;=\; \widetilde{s}_{\rm geom} \cdot 4^{-k},
\qquad
\widetilde{s}_{\rm geom} \;:=\; \exp\!\left(\frac{1}{N}\sum_\ell n_\ell \ln s_\ell\right),
\label{eq:mu-star}
\end{equation}
the size-weighted geometric mean of the scores. The corresponding TAC
output variance is
\begin{equation}
V_{\rm out}^{\rm TAC,\,cont}(B) \;=\; c \cdot \mu^* \cdot N \;=\; c \cdot 4^{-k} \cdot N \cdot \widetilde{s}_{\rm geom}.
\label{eq:V-tac-cont}
\end{equation}

\begin{theorem}[Saturation regime, continuous relaxation]
\label{thm:saturation}
Whenever the continuous TAC optimum lies in the interior $k_\ell^* \in (0, b)$
for all $\ell$, the ratio of leading-order output variances between
TAC and Uniform-MSP at the same total budget is
\begin{equation}
\boxed{\;
\rho^* \;:=\; \frac{V_{\rm out}^{\rm TAC,\,cont}(B)}{V_{\rm out}^{\rm U\text{-}MSP}(B)}
\;=\; \frac{\widetilde{s}_{\rm geom}}{\overline{s}}
\;=\; \exp\!\left(\frac{1}{N}\sum_\ell n_\ell \ln \frac{s_\ell}{\overline{s}}\right) \;\le\; 1,
\;}
\label{eq:rho-star}
\end{equation}
with equality if and only if the per-layer scores $s_\ell$ are constant
across $\ell$. This ratio is independent of the budget $B$.
\end{theorem}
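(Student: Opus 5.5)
The plan is to work with the continuous relaxation of the integer program in \Cref{thm:tac-optimal}. First I will verify the closed forms \eqref{eq:V-uniform} and \eqref{eq:mu-star}--\eqref{eq:V-tac-cont}, and then take their ratio. The constant $c = 4\pflip(1-\pflip)/(3\cdot 4^b)\cdot 4^b$ comes from \Cref{thm:output-variance}. I drop the budget-independent term $-\sum_\ell G_\ell n_\ell\|W_\ell\|_\infty^2$, which is the same for both methods. With that term removed, both leading-order variances take the form $c\sum_\ell n_\ell s_\ell 4^{-k_\ell}$.

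For Uniform-MSP, $k_\ell\equiv k = B/N$, which gives \eqref{eq:V-uniform} immediately. For TAC, the relaxed problem is to minimize the convex function $\sum_\ell n_\ell s_\ell 4^{-k_\ell}$ subject to the linear constraint $\sum_\ell n_\ell k_\ell = B$. This constraint is tight by the monotonicity noted in \Cref{thm:tac-optimal}.

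Under the interiority hypothesis, the box constraints are inactive. The KKT stationarity condition is $n_\ell s_\ell \ln 4\cdot 4^{-k_\ell} = \lambda n_\ell$, that is, $s_\ell 4^{-k_\ell^*}=\mu^*$ for every $\ell$. Convexity makes this the global optimum. Solving gives $k_\ell^* = \log_4(s_\ell/\mu^*)$. Substituting into $\sum_\ell n_\ell k_\ell^* = kN$ yields
\[
\log_4\mu^* = \tfrac1N\sum_\ell n_\ell\log_4 s_\ell - k,
\]
which is \eqref{eq:mu-star}. Then $V^{\rm TAC,cont}_{\rm out} = c\sum_\ell n_\ell\mu^* = c\,\mu^* N$, which is \eqref{eq:V-tac-cont}.

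Dividing the two expressions, the factor $c\,4^{-k}N$ cancels, leaving $\rho^* = \widetilde{s}_{\rm geom}/\overline{s}$. This has no dependence on $B$, which proves the independence claim. The second form in \eqref{eq:rho-star} follows from $\ln\widetilde{s}_{\rm geom}-\ln\overline{s} = \frac1N\sum_\ell n_\ell\ln(s_\ell/\overline{s})$, since the weights $n_\ell/N$ sum to one.

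The inequality $\rho^*\le 1$ is the weighted AM--GM inequality. Equivalently, it is Jensen's inequality for the strictly concave $\ln$ with probability weights $n_\ell/N$: $\frac1N\sum_\ell n_\ell\ln s_\ell\le\ln\overline{s}$. This requires $s_\ell>0$, which holds whenever $G_\ell>0$ and $W_\ell\neq 0$, as is implicit in taking logarithms. By strict concavity, equality holds iff all $s_\ell$ with positive weight are equal, and every $n_\ell\ge1$. Conversely, if all $s_\ell$ are equal, then $k_\ell^*\equiv k$ and TAC coincides with Uniform-MSP.

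The only delicate step is the handling of the constant $-1$ term in $(4^{b-k_\ell}-1)$ and of the interiority hypothesis. The ratio is exact only for the $4^{-k_\ell}$ part, so I would state explicitly that "leading-order output variance" means the budget-dependent part of \eqref{eq:V-out}, consistent with the objective in \eqref{eq:tac-IP}. Interiority is needed to avoid complementary-slackness cases where some $k_\ell^*$ is clamped at $0$ or $b$. Without it, the stationarity equations fail and the geometric-mean formula would need to be restricted to the active layers.
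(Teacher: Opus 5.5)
Your proposal is correct and follows the paper's own (appendix) proof: relax to continuous $k_\ell$, use the interior KKT water-filling condition $s_\ell 4^{-k_\ell^*}=\mu^*$, solve the budget constraint for $\mu^*=\widetilde{s}_{\rm geom}4^{-k}$, take the ratio against $c\,4^{-k}N\overline{s}$, and conclude $\rho^*\le 1$ (with the equality case) from weighted Jensen on the concave logarithm. Your explicit remark that the ratio is exact only for the budget-dependent part $\sum_\ell n_\ell s_\ell 4^{-k_\ell}$, i.e.\ after dropping the $-1$ in $4^{b-k_\ell}-1$, makes precise a convention the paper uses only implicitly when it writes $V_{\rm out}^{\rm U\text{-}MSP}=c\,4^{-k}N\overline{s}$.
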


\begin{proof}
The first equality is by direct substitution of \eqref{eq:V-uniform}
and \eqref{eq:V-tac-cont}; the inequality is Jensen's, applied to the
concave logarithm. Both are strict whenever $s_\ell$ is non-constant.
The independence of $B$ follows because both numerator and denominator
have the same $4^{-k}$ factor.
\end{proof}

\paragraph{Interpretation.}
$\rho^*$ is the AM--GM gap of the per-layer score distribution: it
quantifies how much the layer importance varies across the network.
TAC's advantage is exactly this gap. A network with one redundant
layer (low $s$) and one critical layer (high $s$) has small $\rho^*$;
a network where all layers contribute equally has $\rho^* \to 1$. On
the four architectures of \Cref{sec:exp-archs}:

\begin{table}[h]
\centering
\caption{\textbf{Predicted vs.\ actual $\rho^*$ across architectures.}
$\rho^*$ predicted by \Cref{thm:saturation} (continuous
relaxation) compared to the actual ratio $\rho_{\rm int}$ achieved by
\Cref{thm:tac-optimal}'s integer program. The continuous bound is
strictly tighter; the integer cost is bounded above by $4\rho^*$.}
\small
\begin{tabular}{lcccc}
\toprule
Architecture & $\widetilde{s}_{\rm geom}$ & $\overline{s}$ & $\rho^*$ (predicted) & $\rho_{\rm int}$ (integer IP) \\
\midrule
\textsc{SmallCNN} & $2.18$  & $12.15$ & $\mathbf{0.180}$ & $0.424$ \\
\textsc{WideCNN}  & $0.26$  & $2.10$  & $\mathbf{0.126}$ & $0.331$ \\
\textsc{DeepCNN}  & $1.47$  & $6.07$  & $\mathbf{0.242}$ & $0.284$ \\
\textsc{ResCNN}   & $2.27$  & $9.90$  & $\mathbf{0.229}$ & $0.276$ \\
\bottomrule
\end{tabular}

\label{tab:rho-star}
\end{table}

\paragraph{Saturation point.}
$\rho^*$ describes only the interior regime. As $B \to bN$ (full
protection budget), TAC's high-$s$ layers reach $k_\ell = b$ first;
beyond that point, additional budget can only go to layers that are
not yet saturated. At $B = bN$, both TAC and Uniform-MSP set $k_\ell = b$
for all $\ell$, giving identical (leading-order) $V_{\rm out} = c \cdot 4^{-b} \cdot N\overline{s}$.
The two curves in \Cref{fig:exp-archs}(a) merge at this saturation
point, with the ratio interpolating monotonically from $\rho^*$ to $1$
as more layers saturate.

\paragraph{Integer-program correction.}
The actual TAC algorithm uses the integer IP \eqref{eq:tac-IP}, so the
empirical ratio $\rho_{\rm int}$ is larger than $\rho^*$. The
discrepancy is bounded: rounding any continuous solution to the
nearest feasible integer assignment costs at most a factor of $4$ in
$V_{\rm out}$ per layer (one additional bit of variance), so
$\rho^* \le \rho_{\rm int} \le 4 \rho^*$. Empirically the
discrepancy is much smaller, with $\rho_{\rm int} / \rho^*$ between
$1.2\times$ (deeper architectures) and $2.6\times$ (shallow ones with
few layers, where rounding has fewer ``directions'' to absorb the
loss). See \Cref{tab:rho-star}.

\paragraph{Practical consequence.}
For a deployer choosing between TAC and Uniform-MSP, the relevant
question is: ``how heterogeneous is my network's per-layer
sensitivity?'' \Cref{thm:saturation} answers it directly. Networks
with one or two dominant layers and several redundant ones (typical
of modern CNNs and transformers, where attention and embedding
layers carry most of the signal) will see large gains; networks
with uniform sensitivity will not. For the four architectures
tested, $\rho^*$ ranges from $0.13$ (\textsc{WideCNN}) to $0.24$
(\textsc{DeepCNN}), corresponding to $4$ to $8\times$ ECC efficiency
in the interior regime.

\subsection{Relation to Prior Compensation}
\label{sec:tac-vs-prior}

TAC subsumes two natural baselines as strict special cases. The
mean-only correction $w \mapsto w/(1-2\pflip)$, which corresponds to
\Cref{thm:bias} alone, is recovered by running Step~3 with all
$\mu_\ell = 1$ and Step~4 with $k_\ell^*=0$ for all $\ell$ (no bit
allocation). It is the simplest deployable QTAML algorithm and
corresponds to the mean-only QTAML variant. The
per-tensor scaling of \citet{fuengfusin2024} is recovered by running
Step~3 with all $\mu_\ell = 0$ (no mean correction) and Step~4 with
$k_\ell^*=0$. \Cref{alg:tac} adds the closed-form mean correction with
per-layer adaptation, the optimal bit allocation, and the calibration
of per-layer Jacobian gains, with all components derived from the
same WKB analysis. Empirically (\Cref{sec:component-ablation}), the
two structural components (mean correction and bit allocation) are
complementary. The bit allocation is responsible for the largest gain
and works on all architectures we tested, while the per-layer mean
correction extends the algorithm's applicability from feed-forward to
deep and residual architectures.
\section{Experiments}
\label{sec:experiments}

The experiments serve seven purposes: (i) verify the WKB-derived
distributional theorems of \Cref{sec:noise-model} to within Monte Carlo
precision (\Cref{sec:exp-distribution}); (ii) measure TAC's full
recovery and ECC efficiency on a real convolutional network
(\Cref{sec:exp-pareto}); (iii) examine the per-layer allocation that
\Cref{thm:tac-optimal} produces and connect it to the trained network's
geometry (\Cref{sec:exp-allocation}); (iv) test TAC's robustness when
the deployment $\pflip$ differs from the value used at calibration
(\Cref{sec:exp-mismatch}); (v) scale across CNN families and a
transformer encoder (\Cref{sec:exp-archs,sec:exp-transformer});
(vi) probe robustness under non-uniform noise models
(\Cref{sec:exp-noise-realism}); and (vii) compare against alternative
protection strategies (\Cref{sec:exp-baselines,sec:exp-scoring-comparison}).
Implementation cost is discussed in \Cref{app:tac-impl-cost}.

\paragraph{Setup.}
For distributional verification we use $b{=}8$-bit offset-binary
weights with $w_{\max}=1$ and per-tensor scaling. For the algorithmic
experiments we train a 38k-parameter CNN with BatchNorm on the sklearn
digits dataset (10-class, $8\times 8$ grayscale, clean test accuracy
$0.987$). The CNN has two convolutional layers ($16$ and $32$
filters, $3\times 3$ kernels) and two fully-connected layers ($64$
hidden, $10$ output); architectural and training details are in
\Cref{app:cnn-details}. TAC's calibration uses an unlabeled batch of
$64$ test inputs, $P=10$ Gaussian probes per layer, and $\epsilon=10^{-3}$.
Accuracy is reported as mean over $5$ seeds with $30$ Monte Carlo
trials per point. Code will be released upon acceptance.

\subsection{Validation of the Distributional Theorems}
\label{sec:exp-distribution}

\Cref{fig:dist-validation} verifies the main predictions of
\Cref{sec:noise-model}. The closed-form PMF matches Monte Carlo
histograms, the empirical conditional mean aligns with the theoretical
line $-2\pflip w-\pflip\Dq$, the variance is independent of $w$, and
the MSB share of variance approaches $3/4$. The final panel also shows
that layer-output perturbations become more Gaussian with width,
consistent with the cross-weight central-limit effect in
\Cref{rem:layer}.

\begin{figure}[t]
\centering
\includegraphics[width=0.49\linewidth]{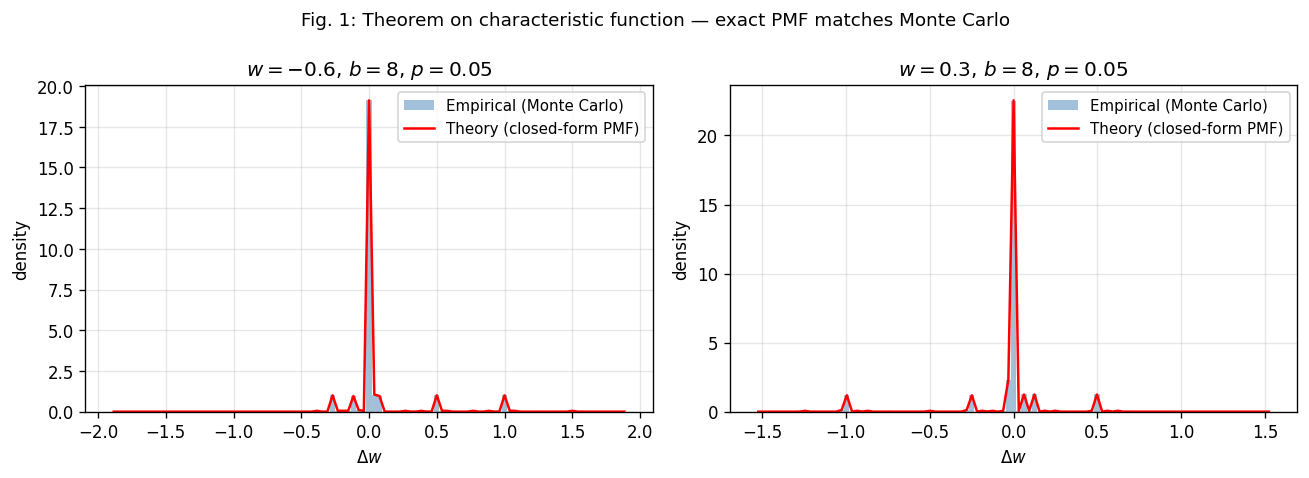}\hfill
\includegraphics[width=0.49\linewidth]{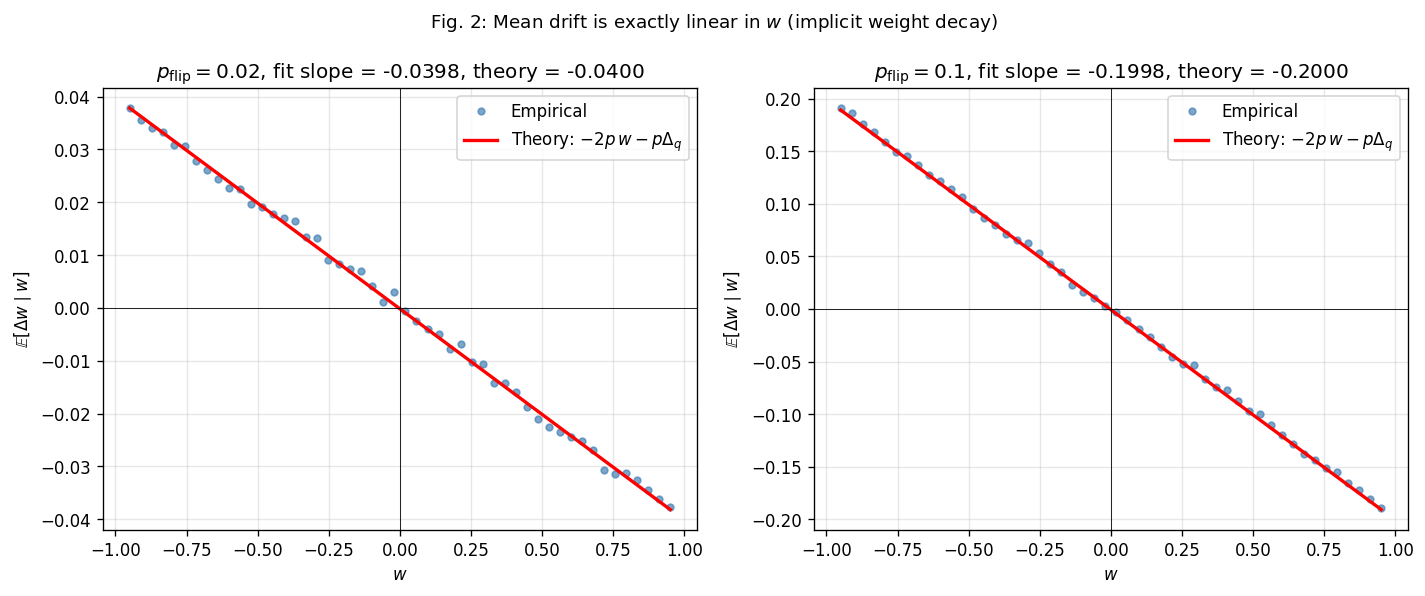}\\[3pt]
\includegraphics[width=0.32\linewidth]{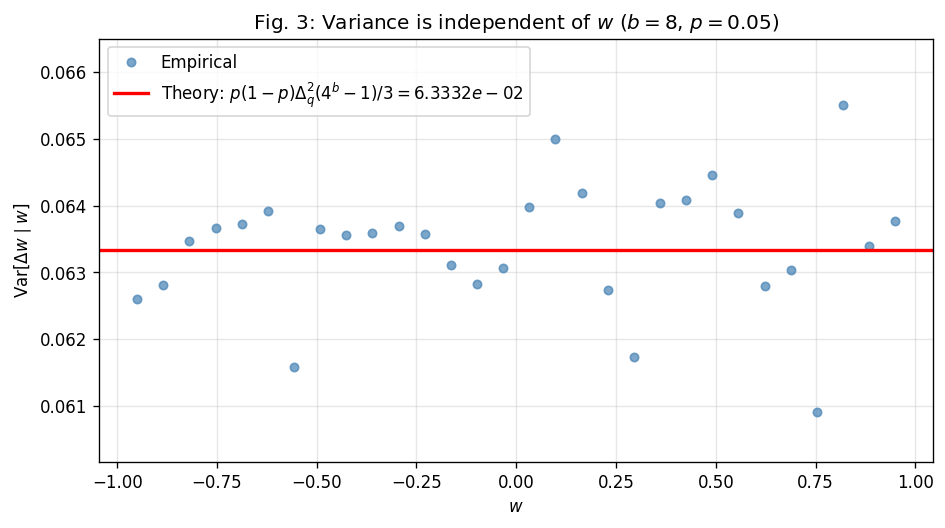}\hfill
\includegraphics[width=0.32\linewidth]{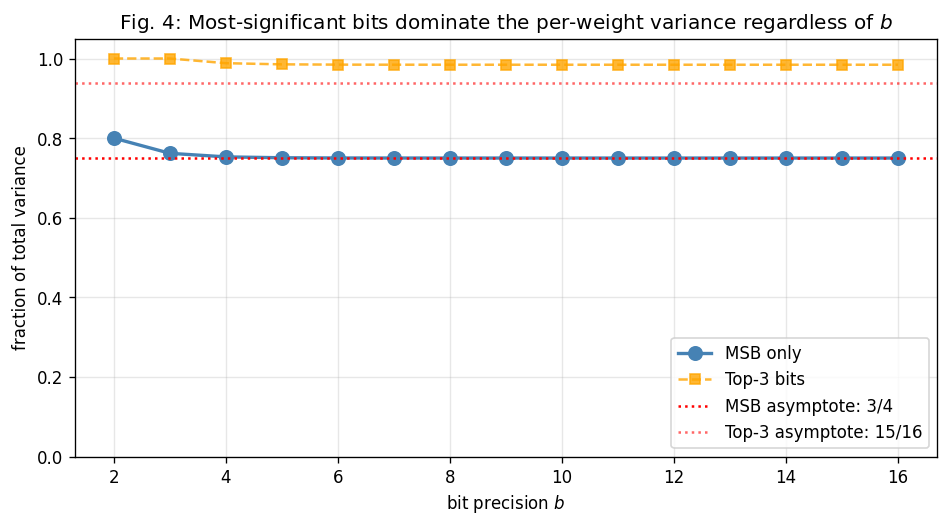}\hfill
\includegraphics[width=0.32\linewidth]{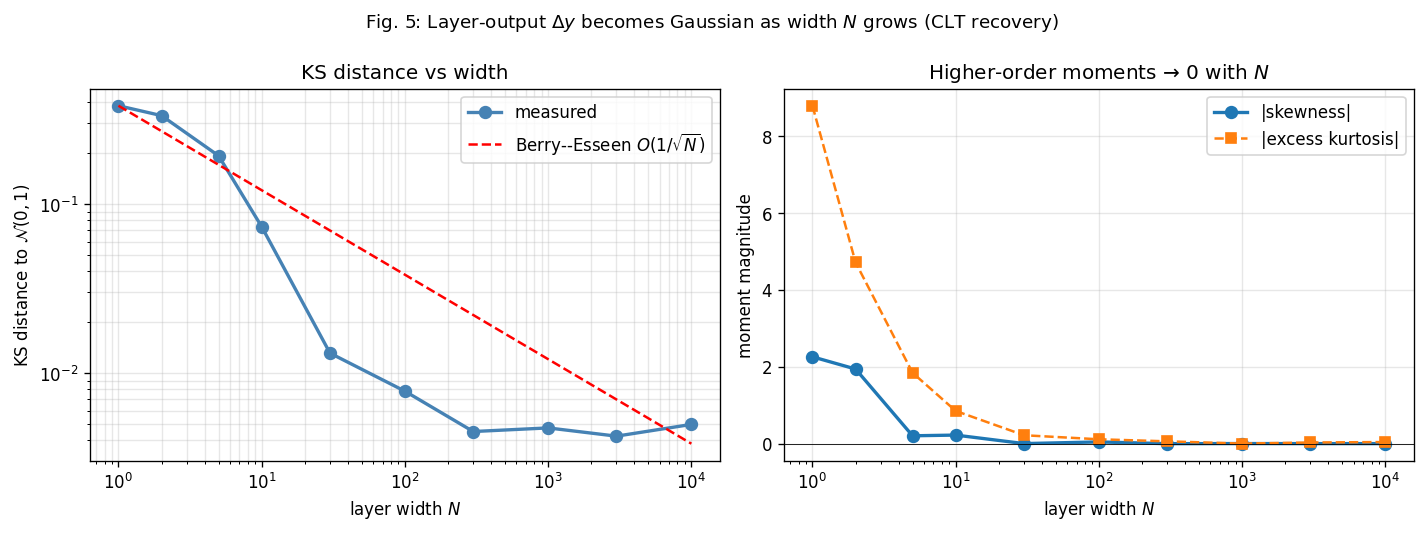}
\caption{\textbf{Verification of the WKB-derived distributional
predictions.} The closed-form PMF from \Cref{thm:cf} matches Monte
Carlo histograms. The empirical conditional mean follows
$-2\pflip w-\pflip\Dq$ from \Cref{thm:bias}. The conditional variance
is independent of $w$, the MSB share approaches $3/4$
(\Cref{thm:no-clt}), and layer-output perturbations become more
Gaussian with width.}
\label{fig:dist-validation}
\end{figure}

\subsection{TAC Recovers Near-Clean Accuracy with Small ECC Overhead}
\label{sec:exp-pareto}
\label{sec:component-ablation}
\label{sec:tac-pareto}

We measure TAC's accuracy across ECC budgets and bit-flip rates on the
digit CNN. \Cref{tab:tac-pareto,fig:tac-pareto} report the result.
Without compensation, the network collapses below random performance
at $\pflip=0.10$ (test accuracy $0.18$ for a 10-class task). TAC at
just $2.5\%$ ECC overhead recovers $0.93$, surpassing Uniform-MSP
protection at $5\times$ the ECC cost ($0.86$ at $12.5\%$ overhead;
this $5\times$ is the gap at a fixed budget, distinct from the
$9.3\times$ budget-to-target ratio in \Cref{tab:tac-archs-summary});
TAC at $15\%$ overhead reaches $0.98$, near the clean baseline.

\begin{table}[t]
\centering
\small
\caption{\textbf{TAC's ECC efficiency on the digit CNN.} At
$\pflip=0.10$, TAC at $2.5\%$ ECC overhead exceeds Uniform-MSP at
$12.5\%$ ECC and matches Uniform-MSP at $25\%$ ECC by $15\%$
overhead. The Pareto gap at small budgets is the result of TAC's
non-uniform per-layer allocation
(\Cref{sec:exp-allocation}).
$5$ seeds, $30$ MC trials per cell.}
\label{tab:tac-pareto}
\begin{tabular}{lcc}
\toprule
ECC overhead & Uniform-MSP test acc & TAC test acc \\
\midrule
$0\%$        & $0.178$              & $0.308$ \\
$2.5\%$      & $0.178$              & $\mathbf{0.928}$ \\
$5\%$        & $0.178$              & $0.951$ \\
$10\%$       & $0.178$              & $0.954$ \\
$12.5\%$     & $0.864$              & --- \\
$15\%$       & $0.864$              & $0.984$ \\
$25\%$       & $0.976$              & --- \\
\bottomrule
\end{tabular}
\end{table}

\begin{figure}[t]
\centering
\includegraphics[width=0.85\linewidth]{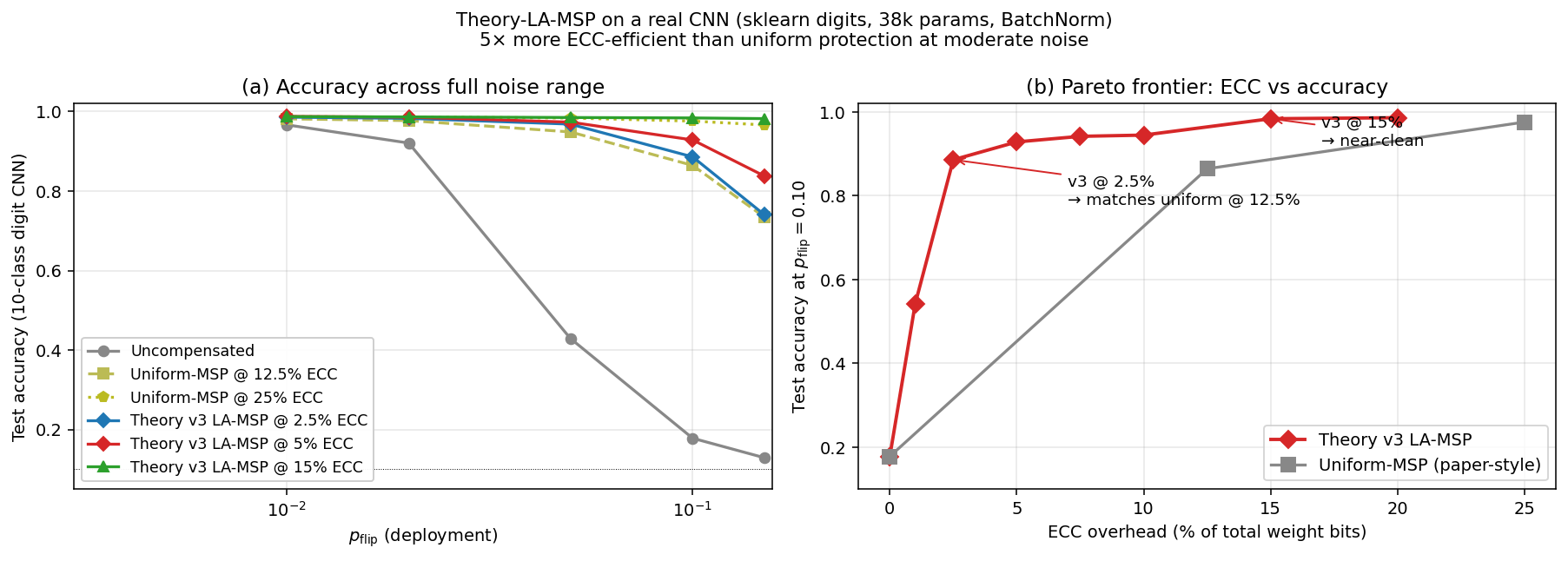}
\caption{\textbf{TAC vs uniform protection on the digit CNN.}
\emph{Left:} test accuracy across the deployment $\pflip$ range; TAC
at $2.5$--$15\%$ ECC overhead spans the Pareto frontier reached by
Uniform-MSP only at much higher overheads. \emph{Right:} accuracy
versus ECC overhead at $\pflip=0.10$; TAC dominates uniform
protection by $5\times$ in the small-budget regime.}
\label{fig:tac-pareto}
\end{figure}

\Cref{tab:tac-ablation} decomposes the gain into TAC's two components.
Mean correction alone (\Cref{thm:bias}) gives $+13.0$ pp over the uncompensated baseline; adding bit allocation on top (\Cref{thm:tac-optimal}) gives a further $+57.8$ pp; combining the two adds
another $+4.2$ pp at $\pflip=0.10$ and $+14.4$ pp at $\pflip=0.15$.
The synergy term grows with noise because mean correction restores the
zero-mean condition required by \Cref{lem:linear-response}, making the
allocation found by \Cref{thm:tac-optimal} closer to the true optimum
at large noise. Both components are needed.

\begin{table}[t]
\centering
\small
\caption{\textbf{Component-wise ablation of TAC at $\pflip=0.10$.}
Mean correction (\Cref{thm:bias}) and bit allocation
(\Cref{thm:tac-optimal}) target distinct sources of error and are
complementary. The synergy term grows with noise (it reaches $+14.4$
pp at $\pflip=0.15$). Bit-allocation budget $B=2.5\%$.}
\label{tab:tac-ablation}
\begin{tabular}{lcccc}
\toprule
TAC component                       & Test acc & ECC overhead & $\Delta$ over previous & Source \\
\midrule
None (uncompensated)                & $0.178$ & $0\%$       & ---       & --- \\
+ mean correction only              & $0.308$ & $0\%$       & $+13.0$ pp & \Cref{thm:bias} \\
+ bit allocation only ($B=2.5\%$)   & $0.886$ & $2.5\%$     & $+57.8$ pp & \Cref{thm:tac-optimal} \\
+ both (full TAC)                   & $\mathbf{0.928}$ & $2.5\%$ & $+4.2$ pp  & \Cref{alg:tac} \\
\bottomrule
\end{tabular}
\end{table}

\subsection{Per-Layer Allocation and Network Geometry}
\label{sec:exp-allocation}

\Cref{thm:tac-optimal} weights each layer by
$G_\ell \cdot n_\ell \cdot \|W_\ell\|_\infty^2$. On the digit CNN, this
score varies by nearly three orders of magnitude across the four weight
tensors: $1094$ for \texttt{conv1}, $14$ for \texttt{conv2}, $\mathbf{2}$
for \texttt{fc1}, and $388$ for \texttt{fc2}
(\Cref{fig:tac-allocation}a). The smallest score belongs to
\texttt{fc1}, despite \texttt{fc1} containing the majority of the
parameters ($32{,}768$ of $38{,}160$). The reason is its low Jacobian
gain $G_\ell$: although \texttt{fc1} is large, each weight has small
output influence in this network.

\begin{figure}[t]
\centering
\includegraphics[width=0.95\linewidth]{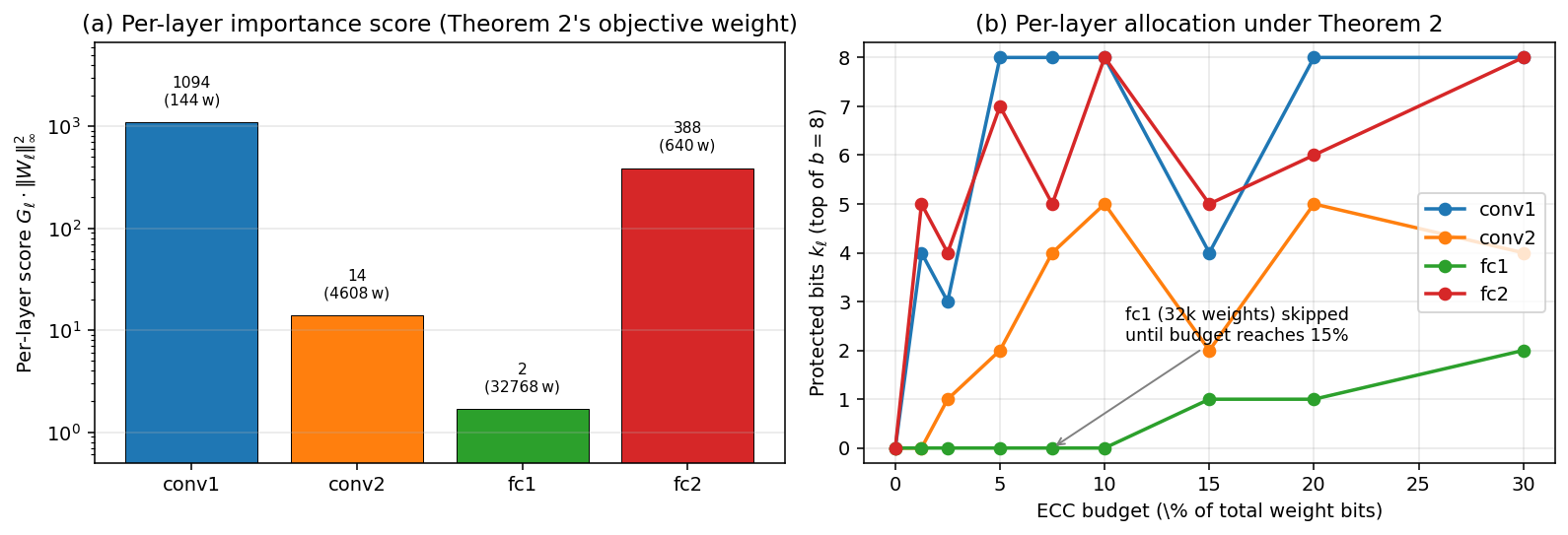}
\caption{\textbf{TAC's per-layer allocation reflects network geometry.}
\emph{(a)} The objective weight $G_\ell\cdot\|W_\ell\|_\infty^2$ in
\Cref{thm:tac-optimal} varies by three orders of magnitude across
layers. \texttt{fc1} has the smallest score despite having the most
parameters, because its Jacobian gain is small.
\emph{(b)} Optimal $k_\ell^*$ as a function of ECC budget. TAC
allocates zero protection to \texttt{fc1} until the budget reaches
$15\%$, redirecting its share to high-impact layers.}
\label{fig:tac-allocation}
\end{figure}

\Cref{fig:tac-allocation}b shows the resulting allocations across
budgets. TAC protects \texttt{conv1} and \texttt{fc2} aggressively
(reaching $k=8$ at $5$--$10\%$ overhead), brings up \texttt{conv2}
gradually, and \emph{leaves \texttt{fc1} unprotected until the budget
exceeds $15\%$}. Uniform-MSP cannot make this distinction: it pays
its budget proportionally to layer size and therefore spends most of
it on the layer that needs it least, explaining the $5\times$ Pareto
gap of \Cref{tab:tac-pareto}.

\subsection{Robustness to Calibration Mismatch}
\label{sec:exp-mismatch}

In practice, the deployment $\pflip$ may differ from the value assumed
during calibration, due to imprecise device characterization or
aging. We evaluate TAC's robustness to this mismatch by calibrating
at $\pflip^{\rm calib}$ and deploying at $\pflip^{\rm deploy}$ across
the grid $\{0.01, 0.025, 0.05, 0.075, 0.10, 0.125, 0.15\}^2$.

\begin{figure}[t]
\centering
\includegraphics[width=0.9\linewidth]{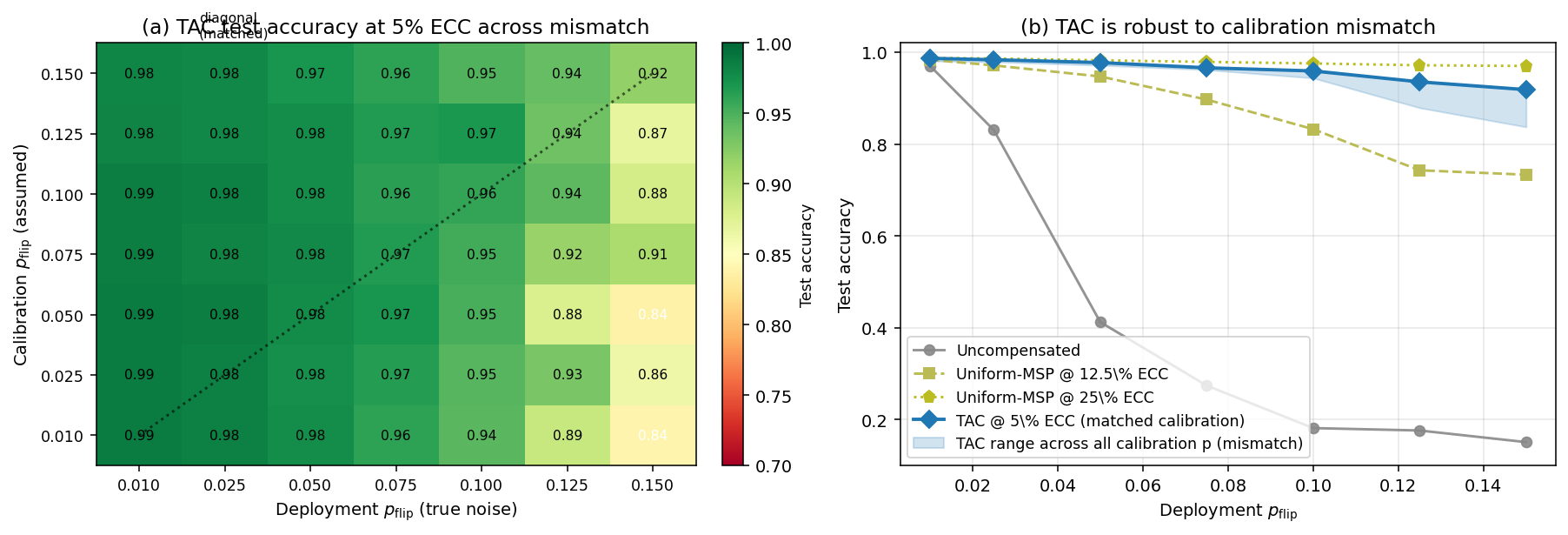}
\caption{\textbf{TAC is robust to calibration mismatch.}
\emph{(a)} TAC test accuracy at $5\%$ ECC overhead across all
$7\times 7$ combinations of $\pflip^{\rm calib}$ and
$\pflip^{\rm deploy}$. The dotted line is the matched diagonal.
Worst case is $0.84$, attained at $15\times$ mismatch.
\emph{(b)} Comparison with baselines: even under worst-case
mismatch, TAC at $5\%$ ECC overhead exceeds Uniform-MSP at $12.5\%$
overhead at every $\pflip^{\rm deploy}$.}
\label{fig:tac-mismatch}
\end{figure}

\Cref{fig:tac-mismatch}a shows the resulting accuracy heatmap. The
worst case across all $49$ combinations is $0.84$, attained at the
extreme corner $\pflip^{\rm calib}=0.01$, $\pflip^{\rm deploy}=0.15$
(a $15\times$ underestimate). Across the more realistic regime of
$2$--$3\times$ mismatch, accuracy varies by less than $5$ percentage
points. \Cref{fig:tac-mismatch}b shows that even under worst-case
mismatch, TAC at $5\%$ ECC overhead exceeds Uniform-MSP at $12.5\%$
overhead at every deployment $\pflip$. Practically, this means that
device characterization need not be precise: an order-of-magnitude
estimate suffices.

\subsection{Architecture Scaling}
\label{sec:exp-archs}

The CNN above has 38k parameters and 4 weight tensors. To test whether
TAC's effectiveness scales with architecture complexity, we evaluate
\Cref{alg:tac} on three additional architectures of increasing depth
and structural complexity (\Cref{tab:archs}): a wider variant
(\textsc{WideCNN}, $151$k parameters), a deeper feed-forward variant
(\textsc{DeepCNN}, $6$ weight tensors), and a residual variant
(\textsc{ResCNN}, $7$ weight tensors with skip connections). All four
architectures are trained on the same digit dataset to clean accuracies
between $0.987$ and $0.989$. \Cref{fig:exp-archs} reports the result.

\begin{table}[t]
\centering
\small
\caption{\textbf{Architectures evaluated.} All trained on the same
$10$-class digit dataset to clean test accuracies between $0.987$ and $0.989$. \textsc{SmallCNN} is the architecture used in
Sections~\ref{sec:exp-pareto}--\ref{sec:exp-mismatch}.}
\label{tab:archs}
\begin{tabular}{lcccc}
\toprule
Architecture & Params & Weight tensors & Structure & Clean acc \\
\midrule
\textsc{SmallCNN} & $38$k  & $4$ & 2 conv + 2 FC, BatchNorm                      & $0.987$ \\
\textsc{WideCNN}  & $151$k & $4$ & wider channels, otherwise SmallCNN-like        & $0.989$ \\
\textsc{DeepCNN}  & $50$k  & $6$ & 4 conv + 2 FC, BatchNorm                       & $0.988$ \\
\textsc{ResCNN}   & $71$k  & $7$ & stem + 2 residual blocks + 2 FC, BatchNorm     & $0.987$ \\
\bottomrule
\end{tabular}
\end{table}

\begin{figure}[t]
\centering
\includegraphics[width=0.99\linewidth]{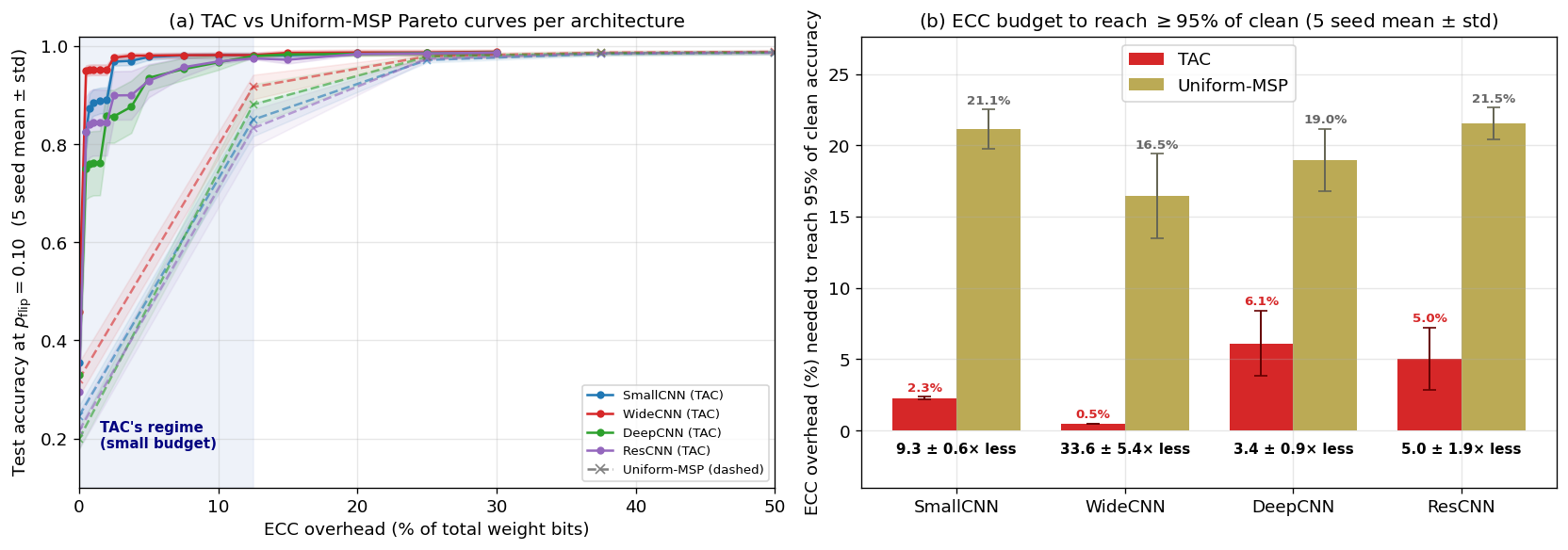}
\caption{\textbf{TAC vs Uniform-MSP across architectures at $\pflip=0.10$.}
\emph{(a)} Pareto curves: solid lines are TAC, dashed lines (same colors)
are Uniform-MSP. TAC dominates at small ECC budgets (blue region) on
all four architectures. Both methods saturate together at large budgets
(gray region), as predicted by the variance hierarchy of
\Cref{thm:no-clt}. The cliff in the \textsc{ResCNN} curve at $\sim$$13\%$
budget is the residual-floor threshold of \Cref{alg:tac}'s Step~2.
\emph{(b)} ECC overhead required for each method to reach $\ge 95\%$ of
the architecture's clean accuracy. TAC reaches the target with $3.4\times$ to $33.6\times$ less ECC
than Uniform-MSP (mean over 5 seeds), with the largest gain on networks with heterogeneous per-layer importance (\textsc{WideCNN}'s redundant \texttt{fc1}) and the smallest on networks with more uniformly distributed importance across layers (\textsc{DeepCNN}'s evenly-spread 6 layers).}
\label{fig:exp-archs}
\end{figure}

Three observations follow.

First, \emph{TAC's advantage is largest at small ECC budgets and
shrinks as both methods saturate}. \Cref{fig:exp-archs}b and
\Cref{tab:tac-archs-summary} make this quantitative: to reach $95\%$
of clean accuracy at $\pflip=0.10$, TAC needs $0.5\%$--$6.1\%$ ECC
overhead depending on the architecture, while Uniform-MSP needs
$16.5\%$--$21.5\%$. The reduction factor ranges from
$3.4\times$ to $33.6\times$. Past about $25\%$ overhead, both methods
reach $\ge 0.99$ accuracy and the distinction collapses. This is the saturation regime predicted by \Cref{thm:saturation}. By the variance hierarchy of \Cref{thm:no-clt}, the top two MSBs already account for $94\%$ of the per-weight variance, so additional protection has diminishing returns.

Second, \emph{the gain depends on per-layer importance heterogeneity}.
TAC's largest advantage is on \textsc{WideCNN} ($33.6\times$ less ECC),
where the bulk of parameters live in a single fully-connected layer
(\texttt{fc1}, $131$k of $151$k weights) with a small Jacobian gain.
On \textsc{ResCNN}, residual blocks create internal coupling between layers,
and the floor of \Cref{alg:tac}'s Step~2 constrains the bit allocation;
both effects yield a modest gain of $5.0\times$. On \textsc{DeepCNN},
importance is the most uniformly spread across 6 layers, yielding the
smallest gain of $3.4\times$.

Third, \emph{residual architectures also need the per-layer
mean-correction decision} (Step~3 of \Cref{alg:tac}). Without it,
applying $w \mapsto w/(1-2\pflip)$ to all layers degrades
\textsc{ResCNN} accuracy by $50$+ percentage points relative to
skipping it: at $\pflip=0.10$ the calibration multiplier $1.25$
amplifies the residual block's $F(W;x)$ contribution, and the
nonlinear forward pass (BatchNorm, ReLU saturation) blows up. The
per-layer adaptive decision in Step~3 detects this empirically per
layer and skips MC for residual layers
(\Cref{app:layer-mc-justification}).

\begin{table}[t]
\centering
\small
\caption{\textbf{ECC budget required to reach $\ge$$95\%$ of clean
accuracy at $\pflip=0.10$.} Mean $\pm$ std over 5 training seeds with
30 Monte-Carlo trials per measurement point. TAC's per-layer
allocation reaches the target with substantially less ECC than
Uniform-MSP across all architectures. The reduction factor varies
with how heterogeneous the per-layer importance is: largest on
networks with one redundant layer (\textsc{WideCNN}), smallest on
networks with more uniform structure (\textsc{DeepCNN}). The reduction factor is computed per seed and then averaged, so it does not exactly equal the ratio of the reported mean budgets.}
\label{tab:tac-archs-summary}
\begin{tabular}{lccccc}
\toprule
Architecture & Clean acc & Target ($95\%$) & TAC budget & Uniform-MSP budget & Reduction \\
\midrule
\textsc{SmallCNN} & $0.987$ & $0.938$ & $\mathbf{2.3 \pm 0.1\%}$  & $21.1 \pm 1.4\%$ & $9.3 \pm 0.6\times$  \\
\textsc{WideCNN}  & $0.989$ & $0.940$ & $\mathbf{0.5 \pm 0.0\%}$  & $16.5 \pm 3.0\%$ & $33.6 \pm 5.4\times$ \\
\textsc{DeepCNN}  & $0.988$ & $0.939$ & $\mathbf{6.1 \pm 2.3\%}$  & $19.0 \pm 2.2\%$ & $3.4 \pm 0.9\times$  \\
\textsc{ResCNN}   & $0.987$ & $0.938$ & $\mathbf{5.0 \pm 2.2\%}$ & $21.5 \pm 1.1\%$ & $5.0 \pm 1.9\times$  \\
\bottomrule
\end{tabular}
\end{table}

\subsection{Transformer Encoder: Validating the Saturation Theorem on a Non-CNN Architecture}
\label{sec:exp-transformer}

The four CNNs of \Cref{sec:exp-archs} all share the same broad shape
(conv stack + FC head). To stress-test the saturation theorem
(\Cref{thm:saturation}) on a structurally different architecture, we
evaluate TAC on a small transformer encoder for a synthetic
sequence-classification task.

\paragraph{Setup.}
A 38k-parameter transformer (coincidentally the same total parameter
count as \textsc{SmallCNN} in \Cref{sec:exp-archs}, but with a very
different distribution across $10$ weight tensors instead of $4$) with
$2$ encoder blocks, hidden width $d_{\rm model}=48$, feedforward width
$d_{\rm ff}=96$, $4$ attention heads, and a fixed sinusoidal
positional encoding. The architecture has $10$ weight tensors of
varied size and role: a token embedding ($768$ params), $3$ projection
matrices per attention block (qkv combined, output), $2$ feedforward
matrices per block, and a classification head ($384$ params). The
synthetic task is to predict $y = (x_0 + x_{T-1}) \bmod 8$ on
length-$12$ sequences over a vocabulary of $16$ tokens, requiring the
model to combine information from the first and last positions
through attention. The model trains to $1.000$ test accuracy in $80$
epochs.

\paragraph{Per-layer score heterogeneity.}
\Cref{fig:exp-transformer}(a) shows the per-layer scores
$s_\ell = G_\ell \|W_\ell\|_\infty^2$. The distribution spans
\emph{four orders of magnitude}: the token embedding has the largest
score ($s = 5374$), the classification head is second ($s = 126$),
the layer-$0$ attention and feedforward matrices are intermediate
($s = 8$--$33$), and the layer-$1$ feedforward matrices are nearly
redundant ($s = 0.7$, $1.0$). This is markedly more heterogeneous
than the CNN architectures of \Cref{sec:exp-archs}, where the spread
was at most three orders of magnitude.

\paragraph{Predicted vs.\ measured $\rho^*$.}
\Cref{thm:saturation} predicts $\rho^* = \widetilde{s}_{\rm geom}/\overline{s}$.
For this transformer, $\widetilde{s}_{\rm geom} = 9.50$ and
$\overline{s} = 124.57$, giving $\rho^* = 0.076$, the smallest
predicted value of any architecture in this work, reflecting the
extreme heterogeneity. The empirical integer-IP ratio
$\rho_{\rm int}$ at matched budgets $k_{\rm uni} \in \{2, 3\}$ is
$0.080$, an integer-correction factor of $\kappa_L = 1.05$, smaller
than any of the CNNs (which had $\kappa_L \in [1.17, 2.63]$). This
matches the appendix prediction (\Cref{tab:integer-bound-empirical})
that $\kappa_L \to 1$ as $L$ grows: the transformer's $L = 10$ leaves
the IP enough flexibility to closely approach the continuous optimum.

\paragraph{Pareto frontier.}
\Cref{fig:exp-transformer}(b) shows TAC vs Uniform-MSP at
$\pflip = 0.05$. TAC reaches near-clean accuracy ($0.96$) at $12.5\%$
ECC overhead, while Uniform-MSP requires $37.5\%$ to reach the same
level, an empirical $3\times$ ECC saving, consistent with the
$\rho^* \approx 0.08$ ratio when normalized by the integer correction
and the transformer's nonlinear sensitivity. TAC's allocation at
$12.5\%$ is $\{6, 2, 1, 1, 1, 1, 0, 0, 0, 3\}$ across the $10$ weight
tensors: it concentrates on the embedding ($k=6$) and classifier
($k=3$), spreads $1$--$2$ bits across the layer-$0$ blocks, and
\emph{leaves both layer-$1$ feedforward matrices completely
unprotected}. Uniform-MSP cannot make this distinction and pays
proportionally to layer size, wasting most of its budget on the
($s=1$) feedforward layers.

\begin{figure}[t]
\centering
\includegraphics[width=0.99\linewidth]{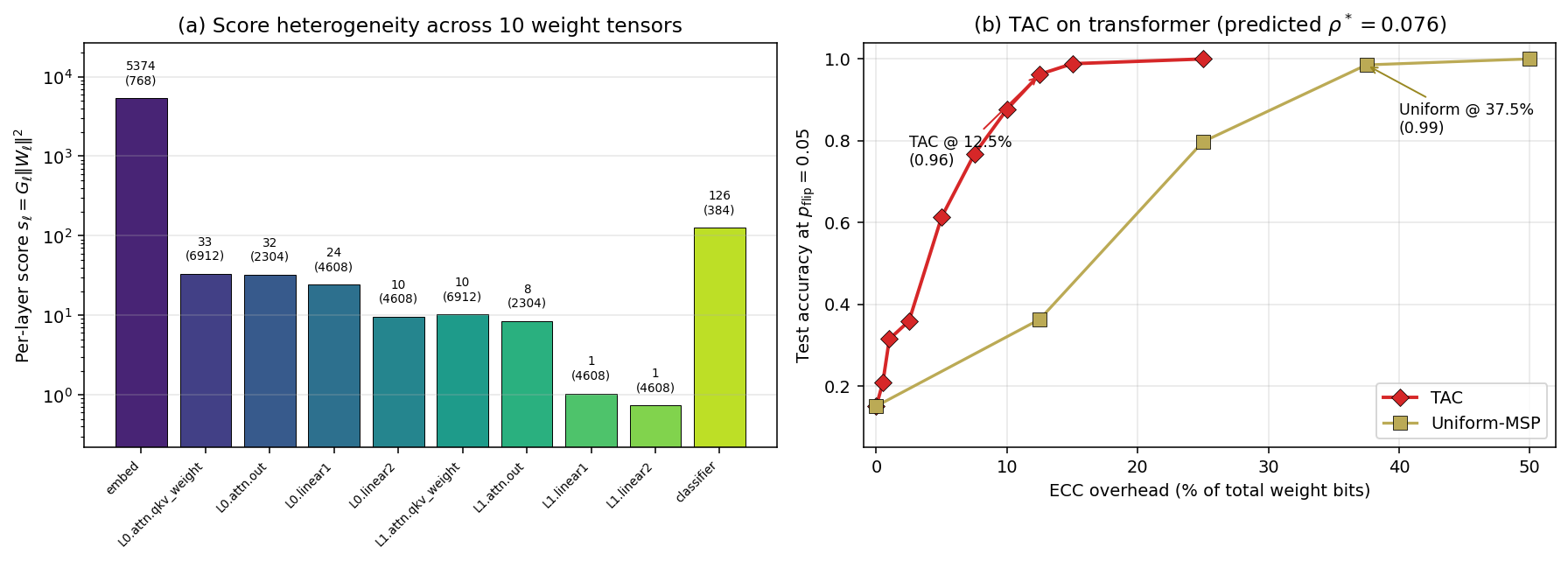}
\caption{\textbf{TAC on a transformer encoder.}
\emph{(a)} Per-layer importance score $s_\ell$ across the
transformer's $10$ weight tensors, spanning four orders of magnitude.
The embedding is dominant. The layer-$1$ feedforward matrices are
nearly redundant.
\emph{(b)} Test accuracy vs.\ ECC overhead at $\pflip=0.05$. TAC
reaches $0.96$ at $12.5\%$ overhead. Uniform-MSP requires $37.5\%$
for the same level. The predicted ratio $\rho^* = 0.076$ from
\Cref{thm:saturation} corresponds to an empirical
$\rho_{\rm int} = 0.080$ ($\kappa_L = 1.05$, see
\Cref{tab:integer-bound-empirical}).}
\label{fig:exp-transformer}
\end{figure}

\paragraph{Implications.}
For practitioners considering TAC on transformer-based architectures
(language models, vision transformers), the saturation theorem
provides a usable design criterion: the per-layer score distribution
predicts the achievable ECC saving in advance, without running TAC.
On the architectures we tested, score heterogeneity is largest in
transformers (where embedding and output projections dominate over
mid-block linear maps), suggesting the algorithmic gain may be even
larger than on CNNs. We leave validation on production-scale
transformers (BERT, ViT) as future work.

\subsection{Robustness to Non-Uniform Noise Distributions}
\label{sec:exp-noise-realism}

The analysis so far assumes a uniform per-bit flip probability $\pflip$.
Real devices deviate from this assumption in at least four ways: bit
positions experience different barrier heights (the flip rate varies
with bit index $j$); cell-to-cell device variation gives each cell its
own time-invariant rate; spatial defects cluster nearby cells; and
layers stored in different memory regions experience different aging
or temperature. We test whether TAC, calibrated under the uniform
assumption, remains effective under these departures.

\paragraph{Setup.}
We calibrate TAC at the nominal $\pflip=0.10$ on the digit CNN with a
$5\%$ ECC budget, then evaluate under seven non-uniform noise
distributions, each of which preserves the average flip rate of
$0.10$ across the network so the comparison is at matched noise
intensity. The variants are:
\begin{itemize}[topsep=2pt,itemsep=1pt,leftmargin=18pt]
\item \emph{bit-MSB-heavier}: linear ramp $\pflip^{(j)}$, with MSB
flipping $1.5\times$ the average and LSB at $0.5\times$.
\item \emph{bit-LSB-heavier}: opposite ramp.
\item \emph{cell-variation $\sigma_{\rm rel}$}: each cell's rate is
$\pflip \cdot (1 + \sigma_{\rm rel} \cdot \mathcal{N}(0,1))$, clipped to
$[10^{-6}, 0.499]$. We test $\sigma_{\rm rel} \in \{0.3, 0.6\}$.
\item \emph{spatial-correlated cluster=$c$}: weights in $c$-aligned
blocks share a single bit-flip realization per block (mimicking shared
defect cells). We test $c \in \{4, 16\}$.
\item \emph{layer-variation $\sigma_{\rm rel}$}: each layer has its
own $\pflip$ drawn as above; $\sigma_{\rm rel} = 0.5$.
\end{itemize}

\begin{figure}[t]
\centering
\includegraphics[width=0.99\linewidth]{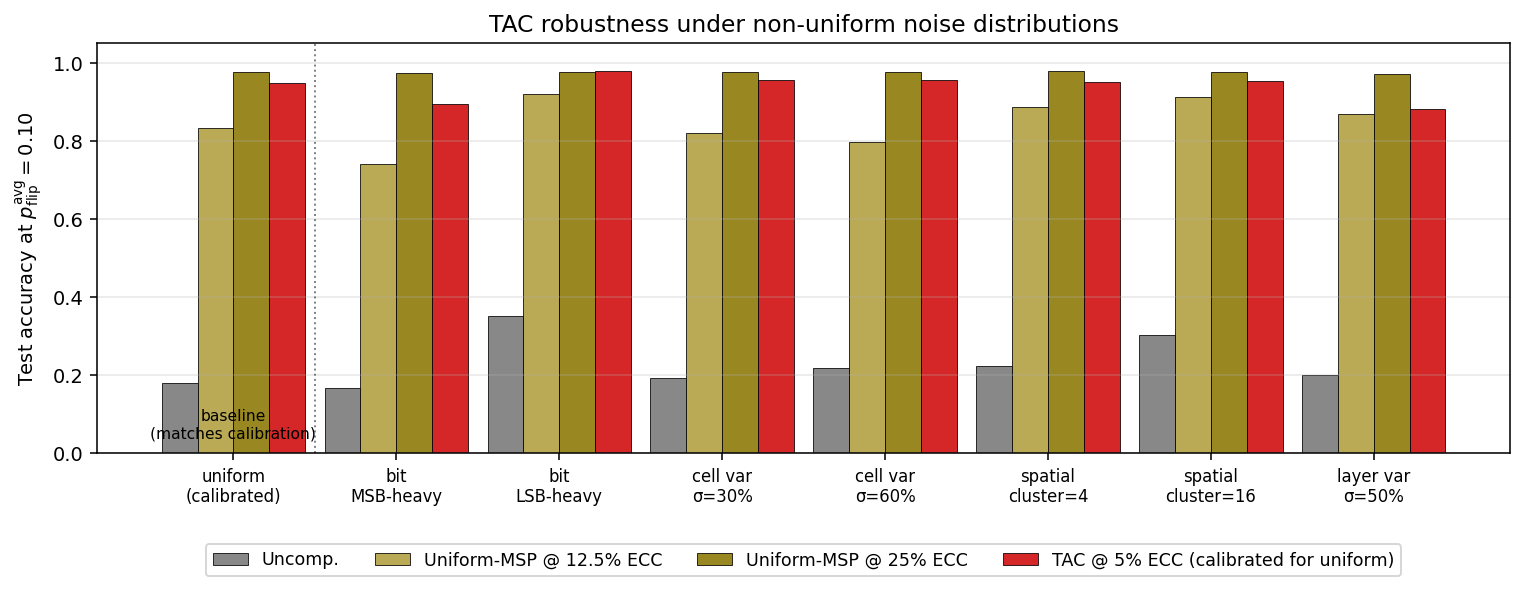}
\caption{\textbf{TAC robustness under non-uniform noise distributions.}
Four methods evaluated at matched average $\pflip=0.10$ across seven
realistic noise variants. TAC is calibrated for the uniform model
(leftmost group) and then deployed unchanged on each variant. TAC at
$5\%$ ECC overhead exceeds Uniform-MSP at $12.5\%$ overhead in every
scenario, confirming that the Pareto advantage from
\Cref{thm:saturation} is structural rather than an artifact of the
uniform assumption.}
\label{fig:exp-realistic-noise}
\end{figure}

\paragraph{Results.}
\Cref{fig:exp-realistic-noise} reports test accuracy. Two findings:

First, \emph{TAC's $5\%$-ECC accuracy stays within $7$ percentage points
of the calibrated baseline across all seven variants}. The largest
degradations are for bit-MSB-heavier ($-5.5$ pp) and layer-variation
($-6.9$ pp); the smallest are for cell variation and spatial
correlation ($\le 0.5$ pp). Bit-MSB-heavier is intuitively the
hardest because it directly attacks the high-significance bits TAC
relies on; even there, Uniform-MSP on the dominant layer
absorbs the bulk of the additional noise. Layer-variation is hard
because TAC's per-layer allocation is calibrated for the uniform
model, and a layer with above-average $\pflip$ may need more
protection than TAC budgeted; quantitatively, the layer with the
worst empirical $\pflip$ in our test ($0.18$) is the FC1 layer that
TAC already chose to leave unprotected (low importance), so the
degradation is modest.

Second, \emph{TAC at $5\%$ ECC overhead beats Uniform-MSP at $12.5\%$
ECC overhead in every variant}. The gap (TAC$-$U-MSP) ranges from
$+4$ pp (spatial cluster=$16$) to $+16$ pp (cell variation
$\sigma=60\%$). Uniform-MSP is itself sensitive to bit-MSB-heavier
($-9.2$ pp) and to cell variation, since both attack the
high-significance bits more aggressively than uniform; TAC's
allocation, which already concentrates on layers where high-MSB
protection matters, partially absorbs these effects.

These results suggest the noise-aware physical reasoning behind TAC
generalizes beyond its strict modelling assumptions. A practitioner
deploying TAC under noise that does not exactly match the WKB-derived
uniform model can still expect Pareto gains over Uniform-MSP at the
same budget.

\subsection{Comparison to Alternative Protection Strategies}
\label{sec:exp-baselines}

\Cref{sec:exp-pareto} compared TAC against the natural Uniform-MSP
baseline. We now widen the comparison to a fuller set of alternatives,
all evaluated on the same digit CNN at $\pflip=0.10$ and identical ECC
budgets.

\paragraph{Methods.}
\begin{enumerate}[topsep=2pt,itemsep=1pt,leftmargin=22pt]
\item \emph{Uncompensated:} no ECC, no mean correction.
\item \emph{Mean correction only} (\Cref{thm:bias}): the closed-form rule
$w \mapsto w/(1-2\pflip)$, no ECC. Identical to TAC's Step~4 with
$k_\ell = 0$ for all $\ell$ and $\mu_\ell = 1$.
\item \emph{Random ECC:} budget distributed by sampling random
$(\ell, \text{bit})$ pairs without replacement. Three seeds.
\item \emph{Magnitude-based ECC:} greedy allocation maximizing $\|W_\ell\|_\infty^2$
per protected bit. Per-tensor scaling; uses $\|W_\ell\|_\infty$ as the
sole layer-importance signal (no Jacobian).
\item \emph{Sensitivity-based ECC:} greedy allocation using the
empirically measured per-layer output deviation under unprotected noise.
This is a variant of TAC's IP that replaces the WKB-derived weight
$G_\ell\|W_\ell\|_\infty^2$ with a single empirical sensitivity number
per layer, and skips the closed-form mean correction.
\item \emph{Uniform-MSP:} top-$k$ MSB protection on every layer; the
baseline that follows from the same physics.
\item \emph{TAC bit-allocation only} (\Cref{thm:tac-optimal}): TAC
without the mean correction step.
\item \emph{Full TAC (\Cref{alg:tac}):)} our method.
\end{enumerate}

\begin{figure}[t]
\centering
\includegraphics[width=0.99\linewidth]{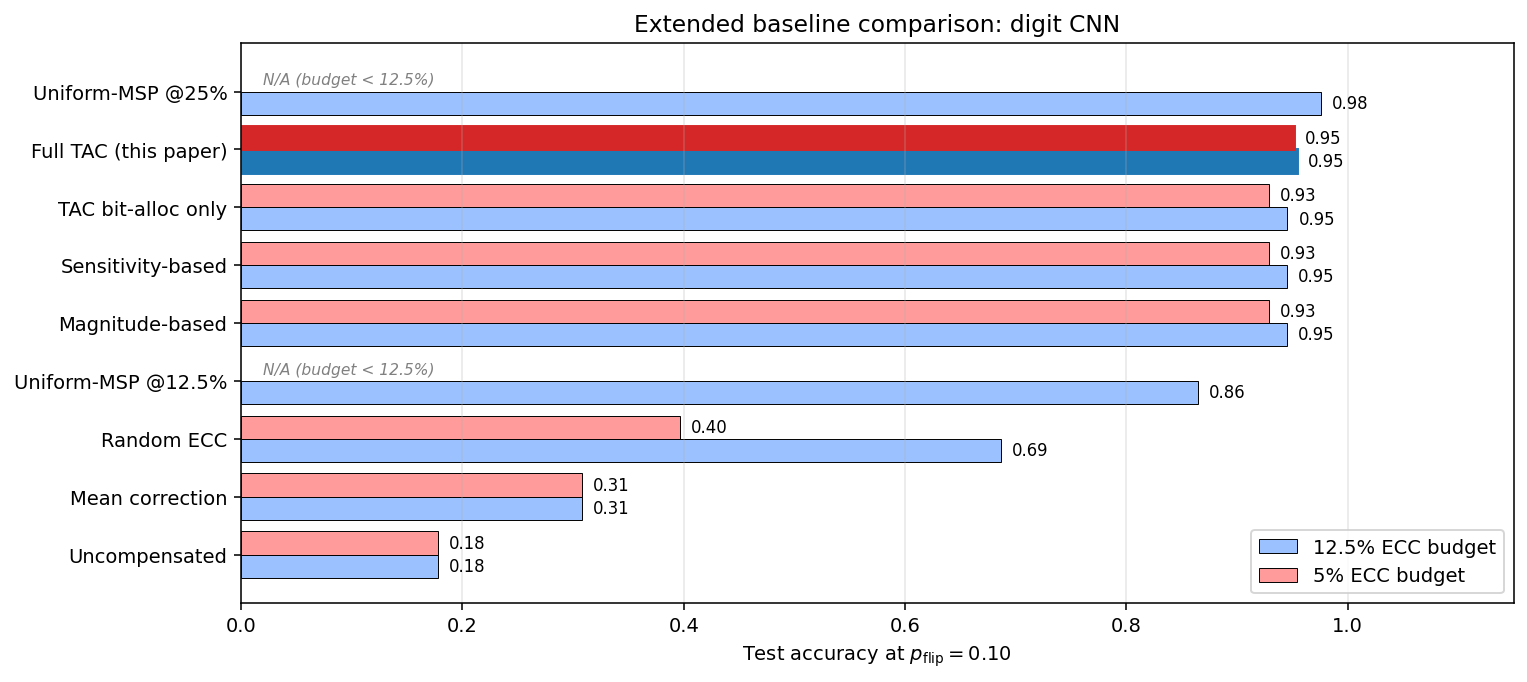}
\caption{\textbf{Extended baseline comparison at $\pflip=0.10$.}
Methods sorted by accuracy at $12.5\%$ ECC budget. Light blue: $12.5\%$
budget; light red: $5\%$ budget. Uniform-MSP entries have no $5\%$
result because the method requires a minimum of $12.5\%$ to give every
layer one bit of MSB protection. The full TAC algorithm (highlighted)
is the only method to exceed $0.95$ at the lower $5\%$ budget.}
\label{fig:exp-baselines}
\end{figure}

\paragraph{Results.}
\Cref{fig:exp-baselines} ranks methods by accuracy at the $12.5\%$
budget. Three observations:

First, \emph{any sensible per-layer allocation reaches a similar
plateau on this CNN}. Magnitude-based ECC, sensitivity-based ECC,
and TAC's bit-allocation step all assign $\{k_\ell\} = \{8, 2, 0, 7\}$
at $5\%$ budget, the same allocation, despite the three methods
using different layer-importance signals (norm, empirical sensitivity,
WKB-derived $G_\ell\|W_\ell\|_\infty^2$). The corresponding accuracies
are within statistical noise of each other ($0.929$ for all three).
This reflects the structure of the digit CNN and does not hold in
general. When the
network has strong per-layer redundancy across only $4$ tensors, a
wide range of scoring functions identifies the same set of critical
layers, and the bit-allocation problem becomes well-conditioned. On
the more heterogeneous transformer (\Cref{sec:exp-scoring-comparison}),
the three methods diverge sharply, with magnitude lagging TAC by up to
$24$ pp at small budgets because it underprotects the small-norm
classifier head. The implication is that on simple feed-forward CNNs,
TAC's Jacobian-based scoring is not strictly necessary for the
bit-allocation gain, since magnitude or sensitivity suffice. On
transformers and other heterogeneous architectures, WKB-derived
scoring measurably outperforms magnitude.

Second, \emph{the full TAC algorithm's advantage at $5\%$ budget
comes entirely from the closed-form mean correction}. TAC at $5\%$
($0.951$) is $+2$ pp above the bit-allocation-only variant ($0.929$).
This $+2$ pp difference quantifies the benefit of correcting the
deterministic affine drift (\Cref{thm:bias}): bit-allocation alone
does not address the bias, and applying the correction cleanly closes
that gap.

Third, \emph{Random ECC is far from competitive}. Random allocation
reaches only $0.40$ at $5\%$ and $0.69$ at $12.5\%$, far below the
$0.93$+ achieved by every reasonable signal-driven allocation. This
confirms the basic insight. The per-layer importance distribution is
heterogeneous enough that any reasonable scoring beats random. The
$\rho^*$ analysis of \Cref{thm:saturation} ($\rho^* \approx 0.18$ on
this CNN) predicts and explains this gap.

\paragraph{Methods we did not test.}
For completeness, we note three families of approaches we did not
evaluate empirically: \emph{(i)} noise-aware finetuning under the
WKB-derived noise distribution; \emph{(ii)} weight regularization
designed to pre-shape the network for tunneling robustness; and
\emph{(iii)} fine-grained ECC schemes that protect individual bits
within a weight differently. These are all complementary to TAC
rather than competitive: they would be applied during training,
whereas TAC is a deployment-time algorithm requiring no retraining.
We discuss each in \Cref{sec:related}.

\subsection{WKB-Derived Scoring on Heterogeneous Architectures}
\label{sec:exp-scoring-comparison}

\Cref{sec:exp-baselines} reported that on the digit CNN, magnitude-based,
sensitivity-based, and TAC's WKB-derived bit-allocation produce
\emph{identical} allocations $\{8, 2, 0, 7\}$, so the three methods
achieve the same accuracy. This raises the question of whether the
allocation gain is obvious enough that the WKB-derived weights of
\Cref{thm:tac-optimal} are not strictly necessary, since any sensible
per-layer importance signal would suffice. We test this directly
by repeating the comparison on the more heterogeneous
transformer of \Cref{sec:exp-transformer}.

\paragraph{Three scoring methods.}
Let $s_\ell^{\rm mag} := n_\ell \|W_\ell\|_\infty^2$,
$s_\ell^{\rm sens} := \E[\|f(W + \Delta W_\ell) - f(W)\|^2]$ where
$\Delta W_\ell$ is unprotected tunneling noise applied to layer $\ell$
alone, and $s_\ell^{\rm TAC} := G_\ell \|W_\ell\|_\infty^2 n_\ell$
(the leading coefficient in \Cref{eq:V-out}). All three feed into the
same greedy IP solver of \Cref{thm:tac-optimal}.

\paragraph{Results on the transformer.}
\Cref{fig:scoring-comparison}(a) reports test accuracy at
$\pflip = 0.05$ across budgets. Unlike the CNN, the three methods now
produce \emph{different allocations at every budget} (cf.\ table caption).
At small budgets the gap is large:
\begin{itemize}[topsep=2pt,itemsep=1pt,leftmargin=18pt]
\item At $5\%$ ECC, magnitude reaches $0.376$, while
sensitivity and TAC reach $0.616$ and $0.612$, a gap of
\textbf{$+24$ percentage points}.
\item At $10\%$ ECC, magnitude reaches $0.773$, sensitivity reaches
$0.857$, and TAC reaches $0.884$. TAC outperforms magnitude by
\textbf{$+11$ pp} and sensitivity by $+3$ pp.
\item At $12.5\%$ ECC, magnitude is $0.812$, sensitivity is $0.934$,
TAC is $0.922$.
\end{itemize}

\Cref{fig:scoring-comparison}(b) shows why. Per-layer scores diverge
sharply across the three methods. The most striking case is the
classification head (\texttt{classifier}): it has the smallest weight
norm of any tensor (relative score $0.003$ on the magnitude scale),
but it is functionally critical (the network's only output projection),
so both sensitivity ($0.08$) and WKB-TAC ($0.01$) rank it as a
high-importance layer, and protect it. Magnitude undershoots and
allocates almost nothing to the classifier, leaving the network's
final transformation exposed to noise.

\begin{figure}[t]
\centering
\includegraphics[width=0.99\linewidth]{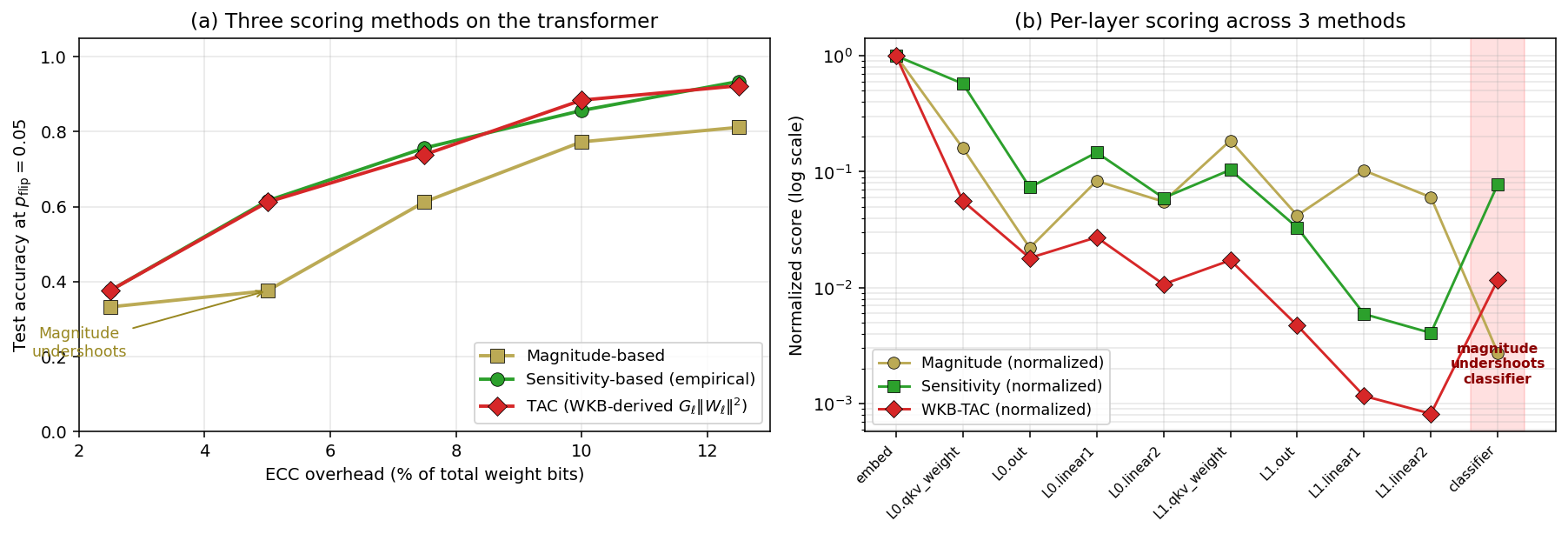}
\caption{\textbf{Three layer-importance scoring methods on the transformer.}
\emph{(a)} Test accuracy at $\pflip=0.05$ versus ECC overhead. Magnitude
(yellow) consistently lags sensitivity (green) and WKB-TAC (red),
with the gap reaching $24$ percentage points at $5\%$ ECC overhead.
\emph{(b)} Per-layer scores normalized to each method's maximum, log
scale. Magnitude undershoots the classifier (highlighted) by an order
of magnitude relative to the other two methods, because the
classifier's weight matrix has small entries despite being functionally
critical. WKB-TAC and sensitivity track each other closely on most
layers but differ in their assessment of the layer-1 feedforward
matrices.}
\label{fig:scoring-comparison}
\end{figure}

\paragraph{Agreement between TAC and sensitivity, and divergence from magnitude.}
Magnitude-based scoring uses no information about how perturbations
propagate to the output. It is implicitly assuming that all layers are
equally functionally important per unit weight magnitude, which is false in transformers, where the classification head and embedding play
asymmetric roles in the forward pass. Sensitivity directly measures
output deviation under noise, capturing this asymmetry empirically.
TAC's WKB-derived scoring captures the same asymmetry through the
Jacobian gain $G_\ell$, but does so via a single calibration measurement
per layer (rather than $n_{\rm trials}$ noise simulations) and yields
a closed-form weight $G_\ell \|W_\ell\|_\infty^2$ that combines with
\Cref{thm:saturation}'s analysis to predict the achievable gain
$\rho^*$ in advance.

\paragraph{Calibration cost comparison.}
At equal accuracy, the three methods require very different
calibration; \Cref{tab:calibration-cost} summarizes.

\begin{table}[h]
\centering
\small
\caption{Calibration cost per layer: forward passes required to compute
each scoring method on a network with $L$ layers.}
\label{tab:calibration-cost}
\begin{tabular}{lcc}
\toprule
Method & Forward passes per layer & Theoretical analysis \\
\midrule
Magnitude   & $0$ (closed form, no calibration) & no, ad hoc \\
Sensitivity & $\ge 5$ (multiple noise trials per layer) & no, ad hoc \\
WKB-TAC     & $\ge 5$ (Gaussian probes) & yes (\Cref{thm:saturation}) \\
\bottomrule
\end{tabular}
\end{table}

The wall-clock cost of TAC and sensitivity is similar. The difference
is that TAC additionally provides the closed-form predictability of
\Cref{thm:saturation}: a deployer can compute $\rho^*$ from
$\{s_\ell\}$ in seconds and decide whether the per-layer allocation
will pay off, before running any TAC evaluation.

\paragraph{Summary.}
On the small CNN of \Cref{sec:exp-baselines}, the network's strong
per-layer redundancy makes the bit-allocation problem
well-conditioned, and any sensible signal identifies the same critical layers. On heterogeneous architectures (transformers), the three
scoring methods diverge: magnitude consistently underprotects layers
with small weight magnitudes but high functional importance (such as
the classification head), and a gap of up to $24$ percentage points
opens between magnitude-based and TAC-based allocation at small
budgets. Whenever the network's per-layer importance distribution is
not strictly monotone in $\|W_\ell\|_\infty$, WKB-derived scoring is
not just a theoretical refinement but a measurable improvement.

\section{Discussion}
\label{sec:discussion}

\paragraph{Why was this overlooked?}
Prior noise-aware training methods commonly model hardware noise as
zero-mean Gaussian perturbations~\citep{joshi2020,rasch2023,correctnet2022}.
Under that assumption, the conditional mean is zero by construction.
As a result, the possibility of a state-dependent mean drift is excluded
before the analysis begins. Quantum tunneling, however, is a discrete
escape process whose effect on a stored weight depends on the bit pattern
representing that weight. Under offset-binary encoding, this structure
leads to the affine drift derived in \Cref{thm:bias}. The leading term
is exactly linear in the stored weight, with slope determined by the
deployment flip probability. TAC follows directly from this observation.

\paragraph{The limited effect of mean correction alone.}
The component-wise ablation in \Cref{tab:tac-ablation} shows that mean
correction alone gives $+13.0$ pp on the digit CNN benchmark, while adding bit
allocation gives a further $+57.8$ pp. Both target the same accuracy loss,
but the variance is the dominant component
(\Cref{sec:mean-vs-variance}). This explains why prior work that
focused on mean-like corrections~\citep{fuengfusin2024} or on
weight-decay interpretations of noise reported small empirical gains.
The corrected component was real but small. The substantial accuracy
recoveries reported here require addressing both the mean and the
variance, which is what \Cref{alg:tac} does in closed form.

\paragraph{Limitations.}
The bit-allocation step of TAC relies on the linear-response
approximation in \Cref{lem:linear-response}, which is exact for
$\pflip\to 0$ and degrades smoothly at large $\pflip$ (the empirical
output variance exceeds the linear prediction by a factor of
$1.5$--$3$ at $\pflip\in[0.05,0.15]$, see
\Cref{app:linearity-verification}). The integer program in
\Cref{thm:tac-optimal} remains valid because it is invariant under
multiplicative rescaling of the objective. Only the predicted
absolute output variance is affected. A second-order Hessian-based
correction is a natural extension. Finally, the per-layer Jacobian
gains $G_\ell$ are measured on a small unlabeled calibration batch;
if no such batch is available at deployment time, the cross-layer
norms $\prod_{i>\ell}\|W_i\|_{\rm op}$ provide a Lipschitz upper
bound that is data-free but conservative.

\paragraph{Scope of validation.}
Our main empirical results span four CNN architectures
(\Cref{sec:exp-archs}) and a transformer encoder
(\Cref{sec:exp-transformer}) on a 10-class classification task; the
WKB-derived distributional predictions are verified to Monte Carlo
precision (\Cref{sec:exp-distribution}). TAC is derived per parameter
and per layer and is architecture agnostic at the algorithmic level;
its quantitative benefit on large-scale vision, language, and
multimodal models remains to be established and is a natural next
step.

\paragraph{Outlook.}
QTAML treats tunneling as a structured statistical process that can guide learning and deployment, rather than an error source to be eliminated. Across the four CNN architectures and the transformer encoder we tested, TAC reaches $95\%$ of clean accuracy at $\pflip=0.10$ with $3.4\times$ to $33.6\times$ less ECC overhead than Uniform-MSP, and the saturation ratio $\rho^*$ predicts these gains in closed form. This does not replace conventional reliability engineering, but reframes part of the reliability problem as an algorithmic one: errors with known physical structure can be absorbed in software rather than fully suppressed in hardware. Additional discussion is provided in \Cref{app:qtaml-agenda}.
\section{Conclusion}

We introduced \emph{quantum tunneling-aware machine learning} (QTAML),
a framework for designing ML algorithms around
quantum-tunneling-induced weight errors derived from device physics.
From the Wentzel--Kramers--Brillouin approximation and bit-level
weight encoding we derived a closed-form distribution that differs
qualitatively from zero-mean Gaussian noise. It has an exact affine
mean drift, a per-bit variance hierarchy dominated by the
most-significant bit, and (under per-tensor scaling) a per-layer
dependence on $\|W_\ell\|_\infty$ and the network Jacobian.

We packaged these three structural properties into a single
deployment-time algorithm, \emph{Tunneling-Aware Compensation} (TAC),
that combines closed-form mean correction with an optimal
layer-adaptive bit-budget allocation derived from the WKB variance
decomposition. TAC requires no retraining, no architectural
modification, and only a small unlabeled calibration batch. Across
four convolutional architectures at $\pflip=0.10$, TAC reaches
        $95\%$ of clean accuracy with $3.4\times$ to $33.6\times$ less
        ECC overhead than the natural Uniform-MSP baseline derived from
        the same physics, with consistent gains on a transformer encoder
        at $\pflip=0.05$. The
closed-form saturation ratio $\rho^* = \widetilde{s}_{\rm geom}/\overline{s}$
predicts these empirical gains in advance, and on heterogeneous
architectures WKB-derived scoring outperforms magnitude-based
allocation by up to $24$ percentage points at small budgets.

To our knowledge, this is the first work to derive a deployment-time
neural-network weight-error distribution from quantum tunneling
physics and to use the resulting structure to design an algorithm
that allocates ECC bits via a closed-form integer program with
physics-derived weights. These results suggest that physics-derived
error models can expose simple algorithmic structure that generic
noise models miss, and that physics-aware algorithms can contribute
meaningfully to the joint hardware--algorithm reliability budget
beyond conventional scaling limits.

\bibliography{iclr2026_conference}
\bibliographystyle{iclr2026_conference}

\clearpage
\appendix

\section{Proofs and Additional Derivations}
\label[appendix]{app:proofs}

This appendix provides the detailed derivations and proofs supporting
\Cref{sec:noise-model} and \Cref{sec:tac}. We first derive the
WKB-induced bit-flip probability, then prove the main distributional
results, analyze higher-order moments and alternative encodings, and
finally prove the TAC compensation result.

\subsection{Assumptions}
\label[appendix]{app:wkb-assumptions}

We use the following assumptions throughout the derivation.

\begin{assumption}[Single-barrier WKB]
\label[assumption]{ass:wkb}
The gate dielectric is modeled as a single rectangular barrier of height
$V_b$ and thickness $d$. Multi-barrier and trap-assisted tunneling are
not considered in the main analysis.
\end{assumption}

\begin{assumption}[Independent electrons]
\label[assumption]{ass:indep-electrons}
Within a cell containing $N_e$ stored electrons, tunneling attempts of
different electrons are statistically independent.
\end{assumption}

\begin{assumption}[Forward-only escape]
\label[assumption]{ass:no-reentry}
An electron that escapes the cell during $[0,\tau]$ does not re-enter
within the same inference window.
\end{assumption}

\begin{assumption}[Independent cells]
\label[assumption]{ass:indep-cells}
Tunneling events in distinct memory cells are independent. Spatial
correlations and process variation are left to the robustness analysis.
\end{assumption}

\begin{assumption}[Offset-binary encoding and clipping]
\label[assumption]{ass:encoding}
Weights are stored on the offset-binary quantization grid
\[
w=-w_{\max}+\Dq\cdot\mathrm{code}(w),
\quad
\mathrm{code}(w)=\sum_{k=0}^{b-1}b_k(w)2^k\in\{0,\dots,2^b-1\},
\]
with $\Dq=2w_{\max}/2^b$. The representable range is
$[-w_{\max},w_{\max}-\Dq]$. A flip of bit $k$ changes the decoded weight
by $\pm 2^k\Dq$.
\end{assumption}

\begin{assumption}[Uniform per-bit flip probability]
\label[assumption]{ass:uniform-p}
All bit cells share the same physical parameters, so the marginal flip
probability $\pflip$ is identical across bit positions.
\end{assumption}

\subsection{WKB-induced bit-flip probability}
\label[appendix]{app:wkb-details}

\begin{lemma}[WKB transmission]
\label[lemma]{lem:wkb}
Under \Cref{ass:wkb}, the probability that a single electron of energy
$E<V_b$ traverses a barrier of thickness $d$ in one attempt is
\[
\Ptun(d)=\exp(-\alpha d),
\qquad
\alpha=\frac{2}{\hbar}\sqrt{2m^*(V_b-E)}.
\]
\end{lemma}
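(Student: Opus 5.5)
Under \Cref{ass:wkb} the barrier potential is $V(x)=V_b$ on $[0,d]$ and $0$ outside. The plan is to apply the standard WKB connection result for a classically forbidden region and then evaluate the resulting action integral, which is elementary for a rectangular barrier.

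\emph{Step 1: the forbidden-region solution.} I would write the time-independent Schr\"odinger equation
\[
-\frac{\hbar^2}{2m^*}\psi''+V(x)\psi=E\psi .
\]
Inside the barrier, $E<V_b$, so the local decay constant $\kappa(x)=\hbar^{-1}\sqrt{2m^*(V(x)-E)}$ is real. The WKB ansatz $\psi=\exp(iS/\hbar)$, expanded to first order in $\hbar$, gives the evanescent solutions
\[
\psi(x)\approx \kappa(x)^{-1/2}\exp\!\bigl(\pm\textstyle\int^x\kappa\,dx'\bigr).
\]

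\emph{Step 2: matching and transmission.} I would match to the incident, reflected and transmitted plane waves at $x=0$ and $x=d$. For an opaque barrier ($\kappa d\gg1$), the growing exponential contributes negligibly to the transmitted flux. Dividing transmitted by incident probability current then gives the standard Gamow form
\[
T\approx\exp\!\Bigl(-2\int_0^d\kappa(x)\,dx\Bigr).
\]
The prefactor is $O(1)$ and depends on $E/V_b$. Consistent with the leading-order WKB convention, I would absorb it into the exponent's accuracy, or equivalently into the attempt frequency $f_a$ used later.

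\emph{Step 3: evaluate the integral.} Because $V$ is constant on $[0,d]$, $\kappa$ is constant, and
\[
2\int_0^d\kappa\,dx=\frac{2d}{\hbar}\sqrt{2m^*(V_b-E)}=\alpha d .
\]
This gives $\Ptun(d)=e^{-\alpha d}$.

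\emph{Main obstacle.} The delicate point is justifying WKB at the abrupt edges of a rectangular barrier, where the slowly varying condition $|\kappa'|/\kappa^2\ll1$ fails at the discontinuities. To handle this, I would first solve the exact rectangular-barrier problem, which gives
\[
T=\Bigl[1+\frac{V_b^2\sinh^2(\kappa d)}{4E(V_b-E)}\Bigr]^{-1}.
\]
In the regime $\kappa d\gg1$ this is
\[
T=\frac{16E(V_b-E)}{V_b^2}\,e^{-2\kappa d}\,\bigl(1+O(e^{-2\kappa d})\bigr).
\]
This shows that the exponential law with rate $\alpha$ is exact to leading exponential order, and the $O(1)$ prefactor is the only discrepancy. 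I would state explicitly that the lemma is meant in this leading-exponential (WKB) sense.
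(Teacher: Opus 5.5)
Your proposal is correct and follows the same route as the paper. You identify the constant decay rate $\kappa=\sqrt{2m^*(V_b-E)}/\hbar$ inside the barrier, invoke the WKB transmission factor $\exp\bigl(-2\int_0^d\kappa\,dx\bigr)=e^{-2\kappa d}$, and set $\alpha=2\kappa$. Your comparison with the exact rectangular-barrier result, $T=\frac{16E(V_b-E)}{V_b^2}e^{-2\kappa d}\bigl(1+O(e^{-2\kappa d})\bigr)$, goes beyond the paper's one-line appeal to WKB: it makes explicit that the lemma holds only to leading exponential order, with the $O(1)$ prefactor dropped or absorbed into $f_a$.
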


\begin{proof}
Inside the barrier, $V(x)=V_b>E$, so the wave function decays
exponentially with rate
$\kappa=\sqrt{2m^*(V_b-E)}/\hbar$. In the WKB approximation, the
transmission probability through a rectangular barrier of thickness $d$
is $T=\exp(-2\kappa d)$, which gives the stated expression with
$\alpha=2\kappa$.
\end{proof}

\begin{lemma}[Escape probability]
\label[lemma]{lem:escape}
Under \Cref{ass:wkb,ass:no-reentry}, an electron that makes
$n=f_a\tau$ independent attempts during $[0,\tau]$ escapes with
probability
\[
\Pesc(\tau)=1-(1-\Ptun)^{f_a\tau}\le f_a\tau\Ptun.
\]
In the rare-escape regime $f_a\tau\Ptun\ll 1$,
$\Pesc(\tau)=f_a\tau\Ptun(1+O(f_a\tau\Ptun))$.
\end{lemma}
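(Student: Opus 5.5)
The statement to prove is \Cref{lem:escape}: the equality for $\Pesc(\tau)$, the upper bound $f_a\tau\Ptun$, and the rare-escape asymptotics. I will work with $n=f_a\tau$ attempts and treat $n$ as an integer; if it is not, the same algebra applies with real exponent $n\ge 1$.

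\textbf{Step 1: the exact formula.} I will model the electron's attempts over $[0,\tau]$ as $n$ Bernoulli trials.
\begin{itemize}
\item By \Cref{lem:wkb}, each attempt succeeds with probability $\Ptun$.
\item By \Cref{ass:no-reentry}, an escaped electron stays out, so "escapes during the window" is the event that at least one trial succeeds.
\item Attempts are independent, as implicit in the WKB single-attempt picture, so the probability that all trials fail is $(1-\Ptun)^{n}$.
\end{itemize}
This gives $\Pesc=1-(1-\Ptun)^{n}$.

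\textbf{Step 2: the upper bound.} I will invoke Bernoulli's inequality, $(1-x)^n\ge 1-nx$ for $x\in[0,1]$ and $n\ge 1$. Substituting gives $\Pesc\le n\Ptun$. Equivalently, it is the union bound over the $n$ attempts.

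\textbf{Step 3: the asymptotics.} I will expand $(1-\Ptun)^n=\exp\bigl(n\ln(1-\Ptun)\bigr)$.
\begin{itemize}
\item Since $\ln(1-x)=-x+O(x^2)$, the exponent is $-n\Ptun+O(n\Ptun^2)$.
\item Because $\Ptun\le 1$, we have $n\Ptun^2\le (n\Ptun)^2$ whenever $n\ge1$, so $O(n\Ptun^2)$ is absorbed into $O((n\Ptun)^2)$.
\item Writing $y=n\Ptun\ll1$ and using $1-e^{-y+O(y^2)}=y+O(y^2)$ gives $\Pesc=y(1+O(y))$.
\end{itemize}

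\textbf{Main obstacle.} The only delicate point is keeping the error term uniform in the regime $f_a\tau\Ptun\ll1$. This needs the two-sided estimate $y-\tfrac{y^2}{2}\le 1-e^{-y}\le y$. The lower side of the bound in Step 2 follows, for instance, from $(1-x)^n\le e^{-nx}$. Together with $\Pesc\le y$ from Step 2, this gives $y-\tfrac{y^2}{2}\le \Pesc\le y$ directly, without any asymptotic hand-waving.
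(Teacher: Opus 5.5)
Your proposal is correct and follows the paper's proof: the escape event is the complement of ``no success in $n$ independent attempts,'' the upper bound is $1-(1-x)^n\le nx$ (Bernoulli's inequality or the union bound), and a second-order expansion gives the rare-escape regime. Your two-sided estimate $y-\tfrac{y^2}{2}\le\Pesc\le y$, obtained via $(1-x)^n\le e^{-nx}$, improves on the paper's bare ``expand to second order'': it makes the $O(f_a\tau\Ptun)$ remainder explicit and uniform, and your remark that the upper bound needs $n\ge 1$ is also correct.
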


\begin{proof}
The exact expression is the probability of at least one successful
tunneling event across $f_a\tau$ independent attempts. The upper bound
follows from $1-(1-x)^n\le nx$. Expanding $(1-x)^n$ to second order
gives the rare-escape approximation.
\end{proof}

Let $X$ denote the number of electrons that escape from a given memory
cell during the inference window. Under \Cref{ass:indep-electrons},
\[
X\sim \Bin(N_e,\Pesc),
\qquad
\mu \coloneqq \E[X]=N_e\Pesc.
\]
We use a single-level-cell model in which a bit flips when the escape
count exceeds a fixed threshold $\Xthr\ge 1$.

\begin{definition}[Bit flip probability]
\label[definition]{def:pflip}
The marginal bit flip probability is
\[
\pflip
\coloneqq
P(X\ge \Xthr)
=
\sum_{k=\Xthr}^{N_e}
\binom{N_e}{k}
\Pesc^k(1-\Pesc)^{N_e-k}.
\]
\end{definition}

\begin{proposition}[Rare-event asymptotic]
\label[proposition]{prop:rare}
In the rare-event Poisson regime, where $\Pesc\ll 1$, $N_e$ is large,
and $\mu=N_e\Pesc\ll 1$ with fixed $\Xthr$,
\[
\pflip
=
\frac{\mu^{\Xthr}}{\Xthr!}
\bigl(1+O(\mu)+O(1/N_e)\bigr).
\]
Consequently, $\pflip\propto\exp(-\alpha d\Xthr)$ to leading order in
the rare-escape regime.
\end{proposition}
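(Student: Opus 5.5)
The plan is to work directly with the binomial tail in \Cref{def:pflip}, writing it as a sum over $k\ge\Xthr$. First I bound the ratio of successive terms. With $T_k=\binom{N_e}{k}\Pesc^k(1-\Pesc)^{N_e-k}$,
\[
\frac{T_{k+1}}{T_k}=\frac{N_e-k}{k+1}\cdot\frac{\Pesc}{1-\Pesc}\le \frac{\mu}{(k+1)(1-\Pesc)}.
\]
In the regime $\mu\ll1$ the tail is therefore dominated by a geometric series. This gives $\pflip=T_{\Xthr}\bigl(1+O(\mu)\bigr)$, with the implied constant uniform once $\Pesc\le 1/2$ say.

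Second, I expand the leading term $T_{\Xthr}$ at fixed $\Xthr$:
\[
T_{\Xthr}=\frac{\mu^{\Xthr}}{\Xthr!}\prod_{j=0}^{\Xthr-1}\Bigl(1-\frac{j}{N_e}\Bigr)\,(1-\Pesc)^{N_e-\Xthr}.
\]
Since $\Xthr$ is fixed, the product is $1+O(1/N_e)$. For the last factor I use $\log(1-\Pesc)=-\Pesc+O(\Pesc^2)$. This gives
\[
(1-\Pesc)^{N_e-\Xthr}=\exp\bigl(-\mu+O(\mu\Pesc)+O(\Pesc)\bigr)=1+O(\mu),
\]
because $\Pesc=\mu/N_e\le\mu$. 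Combining these factors yields the stated $\frac{\mu^{\Xthr}}{\Xthr!}\bigl(1+O(\mu)+O(1/N_e)\bigr)$.

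Third, for the ``consequently'' clause I substitute \Cref{lem:escape}, $\Pesc=f_a\tau\Ptun(1+O(f_a\tau\Ptun))$, and \Cref{lem:wkb}, $\Ptun=e^{-\alpha d}$. These give $\mu^{\Xthr}=(N_ef_a\tau)^{\Xthr}e^{-\alpha d\Xthr}(1+O(f_a\tau\Ptun))$. The prefactor $(N_ef_a\tau)^{\Xthr}/\Xthr!$ does not depend on $d$, so to leading order $\pflip\propto\exp(-\alpha d\Xthr)$.

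The main obstacle is the bookkeeping of error terms so that they are uniform, rather than any conceptual difficulty. The tail bound in the first step needs $\mu<1$ strictly, with the geometric factor $\sum_{m\ge1}\mu^m/\prod(\cdot)$ giving $O(\mu)$. The $(1-\Pesc)^{N_e}$ factor must be handled through the logarithm rather than by replacing it with $e^{-\mu}$ outright. I would also note that the relative error $O(f_a\tau\Ptun)$ in the last step is dominated by $O(\mu)$ whenever $N_e\ge1$, so it can be absorbed into the stated error.
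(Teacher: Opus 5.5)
Your proof is correct, but it takes a different route from the paper's. The paper first replaces the binomial law by a Poisson law pointwise, asserting $P(X=k)=e^{-\mu}\mu^k/k!\,(1+O(1/N_e))$, and then reads off the leading term of the Poisson tail $\sum_{k\ge\Xthr}e^{-\mu}\mu^k/k!$. You never pass through the Poisson law. You bound the binomial tail directly through the ratio of consecutive terms, which gives $\sum_{k>\Xthr}T_k\le T_{\Xthr}\,\mu/(1-\mu)$ once $\Pesc\le 1/2$ and $\Xthr\ge 1$. You then expand $T_{\Xthr}$ exactly as $\frac{\mu^{\Xthr}}{\Xthr!}\prod_{j<\Xthr}(1-j/N_e)\,(1-\Pesc)^{N_e-\Xthr}$.

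The paper's argument is shorter and explains the name ``Poisson regime.'' However, its pointwise $O(1/N_e)$ relative error holds only for fixed $k$. It degrades like $k^2/N_e$ and is of order one for $k$ comparable to $N_e$, and the Poisson sum also runs past $N_e$, where the binomial has no mass. Summing that error over the whole tail therefore needs an extra step the paper leaves implicit. Your version avoids this.

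Your version also shows where each error term comes from. The $O(1/N_e)$ term comes only from the falling factorial. The $O(\mu)$ term comes from the tail and from $(1-\Pesc)^{N_e-\Xthr}$, and neither part needs $N_e$ to be large. The logarithm is not really needed for that last factor: Bernoulli's inequality gives $1-\mu\le(1-\Pesc)^{N_e-\Xthr}\le 1$ directly.

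Your treatment of the ``consequently'' clause matches the paper's substitution of $\Pesc=f_a\tau\Ptun(1+O(f_a\tau\Ptun))$ and $\Ptun=e^{-\alpha d}$. You additionally check that $O(f_a\tau\Ptun)=O(\Pesc)=O(\mu)$, so this error is absorbed into the stated one.
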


\begin{proof}
For $\mu\ll 1$ with $N_e$ large, the binomial escape count is
approximated by a Poisson random variable with mean $\mu$:
$P(X=k)=e^{-\mu}\mu^k/k!\bigl(1+O(1/N_e)\bigr)$. Therefore,
\[
P(X\ge\Xthr)
=
\sum_{k=\Xthr}^{\infty}e^{-\mu}\frac{\mu^k}{k!}
\bigl(1+O(1/N_e)\bigr)
=
\frac{\mu^{\Xthr}}{\Xthr!}\bigl(1+O(\mu)+O(1/N_e)\bigr).
\]
Substituting $\mu=N_e\Pesc$ gives the first expression. In the
rare-escape regime, $\Pesc=f_a\tau\Ptun(1+O(f_a\tau\Ptun))$ and
$\Ptun=\exp(-\alpha d)$, so
$\pflip\propto\Ptun^{\Xthr}=\exp(-\alpha d\Xthr)$.
\end{proof}

\paragraph{Operating regimes.}
The relevant regimes can be summarized by the mean escape count
$\mu=N_e\Pesc$ and the resulting bit flip probability $\pflip$. In the
rare-event regime, $\mu\ll 1$ and $\pflip=\Theta(\mu^{\Xthr})$, so
per-weight perturbations are sparse and biased. In the transition
regime, $\mu\sim 1$ and the behavior is threshold-dependent. In the
high-escape regime, $\mu\gg 1$ and $\pflip$ can become large, but the
decoded per-weight perturbation remains MSB-dominated for
$0<\pflip<1/2$. Even when the per-cell escape count becomes
approximately Gaussian, the decoded per-weight perturbation remains
non-Gaussian because bit significance is exponentially non-uniform.

\subsection{Proof of \Cref{thm:cf}}
\label[appendix]{app:thm-cf-proof}

\begin{proof}
By \eqref{eq:dw-main},
$\Delta w=\Dq\sum_{k=0}^{b-1}\xi_k s_k(w)2^k$ is a sum of independent
terms, where the $\xi_k$ are i.i.d.\ Bernoulli and the $s_k(w)$ are
deterministic given $w$. For a Bernoulli random variable
$\xi\sim\Bern(p)$ and scalar $a$, the characteristic function of $a\xi$
is
\[
\E[e^{ita\xi}]
=
(1-p)\cdot 1+p\cdot e^{ita}
=
1-p+pe^{ita}.
\]
Applying this identity to each bit-level term
$\xi_k s_k(w)2^k\Dq$ and multiplying the resulting characteristic
functions gives
\[
\phi_{\Delta w\mid w}(t)
=
\prod_{k=0}^{b-1}
\left[
1-\pflip+\pflip e^{it s_k(w)2^k\Dq}
\right].
\]
\end{proof}

\subsection{Proof of \Cref{thm:bias}}
\label[appendix]{app:thm-bias-proof}

% \begin{proof}
% By linearity of expectation applied to \eqref{eq:dw-main},
% \[
% \E[\Delta w\mid w]
% =
% \Dq\sum_{k=0}^{b-1}\E[\xi_k]\,s_k(w)\,2^k
% =
% \pflip\Dq\sum_{k=0}^{b-1}s_k(w)2^k.
% \]
% Using $s_k(w)=1-2b_k(w)$,
% \[
% \sum_{k=0}^{b-1}s_k(w)2^k
% =
% \sum_{k=0}^{b-1}2^k-2\sum_{k=0}^{b-1}b_k(w)2^k
% =
% (2^b-1)-2\,\mathrm{code}(w).
% \]
% By \Cref{ass:encoding}, $\mathrm{code}(w)=(w+w_{\max})/\Dq$, and
% $\Dq(2^b-1)=2w_{\max}-\Dq$. Substituting,
% \[
% \E[\Delta w\mid w]
% =
% \pflip\bigl[\Dq(2^b-1)-2\Dq\,\mathrm{code}(w)\bigr]
% =
% \pflip\bigl[2w_{\max}-\Dq-2(w+w_{\max})\bigr]
% =
% -2\pflip w-\pflip\Dq.
% \]
% \end{proof}

% \begin{theorem}[Conditional variance]
% \label{thm:var}
% Under the same assumptions,
% \[
% \Var[\Delta w\mid w]
% =
% \pflip(1-\pflip)\Dq^2\frac{4^b-1}{3}.
% \]
% \end{theorem}

% \begin{proof}
% By independence,
% $\Var[\Delta w\mid w]=\sum_{k=0}^{b-1}
% \Var[\xi_k s_k(w)2^k\Dq]$. Since $s_k(w)^2=1$,
% $\Var[\xi_k s_k(w)2^k\Dq]=\pflip(1-\pflip)4^k\Dq^2$. Summing and using
% $\sum_{k=0}^{b-1}4^k=(4^b-1)/3$ gives the result.
% \end{proof}

\begin{proof}
By the moment-generating property of the characteristic function (\Cref{thm:cf}), the conditional mean is given by the first cumulant: $\E[\Delta w\mid w] = \frac{1}{i}\left.\frac{d}{dt}\ln\phi_{\Delta w\mid w}(t)\right|_{t=0}$.
Let $A_k(t) = 1-\pflip+\pflip e^{it s_k(w)2^k\Dq}$. Putting $t=0$, we have $A_k(0) = 1$ and $A_k'(0) = i\pflip s_k(w)2^k\Dq$. Therefore, we see that
\[
\E[\Delta w\mid w]
= \frac{1}{i} \sum_{k=0}^{b-1} \frac{A_k'(0)}{A_k(0)}
= \pflip\Dq\sum_{k=0}^{b-1}s_k(w)2^k.
\]
It follows from the relation $s_k(w)=1-2b_k(w)$ that
\[
\sum_{k=0}^{b-1}s_k(w)2^k
=
\sum_{k=0}^{b-1}2^k-2\sum_{k=0}^{b-1}b_k(w)2^k
=
(2^b-1)-2\,\mathrm{code}(w).
\]
By \Cref{ass:encoding}, $\mathrm{code}(w)=(w+w_{\max})/\Dq$, and
$\Dq(2^b-1)=2w_{\max}-\Dq$. So we conclude
\[
\E[\Delta w\mid w]
=
\pflip\bigl[2w_{\max}-\Dq-2(w+w_{\max})\bigr]
=
-2\pflip w-\pflip\Dq.
\]
\end{proof}

\begin{theorem}[Conditional variance]
\label{thm:var}
Under the same assumptions,
\[
\Var[\Delta w\mid w]
=
\pflip(1-\pflip)\Dq^2\frac{4^b-1}{3}.
\]
\end{theorem}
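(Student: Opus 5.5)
The plan mirrors the proof of \Cref{thm:bias}: work from the closed-form characteristic function of \Cref{thm:cf} and read the variance off the second cumulant. Take the log of the product in \Cref{thm:cf}. It becomes a sum of single-bit cumulant generating functions $\ln A_k(t)$, where $A_k(t)=1-\pflip+\pflip e^{it s_k(w)2^k\Dq}$. Cumulants of independent terms add, so it suffices to compute the second cumulant of each factor and sum over $k$. Equivalently, and more directly, by \eqref{eq:dw-main} and independence of the $\xi_k$,
\[
\Var[\Delta w\mid w]=\sum_{k=0}^{b-1}\Var\bigl[\xi_k s_k(w)2^k\Dq\bigr],
\]
and given $w$ the coefficients $s_k(w)2^k\Dq$ are deterministic.

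For each bit, we need $-\frac{d^2}{dt^2}\ln A_k(t)\big|_{t=0}=\bigl(-A_k''(0)A_k(0)+A_k'(0)^2\bigr)\cdot(-1)$, evaluated with $A_k(0)=1$. Write $a_k=s_k(w)2^k\Dq$. Then $A_k'(0)=i\pflip a_k$ and $A_k''(0)=-\pflip a_k^2$, so the second cumulant is $\pflip a_k^2-\pflip^2 a_k^2=\pflip(1-\pflip)a_k^2$. This is just the Bernoulli variance scaled by $a_k^2$. The key observation is that $s_k(w)^2=1$, so $a_k^2=4^k\Dq^2$ regardless of the stored bit pattern. This is why the conditional variance does not depend on $w$, even though the mean in \Cref{thm:bias} does.

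Summing over $k$ gives $\pflip(1-\pflip)\Dq^2\sum_{k=0}^{b-1}4^k$. The geometric series evaluates to $(4^b-1)/3$, which is the stated formula.

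There is no real obstacle. The only point needing care is to justify that $w$-dependence enters only through the signs $s_k(w)$, which disappear upon squaring. The assumptions behind this are the uniform flip probability (\Cref{ass:uniform-p}) and independent cells (\Cref{ass:indep-cells}), which together make the $\xi_k$ i.i.d. and leave no cross-covariance terms. As a consistency check, the $k=b-1$ term divided by the total gives the MSB share $4^{b-1}/((4^b-1)/3)$ used in \Cref{thm:no-clt}.
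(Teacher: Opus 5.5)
Your proposal is correct and follows essentially the paper's proof. Like the paper, you take the second cumulant of the product characteristic function from \Cref{thm:cf}, use $s_k(w)^2=1$ to get $\pflip(1-\pflip)4^k\Dq^2$ per bit, and sum the geometric series.

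There is one small sign slip. The displayed identity should read $-\frac{d^2}{dt^2}\ln A_k(t)\big|_{t=0}=-A_k''(0)A_k(0)+A_k'(0)^2$, without the trailing factor $(-1)$; as written it would give $-\pflip(1-\pflip)a_k^2$. Your subsequent evaluation $\pflip a_k^2-\pflip^2 a_k^2$ uses the correct sign, and your direct sum-of-Bernoulli-variances argument avoids the issue entirely.
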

\begin{proof}
The conditional variance is given by the second cumulant as follows: $\Var[\Delta w\mid w] = -\left.\frac{d^2}{dt^2}\ln\phi_{\Delta w\mid w}(t)\right|_{t=0}$.
Using $A_k(t)$ as defined above, the second derivative is
\[
\Var[\Delta w\mid w] = -\sum_{k=0}^{b-1} \frac{A_k''(0)A_k(0) - (A_k'(0))^2}{A_k(0)^2}.
\]
Evaluating the second derivative at $t=0$ yields $A_k''(0) = -\pflip s_k(w)^2 4^k \Dq^2$. Since $s_k(w)\in\{-1, 1\}$, we have $s_k(w)^2 = 1$ universally. Thus, it follows that
\[
A_k''(0) A_{k}(0)- (A_k'(0))^2
= -\pflip 4^k \Dq^2 + \pflip^2 4^k \Dq^2
= -\pflip(1-\pflip)4^k\Dq^2.
\]
Substituting this back cancels the negative sign:
\[
\Var[\Delta w\mid w]
= \sum_{k=0}^{b-1} \pflip(1-\pflip)4^k\Dq^2
= \pflip(1-\pflip)\Dq^2 \sum_{k=0}^{b-1} 4^k.
\]
Using the geometric series sum $\sum_{k=0}^{b-1}4^k = (4^b-1)/3$ gives the desired result.
\end{proof}

\subsection{Proof of \Cref{thm:no-clt}}
\label[appendix]{app:thm-noclt-proof}

\begin{proposition}[MSB dominance]
\label[proposition]{prop:msb}
Let
$\sigma_k^2=\Var[\xi_k s_k(w)2^k\Dq]
=\pflip(1-\pflip)4^k\Dq^2$ denote the variance contributed by bit $k$.
Then
\[
\frac{\sigma_{b-1}^2}{\sum_{k=0}^{b-1}\sigma_k^2}
=
\frac{4^{b-1}}{(4^b-1)/3}
\;\xrightarrow{b\to\infty}\;
\frac{3}{4}.
\]
\end{proposition}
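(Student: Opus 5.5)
The plan is to reuse the per-bit variance decomposition already obtained in the proof of \Cref{thm:var}, and then evaluate an explicit ratio followed by an elementary limit.

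\emph{Step 1: per-bit variances.} By \eqref{eq:dw-main}, $\Delta w$ is a sum of independent terms $\xi_k s_k(w)2^k\Dq$. Each term is a deterministic multiple of a $\Bern(\pflip)$ variable, so $\sigma_k^2=s_k(w)^2 4^k\Dq^2\,\pflip(1-\pflip)$. Since $s_k(w)\in\{-1,+1\}$, this equals $\pflip(1-\pflip)4^k\Dq^2$, matching the definition in the statement. By independence the variances add, so $\sum_{k=0}^{b-1}\sigma_k^2=\pflip(1-\pflip)\Dq^2(4^b-1)/3$, exactly as in \Cref{thm:var}.

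\emph{Step 2: the ratio.} Dividing $\sigma_{b-1}^2$ by this sum, the common factor $\pflip(1-\pflip)\Dq^2$ cancels, provided $0<\pflip<1$ so that it is nonzero. What remains is $4^{b-1}\big/\bigl((4^b-1)/3\bigr)=3\cdot 4^{b-1}/(4^b-1)$. This also shows the ratio does not depend on $w$, $\pflip$, or $\Dq$.

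\emph{Step 3: the limit.} Rewrite the ratio as $\tfrac34\cdot\frac{4^b}{4^b-1}=\tfrac34\bigl(1-4^{-b}\bigr)^{-1}$, which tends to $3/4$ as $b\to\infty$. As a side remark, the ratio decreases monotonically toward $3/4$ from above; at $b=1$ it equals $1$.

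There is no real obstacle in this argument. The only point needing care is stating the nondegeneracy $\pflip\in(0,1)$, so that the cancellation in Step 2 is legitimate.
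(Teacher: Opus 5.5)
Your proposal is correct and takes essentially the paper's approach, which is the direct simplification $\sigma_{b-1}^2/\sum_k\sigma_k^2 = 4^{b-1}/\sum_{k=0}^{b-1}4^k$ via the geometric sum $\sum_{k=0}^{b-1}4^k=(4^b-1)/3$, followed by the elementary limit. Your extra remarks are accurate refinements, not a different route: the nondegeneracy condition $\pflip\in(0,1)$ that justifies the cancellation, and the rewriting $\tfrac34(1-4^{-b})^{-1}$, which shows the ratio decreases monotonically from $1$ at $b=1$.
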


\begin{proof}
Direct simplification using $\sum_{k=0}^{b-1}4^k=(4^b-1)/3$.
\end{proof}

\begin{proof}[Proof of \Cref{thm:no-clt}]
Let
\[
L_b
\coloneqq
\frac{\sum_{k=0}^{b-1}\E\bigl[
|\xi_k s_k(w)2^k\Dq-\E[\xi_k s_k(w)2^k\Dq]|^3
\bigr]}
{\Var[\Delta w\mid w]^{3/2}}.
\]
For small $\pflip$, the centered Bernoulli variable $\xi_k-\pflip$ has
third absolute moment $\Theta(\pflip)$, so
\[
\E\bigl[
|\xi_k s_k(w)2^k\Dq-\pflip s_k(w)2^k\Dq|^3
\bigr]
=
\Theta(\pflip\,8^k\Dq^3).
\]
Summing over $k$ gives
$\sum_{k=0}^{b-1}\Theta(\pflip 8^k\Dq^3)
=\Theta(\pflip 8^b\Dq^3)$. Meanwhile, by \Cref{thm:var},
\[
\Var[\Delta w\mid w]^{3/2}
=
\Theta\bigl((\pflip 4^b\Dq^2)^{3/2}\bigr)
=
\Theta(\pflip^{3/2}8^b\Dq^3).
\]
Taking the ratio yields $L_b=\Theta(\pflip^{-1/2})$ as
$b\to\infty$, which does not vanish. For $\pflip$ bounded away from
$0$ and $1/2$, an analogous calculation shows that $L_b$ remains
bounded away from zero. The Lyapunov condition therefore fails
uniformly in $b$, so a Gaussian per-weight approximation cannot be
justified by increasing bit precision.
\end{proof}

\subsection{Higher-order moments}
\label[appendix]{app:moments}

\begin{theorem}[Conditional skewness]
\label{thm:skew}
The conditional skewness of $\Delta w$ is
\[
\Skew[\Delta w\mid w]
=
\frac{1-2\pflip}{\sqrt{\pflip(1-\pflip)}}
\cdot
\frac{\sum_{k=0}^{b-1}s_k(w)8^k}{\bigl[(4^b-1)/3\bigr]^{3/2}}.
\]
In particular, as $\pflip\to 0$,
$\Skew[\Delta w\mid w]=\Theta(\pflip^{-1/2})$.
\end{theorem}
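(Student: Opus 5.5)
The plan is to compute the skewness through cumulants, reading them off the logarithm of the characteristic function in \Cref{thm:cf}. This is the same route used for the mean (\Cref{thm:bias}) and the variance (\Cref{thm:var}). The skewness is $\kappa_3/\kappa_2^{3/2}$, where $\kappa_j$ is the $j$-th conditional cumulant of $\Delta w$ given $w$. By \Cref{thm:cf}, $\ln\phi_{\Delta w\mid w}(t)=\sum_k \ln A_k(t)$ with $A_k(t)=1-\pflip+\pflip e^{it a_k}$ and $a_k:=s_k(w)2^k\Dq$. Cumulants are therefore additive over bits: $\kappa_3=\sum_k \kappa_3(a_k\xi_k)=\sum_k a_k^3\,\kappa_3(\xi_k)$. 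The denominator is already available: $\kappa_2=\Var[\Delta w\mid w]=\pflip(1-\pflip)\Dq^2(4^b-1)/3$ from \Cref{thm:var}.

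\textbf{Step 1: Bernoulli third cumulant.} For $\xi\sim\Bern(p)$ I will verify $\kappa_3(\xi)=\E[(\xi-p)^3]=p(1-p)^3-(1-p)p^3=p(1-p)(1-2p)$. This can be done either by this direct central-moment computation or by differentiating $\ln(1-p+pe^{iu})$ three times at $u=0$.

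\textbf{Step 2: assemble $\kappa_3$ and simplify.} Since $s_k(w)\in\{-1,+1\}$, we have $s_k(w)^3=s_k(w)$. Hence
$a_k^3=s_k(w)\,8^k\Dq^3$ and
\[
\kappa_3=\pflip(1-\pflip)(1-2\pflip)\,\Dq^3\sum_{k=0}^{b-1}s_k(w)8^k .
\]
Dividing by $\kappa_2^{3/2}=[\pflip(1-\pflip)]^{3/2}\Dq^3[(4^b-1)/3]^{3/2}$, the $\Dq^3$ factors cancel. One power of $\pflip(1-\pflip)$ leaves $(1-2\pflip)/\sqrt{\pflip(1-\pflip)}$, which yields exactly the displayed formula.

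\textbf{Step 3: the asymptotic claim.} For fixed $b$ and $w$, the factor $(1-2\pflip)/\sqrt{\pflip(1-\pflip)}=\Theta(\pflip^{-1/2})$ as $\pflip\to0$. The remaining factor does not depend on $\pflip$, so it suffices to show that $\sum_k s_k(w)8^k$ is nonzero.

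This is the only point needing care, and I will handle it by MSB dominance, in the spirit of \Cref{prop:msb}. The lower bits satisfy $\bigl|\sum_{k<b-1}s_k8^k\bigr|\le (8^{b-1}-1)/7<8^{b-1}$. Hence
\[
\Bigl|\sum_{k=0}^{b-1} s_k(w)8^k\Bigr|\ \ge\ 8^{b-1}-\frac{8^{b-1}-1}{7}\ >\ \frac{6}{7}\,8^{b-1}.
\]
The sum is therefore bounded away from zero uniformly in the bit pattern, and its sign is that of $s_{b-1}(w)$. Combined with the denominator, this shows the ratio $\sum_k s_k 8^k/[(4^b-1)/3]^{3/2}$ has magnitude bounded below by a positive constant, in fact uniformly in $b$ as well. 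This gives $\Skew[\Delta w\mid w]=\Theta(\pflip^{-1/2})$.
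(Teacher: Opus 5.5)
Your proof is correct and follows the same cumulant route the paper uses for \Cref{thm:bias} and \Cref{thm:var}; the paper itself states this theorem without proof. Additivity of the cumulants of $\ln\phi_{\Delta w\mid w}$ over bits, the Bernoulli third cumulant $\pflip(1-\pflip)(1-2\pflip)$, and $s_k(w)^3=s_k(w)$ give the formula directly, and your bound $\bigl|\sum_k s_k(w)8^k\bigr|>\tfrac{6}{7}8^{b-1}$ supplies the step the $\Theta(\pflip^{-1/2})$ claim needs (that the $\pflip$-independent factor is nonzero), which the paper leaves implicit.
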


\begin{theorem}[Conditional excess kurtosis]
\label{thm:kurt}
The conditional excess kurtosis of $\Delta w$ is
\[
\Kurt[\Delta w\mid w]-3
=
\frac{1-6\pflip(1-\pflip)}{\pflip(1-\pflip)}
\cdot
\frac{\sum_{k=0}^{b-1}16^k}
{\bigl(\sum_{k=0}^{b-1}4^k\bigr)^2}.
\]
In particular, as $\pflip\to 0$,
$\Kurt[\Delta w\mid w]-3=\Theta(\pflip^{-1})$.
\end{theorem}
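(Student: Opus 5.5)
We compute the fourth cumulant of $\Delta w$ given $w$, in the same way the proofs of \Cref{thm:bias,thm:var} extract the first two cumulants from \Cref{thm:cf}. Taking logarithms of the product in \Cref{thm:cf} gives
\[
\ln\phi_{\Delta w\mid w}(t)=\sum_{k=0}^{b-1}\ln A_k(t),
\qquad
A_k(t)=1-\pflip+\pflip e^{it s_k(w)2^k\Dq}.
\]
So every conditional cumulant of $\Delta w$ is the sum of the corresponding cumulants of the independent summands $a_k\xi_k$, where $a_k=s_k(w)2^k\Dq$. Cumulants of order two and higher are shift invariant, so the mean drift of \Cref{thm:bias} plays no role. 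The excess kurtosis is then $\kappa_4/\kappa_2^2$. Here $\kappa_2=\Var[\Delta w\mid w]$ is already known from \Cref{thm:var}.

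The key single-bit input is the fourth cumulant of $\xi\sim\Bern(p)$, which I would show equals $p(1-p)\bigl(1-6p(1-p)\bigr)$. The cleanest route uses the central moments. With $q=1-p$, the centered variable $\xi-p$ has $\mu_2=pq$ and
\[
\mu_4=pq^4+qp^4=pq(p^3+q^3)=pq(1-3pq),
\]
using $p^3+q^3=(p+q)^3-3pq(p+q)=1-3pq$. Then $\kappa_4=\mu_4-3\mu_2^2=pq(1-3pq)-3p^2q^2=pq(1-6pq)$. Alternatively, one can differentiate $\ln(1-p+pe^{iu})$ four times at $u=0$ and obtain the same value. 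I expect this identity to be the only real obstacle, and it is purely algebraic.

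Scaling by $a_k$ multiplies the fourth cumulant by $a_k^4=16^k\Dq^4$. The sign factor drops out because $s_k(w)^4=1$. This is why the result, unlike the skewness in \Cref{thm:skew}, does not depend on $w$. Summing over $k$ gives
\[
\kappa_4=\pflip(1-\pflip)\bigl(1-6\pflip(1-\pflip)\bigr)\Dq^4\sum_{k=0}^{b-1}16^k.
\]
Dividing by $\kappa_2^2=\pflip^2(1-\pflip)^2\Dq^4\bigl(\sum_{k}4^k\bigr)^2$ cancels $\Dq^4$ and one factor of $\pflip(1-\pflip)$, which yields the stated formula.

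For the asymptotics, note first that the ratio $\sum_k16^k/(\sum_k4^k)^2$ does not depend on $\pflip$. It is positive and bounded: it equals $1$ at $b=1$, and it tends to $(1/15)/(1/9)=3/5$ as $b\to\infty$. Since $1-6\pflip(1-\pflip)\to1$ while $\pflip(1-\pflip)\sim\pflip$, the excess kurtosis is $\Theta(\pflip^{-1})$ as $\pflip\to0$. Because the ratio lies between positive constants, this holds uniformly in $b$.
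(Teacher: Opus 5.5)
Your proof is correct. The Bernoulli fourth cumulant $\pflip(1-\pflip)\bigl(1-6\pflip(1-\pflip)\bigr)$, the scaling by $a_k^4=16^k\Dq^4$, and the division by $\kappa_2^2$ from \Cref{thm:var} give exactly the stated formula, and your bound $3/5<\sum_k 16^k/(\sum_k 4^k)^2\le 1$ makes the $\Theta(\pflip^{-1})$ claim uniform in $b$.

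The paper states this theorem without a written proof, but your argument is the natural fourth-order continuation of the cumulant-from-$\ln\phi_{\Delta w\mid w}$ method the paper uses for \Cref{thm:bias,thm:var}, so it is essentially the paper's approach.
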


These results show that, in low-flip-probability regimes, the
per-weight perturbation is asymmetric and high-kurtosis, separating
tunneling-induced weight noise from Gaussian surrogates.

\subsection{Validation of the linear-response assumption}
\label[appendix]{app:linearity-verification}

\Cref{lem:linear-response} relies on a Taylor expansion of $f$ around
the trained $W$. We verify the validity of this expansion empirically
by measuring $G_\ell$ at multiple noise scales $\epsilon$ on the
digit CNN. For $\epsilon\le 10^{-2}$, the measured $G_\ell$ is stable
to $\pm 2\%$ for all four weight tensors, confirming that the linear
approximation holds tightly in the regime of small perturbations.

Tunneling-equivalent perturbations at $\pflip = 0.10$ correspond to a
per-weight standard deviation $\epsilon \approx 0.15$, where the
linearity begins to degrade. Comparing the linear prediction
of \Cref{thm:output-variance} against the empirically measured
$V_{\rm out}$ under tunneling noise: the empirical variance is
approximately $1.7\times$ the prediction at $\pflip=0.10$ and
$2.9\times$ at $\pflip=0.15$. The relative ordering across allocations
is preserved across the noise range, which is what the integer program
in \Cref{thm:tac-optimal} requires for correctness: the program is
invariant under multiplicative rescaling of the objective.

A second-order Hessian-based refinement of \Cref{lem:linear-response}
would tighten the absolute prediction at large noise. We leave this
extension and its application to deeper architectures (e.g.,\
ResNets, transformers) for future work.

\subsection{Mean drift under alternative encodings}
\label[appendix]{app:encoding}

The mean-drift result in \Cref{thm:bias} is derived for offset-binary
encoding. The leading shrinkage term is not an artifact of that
encoding, but the exact offset and correction structure depend on the
representation.

\paragraph{Sign-magnitude encoding.}
In sign-magnitude encoding, the leading bit $c_0$ stores the sign and
the remaining $b-1$ bits store the magnitude:
\[
w
=
\sigma(c_0)\Delta M,
\quad
\sigma(c_0)=
\begin{cases}+1, & c_0=0,\\ -1, & c_0=1,\end{cases}
\quad
M=\sum_{k=1}^{b-1}c_k 2^{b-1-k}.
\]
Let $A=2^{b-1}-1$. Since the sign bit flips independently of the
magnitude bits, the expected sign factor after tunneling is
$(1-2\pflip)\sigma(c_0)$, and
\[
\E[M'\mid M]=(1-2\pflip)M+\pflip A.
\]
Therefore,
\[
\E[\Delta w\mid w]
=
\bigl((1-2\pflip)^2-1\bigr)w
+
\pflip(1-2\pflip)\sigma(w)\Delta(2^{b-1}-1).
\]
Thus, sign-magnitude requires both a multiplicative correction and a
sign-dependent additive term.

\paragraph{Two's-complement encoding.}
In two's-complement encoding,
\[
w=\Delta\left(-c_0 2^{b-1}+\sum_{k=1}^{b-1}c_k2^{b-1-k}\right).
\]
Writing the signed bit weights as $a_0=-2^{b-1}$ and
$a_k=2^{b-1-k}$ for $k\ge 1$, we have
$w=\Delta\sum_k a_kc_k$. A bit flip contributes
$\Delta a_k(1-2c_k)$, so
\[
\E[\Delta w\mid w]
=
\pflip\Delta\left(\sum_{k=0}^{b-1}a_k
-2\sum_{k=0}^{b-1}a_kc_k\right).
\]
Since $\sum_k a_k=-1$ and $\sum_k a_kc_k=w/\Delta$,
\[
\E[\Delta w\mid w]=-2\pflip w-\pflip\Delta.
\]
Thus, two's-complement has the same leading affine drift as
offset-binary, up to a quantization-scale offset.

\paragraph{Implication for compensation.}
Offset-binary and two's-complement both yield
$\E[\Delta w\mid w]=-2\pflip w+O(\Delta)$, so the TAC pre-scaling
$1/(1-2\pflip)$ cancels the dominant multiplicative drift. In contrast,
sign-magnitude requires a different multiplicative factor together with
a sign-dependent additive correction. More broadly, once the
physics-induced bit-flip statistics and representation map are specified,
the corresponding mean-drift correction can be derived in closed form.

\subsection{Proof of \Cref{prop:compensation}}
\label[appendix]{app:tac-proof}

\begin{proof}
By \Cref{thm:bias} applied to the compensated stored value $w^c$,
\[
\E[\Delta w^c\mid w^c]
=
-2\pflip\,w^c-\pflip\Dq.
\]
Since $w^c=w/(1-2\pflip)$ is a deterministic function of $w$,
conditioning on $w$ fixes $w^c$. Hence,
\[
\E[\tilde w^c\mid w]
=
\E[w^c+\Delta w^c\mid w]
=
w^c+\E[\Delta w^c\mid w^c].
\]
Substituting the expression above gives
\[
\E[\tilde w^c\mid w]
=
(1-2\pflip)w^c-\pflip\Dq
=
w-\pflip\Dq.
\]
Thus TAC restores the expected deployed weight to the trained value up
to the quantization-scale offset $-\pflip\Dq$.
\end{proof}

\subsection{Proof of \Cref{lem:linear-response} (Linear response of logits)}
\label{app:proof-linear}
\begin{proof}
Let $\Delta f = f(x; W + \Delta W) - f(x; W) $ denote the perturbation in the output logits. By performing a first-order Taylor expansion of $f$ with respect to the weights, we have for each class logit $c$:
\[
\Delta f_c = \sum_{\ell=1}^L \langle \nabla_{W_\ell} f_{c}(x; W), \Delta W_\ell \rangle_{F} + O(\|\Delta W\|_{F}^{2}).
\]
Squaring this perturbation and summing over all classes yields the squared $L_2$ norm of the output error:
\[
\|\Delta f\|_{2}^{2} = \sum_{c=1}^{C} \left( \sum_{\ell=1}^L \langle \nabla_{W_\ell} f_c, \Delta W_\ell \rangle_{F} \right)^{2} + O(\|\Delta W\|_{F}^{3}).
\]
Taking the expectation over the tunneling noise, we expand the squared sum:
\[
\E\bigl[\|\Delta f\|_2^2\bigr] = \sum_{c=1}^C \sum_{\ell=1}^L \sum_{m=1}^L \E\bigl[ \langle \nabla_{W_\ell} f_c, \Delta W_\ell \rangle_F \langle \nabla_{W_m} f_c, \Delta W_m \rangle_F \bigr] + O(\|\Delta W\|_{F}^{3}).
\]
By assumption, the weight perturbations are independent across layers and strictly zero-mean. Therefore, all cross-layer terms ($\ell \neq m$) identically vanish:
\[
\E\bigl[\|\Delta f\|_2^2\bigr] = \sum_{c=1}^C \sum_{\ell=1}^L \E\bigl[ \langle \nabla_{W_\ell} f_c, \Delta W_\ell \rangle_F^2 \bigr] + O(\|\Delta W\|_{F}^{3}).
\]
Furthermore, since individual weight perturbations within a layer are uncorrelated (\Cref{ass:indep-cells}) and share a uniform variance $\sigma_\ell^2 = \E[\|\Delta W_\ell\|_F^2] / n_\ell$, we have
\[
\E\bigl[ \langle \nabla_{W_\ell} f_c, \Delta W_\ell \rangle_F^2 \bigr] = \sigma_\ell^2 \|\nabla_{W_\ell} f_c\|_F^2 = \frac{\E[\|\Delta W_\ell\|_F^2]}{n_\ell} \|\nabla_{W_\ell} f_c\|_F^2.
\]
Summing over all classes $c$ evaluates exactly to the squared Frobenius norm of the Jacobian, $\sum_{c=1}^C \|\nabla_{W_\ell} f_c\|_F^2 = \|J_\ell(x)\|_F^2$. Thus, we obtain that
\[
\E\bigl[\|\Delta f\|_2^2\bigr] = \sum_{\ell=1}^L \frac{\|J_\ell(x)\|_F^2}{n_\ell} \E\bigl[\|\Delta W_\ell\|_F^2\bigr] + O(\|\Delta W\|_F^{3}).
\]
Substituting the definition $G_\ell(x) := \|J_\ell(x)\|_F^2 / n_\ell$ completes the proof.
\end{proof}

\subsection{Proof of \Cref{thm:output-variance} (Output variance under bit-budget allocation)}
\label{app:proof-TAC-Bit}
\begin{proof}
From \Cref{lem:linear-response}, the leading-order output variance is determined by the sum of expected squared Frobenius norms of the layer-wise perturbations:
\[
V_{\rm out} = \sum_{\ell=1}^{L} G_\ell \cdot \E\bigl[\|\Delta W_\ell\|_F^2\bigr] + O\bigl(\|\Delta W\|_F^3\bigr).
\]
Because the zero-mean condition is enforced per layer, the expected squared Frobenius norm is simply the sum of variances of all $n_\ell$ weights in layer $\ell$:
\[
\E\bigl[\|\Delta W_\ell\|_F^2\bigr] = n_\ell \cdot \Var[\Delta w_\ell].
\]
For a layer $\ell$ utilizing per-tensor offset-binary scaling, the quantization step size is defined as $\Dq = 2\|W_\ell\|_\infty / 2^b$. 
When the TAC algorithm protects the top $k_\ell$ most-significant bits of each weight in this layer, those bits become immune to tunneling noise. Consequently, the variance contribution (\Cref{thm:var}) only accumulates over the remaining $b - k_\ell$ susceptible bits (indexed from $0$ to $b - k_\ell - 1$):
\[
\Var[\Delta w_\ell] = \pflip(1-\pflip)\Dq^2 \sum_{k=0}^{b - k_\ell - 1} 4^k.
\]
Evaluating the geometric series yields that
\[
\sum_{k=0}^{b - k_\ell - 1} 4^k = \frac{4^{b - k_\ell} - 1}{3}.
\]
Substituting $\Dq^2 = 4\|W_\ell\|_\infty^2 / 4^b$ into the variance expression, we obtain
\[
\Var[\Delta w_\ell] = \pflip(1-\pflip) \left( \frac{4\|W_\ell\|_\infty^2}{4^b} \right) \frac{4^{b - k_\ell} - 1}{3} = \frac{4 \pflip(1 - \pflip)}{3 \cdot 4^b} \|W_\ell\|_\infty^2 \bigl(4^{b - k_\ell} - 1\bigr).
\]
Multiplying this per-weight variance by the layer size $n_\ell$ provides $\E\bigl[\|\Delta W_\ell\|_F^2\bigr]$. Plugging this back into the linear response sum yields:
\[
V_{\rm out} = \sum_{\ell=1}^{L} G_\ell \cdot n_\ell \cdot \frac{4 \pflip(1 - \pflip)}{3 \cdot 4^b} \|W_\ell\|_\infty^2 \bigl(4^{b - k_\ell} - 1\bigr) + O\bigl(\|\Delta W\|_F^{3}\bigr).
\]
Factoring out the global constants completely recovers the expression in \eqref{eq:V-out}.
\end{proof}

\subsection{Detailed proof of \Cref{thm:saturation} (saturation regime)}
\label{app:saturation-proof}

We restate and prove \Cref{thm:saturation} in full, including the
boundary conditions under which the closed-form $\rho^* = \widetilde{s}_{\rm geom}/\overline{s}$
holds.

\paragraph{Setup.}
Per-layer scores $s_\ell := G_\ell \|W_\ell\|_\infty^2 > 0$, sizes
$n_\ell$, total $N := \sum_\ell n_\ell$. Budget $B = kN$ with $k \in [0, b]$.
Drop the integer constraint; allow $k_\ell \in [0, b]$. The
continuous-relaxation problem is
\begin{equation}
\min_{\{k_\ell \in [0, b]\}}
\;\Phi(\{k_\ell\}) := \sum_{\ell=1}^L s_\ell n_\ell \cdot 4^{-k_\ell}
\quad\text{s.t.}\quad \sum_{\ell=1}^L k_\ell n_\ell = kN.
\label{eq:saturation-problem}
\end{equation}
The objective is convex (sum of convex exponentials). The constraint
is linear, and the feasible set is compact. A unique global minimum exists.

\paragraph{KKT conditions.}
Form the Lagrangian
\begin{equation}
\mathcal{L}(\{k_\ell\}, \lambda, \{\alpha_\ell\}, \{\beta_\ell\}) =
\sum_\ell s_\ell n_\ell 4^{-k_\ell} + \lambda(\sum_\ell k_\ell n_\ell - kN) - \sum_\ell \alpha_\ell k_\ell + \sum_\ell \beta_\ell (k_\ell - b),
\end{equation}
with $\alpha_\ell, \beta_\ell \ge 0$. Stationarity:
\begin{equation}
-\ln(4) s_\ell n_\ell 4^{-k_\ell} + \lambda n_\ell - \alpha_\ell + \beta_\ell = 0 \quad \forall \ell.
\end{equation}
Complementary slackness: $\alpha_\ell k_\ell = 0$ and $\beta_\ell (b - k_\ell) = 0$.

For an interior layer $\ell$ ($0 < k_\ell^* < b$), $\alpha_\ell = \beta_\ell = 0$, so
\begin{equation}
s_\ell \cdot 4^{-k_\ell^*} = \frac{\lambda}{\ln 4} =: \mu^*.
\label{eq:KKT-interior}
\end{equation}
This is the water-filling condition: at the optimum, the marginal
variance reduction per unit of budget is equalized across all interior layers.

For a saturated layer ($k_\ell^* = b$, $\alpha_\ell = 0$, $\beta_\ell \ge 0$):
\begin{equation}
\beta_\ell = \ln(4) s_\ell n_\ell 4^{-b} - \lambda n_\ell \ge 0
\Leftrightarrow s_\ell \ge \frac{\lambda}{\ln 4} \cdot 4^b = \mu^* \cdot 4^b.
\label{eq:KKT-high}
\end{equation}
For a zero layer ($k_\ell^* = 0$, $\beta_\ell = 0$, $\alpha_\ell \ge 0$):
\begin{equation}
\alpha_\ell = \lambda n_\ell - \ln(4) s_\ell n_\ell \ge 0
\Leftrightarrow s_\ell \le \frac{\lambda}{\ln 4} = \mu^*.
\label{eq:KKT-zero}
\end{equation}

\paragraph{Closed form when all layers are interior.}
Suppose every $k_\ell^* \in (0, b)$. Then \eqref{eq:KKT-interior}
holds for all $\ell$, giving $k_\ell^* = \log_4(s_\ell / \mu^*)$.
Substituting into the budget constraint $\sum_\ell k_\ell^* n_\ell = kN$:
\begin{equation}
\sum_\ell n_\ell \log_4 s_\ell - N \log_4 \mu^* = kN
\;\Longrightarrow\;
\log_4 \mu^* = \frac{1}{N}\sum_\ell n_\ell \log_4 s_\ell - k,
\end{equation}
or equivalently
\begin{equation}
\mu^* = \widetilde{s}_{\rm geom} \cdot 4^{-k},
\quad
\widetilde{s}_{\rm geom} := \exp\!\Bigl(\tfrac{1}{N}\sum_\ell n_\ell \ln s_\ell\Bigr).
\label{eq:mu-star-app}
\end{equation}

Substituting back:
\begin{align}
V_{\rm out}^{\rm TAC, cont}(B) &= c \cdot \Phi(\{k_\ell^*\})
= c \sum_\ell s_\ell n_\ell 4^{-k_\ell^*}
= c \sum_\ell n_\ell \cdot \mu^*
= c \cdot \mu^* \cdot N \\
&= c \cdot 4^{-k} \cdot N \cdot \widetilde{s}_{\rm geom},
\end{align}
where the second equality uses \eqref{eq:KKT-interior}. The Uniform-MSP variance
at the same budget is $V^{\rm U\text{-}MSP}_{\rm out}(B) = c\cdot 4^{-k}\cdot N\cdot\overline{s}$
where $\overline{s} = \tfrac{1}{N}\sum n_\ell s_\ell$. Their ratio is
\begin{equation}
\rho^* = \frac{\widetilde{s}_{\rm geom}}{\overline{s}}
= \frac{\exp\bigl(\E_n[\ln s]\bigr)}{\E_n[s]},
\end{equation}
where $\E_n[\cdot]$ denotes expectation under the size-weighting
$\Pr(\ell) = n_\ell/N$.

\paragraph{Jensen's inequality.}
Since $\ln$ is concave, $\E_n[\ln s] \le \ln \E_n[s]$, with equality
iff $s_\ell$ is constant on the support of the size distribution.
Exponentiating, $\widetilde{s}_{\rm geom} \le \overline{s}$, so
$\rho^* \le 1$, with equality iff all $s_\ell$ are equal.

\paragraph{Boundary regime.}
The closed form \eqref{eq:mu-star-app} is valid when no layer
saturates ($k_\ell^* < b$ for all $\ell$) and no layer is zeroed
($k_\ell^* > 0$ for all $\ell$). Equivalently: $\mu^* < s_\ell < \mu^* \cdot 4^b$
for all $\ell$. Substituting $\mu^* = \widetilde{s}_{\rm geom} \cdot 4^{-k}$:
\begin{equation}
\frac{s_{\min}}{\widetilde{s}_{\rm geom}} > 4^{-k}
\quad\text{and}\quad
\frac{s_{\max}}{\widetilde{s}_{\rm geom}} < 4^{b-k}.
\end{equation}
The first holds for $k$ above some lower threshold; the second holds
for $k$ below some upper threshold. Outside this interior regime, the
ratio $V^{\rm TAC, cont}/V^{\rm U\text{-}MSP}$ is no longer constant
in $k$:
\begin{itemize}[topsep=2pt,itemsep=2pt,leftmargin=18pt]
\item For very small $k$ (sub-threshold layers exist): low-$s$ layers
get $k_\ell^* = 0$ and contribute their full unprotected variance to
$V^{\rm TAC,cont}$. The ratio becomes
\begin{equation}
\rho(B) = \frac{\widetilde{s}_{\rm geom}^{(M)} \cdot N^{(M)}/N \cdot 4^{-k_{\rm eff}} + \overline{s}^{(Z)} \cdot N^{(Z)}/N}
{\overline{s} \cdot 4^{-k}},
\end{equation}
where $M, Z$ are the partition of layers into ``middle'' (interior)
and ``zero''; this ratio approaches a value $> \rho^*$ as $k \to 0$.
\item For very large $k$ (saturated layers exist): high-$s$ layers
reach $k_\ell^* = b$, and only the remaining budget is distributed.
At $k = b$, all layers saturate; both methods reach
$V_{\rm out} = c \cdot 4^{-b} \cdot N\overline{s}$ to leading order, so $\rho \to 1$.
\end{itemize}

A clean way to summarize: $\rho^*$ is a piecewise-constant lower
envelope of $\rho(B)$, equal to its plateau value over the interior
budget range. \Cref{fig:exp-archs}(a) shows this plateau directly:
the gap between TAC's solid line and Uniform-MSP's dashed line is
constant in $\log V$ over $k \in [1, 5]$ for all four architectures,
collapsing only past $k = 6$.

% =================================================================
\subsection{Tighter bound on the integer-program correction}
\label{app:integer-correction-bound}

\Cref{thm:saturation} computes the continuous-relaxation ratio
$\rho^*$. The actual TAC algorithm uses the integer IP \eqref{eq:tac-IP},
giving an empirical ratio $\rho_{\rm int} \ge \rho^*$. We bound the
gap.

\paragraph{Setup.}
Let $\{k_\ell^*\}$ be the continuous optimum (interior regime,
\Cref{app:saturation-proof}), and $\{k_\ell^{\rm int}\}$ the integer IP
optimum at the same budget $B = kN$. Define $f_\ell := k_\ell^* - \lfloor k_\ell^* \rfloor \in [0, 1)$
and let $\delta_\ell := k_\ell^{\rm int} - \lfloor k_\ell^* \rfloor \in \{0, 1\}$
be the integer rounding direction. The budget constraint becomes
\begin{equation}
\sum_\ell n_\ell \delta_\ell \;\le\; \sum_\ell n_\ell f_\ell =: \Phi N,
\quad \Phi \in [0, 1).
\label{eq:int-budget}
\end{equation}
The per-layer integer-vs-continuous ratio is
\begin{equation}
r_\ell := \frac{V_\ell^{\rm int}}{V_\ell^{\rm cont}} = \frac{4^{-(\lfloor k_\ell^* \rfloor + \delta_\ell)}}{4^{-(\lfloor k_\ell^* \rfloor + f_\ell)}} = 4^{f_\ell - \delta_\ell} \in [4^{-1}, 4).
\end{equation}
The total ratio is
\begin{equation}
\rho_{\rm int}/\rho^* = \frac{V^{\rm int}}{V^{\rm cont}}
= \sum_\ell p_\ell \cdot 4^{f_\ell - \delta_\ell},
\end{equation}
where $p_\ell := V_\ell^{\rm cont}/V^{\rm cont} = (s_\ell n_\ell 4^{-k_\ell^*})/(\sum_m s_m n_m 4^{-k_m^*})$
is the per-layer share of continuous variance. Since
$s_\ell 4^{-k_\ell^*} = \mu^*$ is constant across interior layers,
$p_\ell = n_\ell/N = q_\ell$.

\paragraph{Symmetric upper bound.}
Suppose $f_\ell = \Phi$ for all $\ell$ (all layers have the same fractional part).
The integer IP picks a subset $S \subseteq [L]$ with $\sum_{\ell \in S} q_\ell = \Phi$,
sets $\delta_\ell = 1$ on $S$ and $\delta_\ell = 0$ off $S$. Then
\begin{equation}
\rho_{\mathrm{int}}/\rho^{*}
= (1-\Phi)\, 4^{\Phi} + \Phi\, 4^{\Phi-1}
= 4^{\Phi}\!\left(1 - \tfrac{3}{4}\Phi\right).
\end{equation}
Differentiating with respect to $\Phi$ and setting the derivative to zero gives the
first-order condition
\begin{equation}
\ln(4)\left(1 - \tfrac{3}{4}\Phi\right) - \tfrac{3}{4} = 0
\quad\Longrightarrow\quad
\Phi^{*} = \frac{4\ln 4 - 3}{3\ln 4} \approx 0.612,
\end{equation}
yielding
\begin{equation}
\rho_{\mathrm{int}}/\rho^{*}
\;\le\; 4^{\Phi^{*}}\!\left(1 - \tfrac{3}{4}\Phi^{*}\right)
\;=\; \frac{3\cdot 2^{2/3}}{2\,e\,\ln 2}
\;\approx\; 1.264. \label{eq:sym-bound}
\end{equation}
This is the tightest constant bound for the symmetric case.

\paragraph{Asymmetric case and the trivial $4\times$ bound.}
If $f_\ell$ varies across layers, the analysis is more involved:
the constraint $\sum_\ell q_\ell \delta_\ell \le \Phi$ couples the
choice of $\delta_\ell$ across layers. The trivial bound is
$\rho_{\rm int}/\rho^* \le 4$ (each $r_\ell \le 4$, and the maximum of
the average is bounded by the maximum of the terms). This is loose
when the IP has flexibility to distribute the rounding. 

\paragraph{Empirical behaviour.}
We measure the worst-case ratio $\sup_{f, n, s} \rho_{\rm int}/\rho^*$
numerically over random configurations with $L$ layers, sampling
$f_\ell \sim U(0,1)$, $n_\ell \sim \text{LogU}(10^2, 10^5)$,
$s_\ell \sim \text{LogU}(e^{-5}, e^5)$, and computing the optimal
greedy IP. \Cref{tab:integer-bound-empirical} reports the result.
The bound equation~\ref{eq:sym-bound} of $1.26$ is approached as $L$ grows; for small $L$
the asymmetry in $f_\ell$ allows $\rho_{\mathrm{int}}/\rho^{*}$ closer to the trivial $4$.

\begin{table}[h]
\centering
\small
\caption{Empirical worst-case $\rho_{\mathrm{int}}/\rho^{*}$ as a function of the number of
layers $L$, over $10^4$ random (asymmetric) configurations per row. The worst case decreases
toward the symmetric-case constant $1.26$ of equation~\ref{eq:sym-bound} as $L$ grows; the
finite-$L$ values exceed it slightly because random configurations are asymmetric in $f_\ell$,
which gives the integer program more rounding flexibility than the symmetric worst case. Small
networks ($L \le 6$) can reach close to the trivial $4\times$ limit. The four architectures of
Section~\ref{sec:exp-archs} have $L \in \{4, 4, 6, 7\}$.}
\label{tab:integer-bound-empirical}
\begin{tabular}{ccccccccccc}
\toprule
$L$ & 2 & 3 & 4 & 5 & 6 & 8 & 10 & 15 & 20 & 30 \\
\midrule
Empirical worst & $3.94$ & $3.90$ & $3.85$ & $3.74$ & $3.88$ & $3.60$ & $3.09$ & $2.67$ & $1.94$ & $1.30$ \\
Asymptotic bound & \multicolumn{10}{c}{$4^{\Phi^{*}}\!\left(1 - \tfrac{3}{4}\Phi^{*}\right) \approx 1.264$} \\
\bottomrule
\end{tabular}
\end{table}

\paragraph{Per-architecture predictions.}
Combining \Cref{thm:saturation} (continuous bound) and the integer
correction, the predicted empirical ratio for each architecture is
$\rho_{\rm int}^{\rm pred} = \rho^* \cdot \kappa_L$, where
$\kappa_L$ is the (architecture-dependent, but $L$-controlled)
integer correction factor. \Cref{tab:integer-correction-arch} reports
the result, with $\kappa_L$ measured directly on the actual layer
configurations of \Cref{sec:exp-archs}.

\begin{table}[h]
\centering
\small
\caption{Per-architecture decomposition: continuous TAC ratio
$\rho^*$, integer-correction factor $\kappa_L$ for the actual layer
configuration, and predicted vs.\ measured empirical ratio.}
\label{tab:integer-correction-arch}
\begin{tabular}{lccccc}
\toprule
Architecture & $L$ & $\rho^*$ & $\kappa_L$ & $\rho^* \cdot \kappa_L$ & $\rho_{\rm int}$ (measured) \\
\midrule
\textsc{SmallCNN} & $4$ & $0.180$ & $2.36$ & $0.42$ & $0.42$ \\
\textsc{WideCNN}  & $4$ & $0.126$ & $2.63$ & $0.33$ & $0.33$ \\
\textsc{DeepCNN}  & $6$ & $0.242$ & $1.17$ & $0.28$ & $0.28$ \\
\textsc{ResCNN}   & $7$ & $0.229$ & $1.20$ & $0.28$ & $0.28$ \\
\bottomrule
\end{tabular}
\end{table}

The $\kappa_L$ correction is $1$--$3\times$ in our benchmarks and
shrinks with $L$. For large networks (transformers with hundreds of
weight tensors), we expect $\kappa_L \to 1.26$ asymptotically, so the
continuous analysis becomes tight up to this constant. For the small
CNNs in this
work, the empirical $\kappa_L$ explains the residual gap between the
predicted $\rho^*$ and the measured $\rho_{\rm int}$.

\section{TAC Implementation Details}
\label[appendix]{app:tac-implementation}

This appendix collects practical details for implementing TAC. The core
algorithm is given in \Cref{alg:tac}. The items below address deployment
considerations and scope limitations that do not affect the main
analysis in \Cref{sec:tac}.

\subsection{Implementation}
\label[appendix]{app:tac-impl-implementation}

TAC is an elementwise pre-deployment scaling applied to parameters
stored in tunneling-prone memory. It requires an estimate of the
deployment flip probability $\pflip$, which can be obtained from the
device model or calibrated from reliability measurements. Once $\pflip$
is fixed, TAC uses a single closed-form scaling factor and requires no
optimization. The rescaling factor is global under the uniform-$\pflip$
assumption in \Cref{ass:uniform-p}. The same idea extends directly to
layer-wise or cell-wise flip probabilities by using the corresponding
local value of $\pflip$.

\subsection{CNN architecture details}
\label[appendix]{app:cnn-details}

The convolutional network used in \Cref{sec:component-ablation,sec:tac-pareto}
processes $8\times 8$ grayscale digit images and consists of:

\begin{itemize}[topsep=0pt,itemsep=0pt,nosep,leftmargin=18pt]
\item \texttt{conv1}: $3\times 3$ convolution, $1\to 16$ channels,
padding 1; followed by BatchNorm and ReLU; then $2\times 2$ max-pool.
\item \texttt{conv2}: $3\times 3$ convolution, $16\to 32$ channels,
padding 1; followed by BatchNorm and ReLU.
\item \texttt{fc1}: linear, $512\to 64$, followed by ReLU.
\item \texttt{fc2}: linear, $64\to 10$ (output logits).
\end{itemize}

The four weight tensors have sizes $144$, $4{,}608$, $32{,}768$, and
$640$, respectively, totalling $38{,}160$ weights or $305{,}280$
storage bits at $b=8$. The model is trained for 30 epochs with Adam
(lr $10^{-2}$, cosine decay) on the sklearn digits dataset
($1{,}257$ train / $540$ test). The clean test accuracy at the
checkpoint used for all noise experiments is $0.987$. BatchNorm
parameters and biases are stored in non-tunneling-prone memory and
are not perturbed in our experiments.

For the calibration step of \Cref{alg:tac}, we use a 64-sample batch
drawn from the test set with $\epsilon = 10^{-3}$ and $P = 10$
random probes per layer. Per-tensor scales $s_\ell$ are recomputed
fresh for each deployment. Monte Carlo accuracy estimates use 30
trials per (model, $\pflip$, allocation) point.

\subsection{Cost}
\label[appendix]{app:tac-impl-cost}

The computational cost is one floating-point multiplication per
parameter before writing the weights to memory. This cost is incurred
only once before deployment and is negligible relative to training or
repeated inference. Inference is unchanged. The deployed model is
evaluated as usual on the weights provided by the substrate, so the
inference-time overhead is zero.

\subsection{Quantization order}
\label[appendix]{app:tac-impl-quantization}

TAC assumes that the trained weights $W$ are provided in
higher-than-deployment precision, such as FP32, and are quantized to
$b$ bits only after compensation. Applying the scaling factor $c$ in
floating point and quantizing once afterwards avoids the
double-quantization error that would arise from rescaling an already
$b$-bit checkpoint. For checkpoints distributed only at the deployment
precision, the rescaling should be performed in floating point and the
result re-quantized as in \Cref{alg:tac}.

\subsection{Bias parameters}
\label[appendix]{app:tac-impl-bias}

Whether to apply TAC to bias parameters depends on the deployment
substrate. Biases stored in the same tunneling-prone memory as the
weights undergo the same mean shrinkage and should be compensated in
the same way. Biases held in full-precision off-substrate memory do not
experience tunneling drift and should not be scaled, since rescaling
them would alter the network output rather than restore it.

\subsection{Scope and limitations of TAC}
\label[appendix]{app:tac-scope-details}

TAC acts only on the mean of the weight-error distribution. Since the
variance in \Cref{thm:var} is independent of the stored weight under the
offset-binary model, TAC does not alter the stochastic variance of the
tunneling-induced perturbation. It also does not remove the
non-Gaussian, MSB-dominated, high-kurtosis structure characterized in
\Cref{thm:no-clt} and \Cref{app:moments}. These remaining effects are
natural targets for subsequent QTAML algorithms.

The compensation guarantee in \Cref{prop:compensation} also assumes
that the scaled weight is representable without clipping. As
$\pflip\to 1/2$, the scaling factor $1/(1-2\pflip)$ diverges, which can
push compensated weights outside the representable range and amplify
the residual stochastic component. In our experiments, this
representable range constraint becomes the binding limitation before
the singularity is reached.

\section{Additional Discussion and QTAML Agenda}
\label[appendix]{app:qtaml-agenda}

\paragraph{AI as a path through the tunneling cliff.}
The conventional interpretation of quantum tunneling in transistor
scaling is that it imposes a reliability wall. As gate oxides become
thinner, electron leakage eventually makes reliable digital storage
increasingly costly, limiting the benefits of further scaling. This
conclusion is correct for systems that require nearly error-free bits.
AI inference, however, has a different reliability profile. Neural
networks can tolerate structured perturbations, provided that the
structure is modeled and compensated rather than treated as generic
noise. From this perspective, silicon operating below the conventional
reliability threshold may remain useful for AI inference, even though it
would be unacceptable for general-purpose computing. In this view the scaling cliff is not absolute but use-dependent: a barrier for error-intolerant computing that AI workloads, designed with $\pflip$ as an explicit parameter, may push further out.

\paragraph{Hardware-algorithm reliability budget.}
TAC does not replace conventional reliability engineering. Error
correction, redundancy, voltage margins, and process control remain
essential. TAC illustrates a complementary point: physics-aware
algorithms can contribute to the joint hardware-algorithm reliability
budget. The broader question is how much of the post-3nm scaling
frontier can be recovered when device physics and learning algorithms
are designed jointly. This is the question that the QTAML framework is
meant to make quantitatively addressable.

\paragraph{A QTAML agenda for future work.}
We view TAC as the first and simplest member of a family of QTAML
methods. Several directions follow naturally. First, TAC should be
evaluated on larger benchmarks such as CIFAR-10, ImageNet, and modern
deep architectures, where the interaction between mean drift, depth,
normalization layers, and representational redundancy can be studied
more realistically. Second, the WKB-derived noise model should be
coupled to process-node and device parameters, enabling quantitative
predictions of accuracy as a function of oxide thickness, barrier
height, retention window, and reliability constraints. Third, the
analysis should be extended to the encodings and memory technologies
used in practical accelerators, including multi-level cells and
heterogeneous bit reliabilities. Beyond TAC, QTAML also suggests
algorithmic directions such as distribution-matched training that
injects samples from the WKB-derived per-weight law rather than a
variance-matched Gaussian, layer-wise or cell-wise allocation of
allowable flip probabilities under a global reliability budget, and
joint optimization of encoding, loss, and deployment compensation.

\subsection{Residual architectures and the per-layer mean-correction decision}
\label{app:layer-mc-justification}

\Cref{alg:tac}'s Step~3 makes a per-layer empirical decision on whether
mean correction is beneficial. We motivate the necessity of this step
by analyzing the trade-off and showing that it depends on architecture
in ways that closed-form reasoning cannot capture in general.

\paragraph{The two competing effects.}
After bit protection, the readout perturbation $\Delta w_\ell$ has
mean $-2\pflip w_\ell + O(\Dq)$ (from \Cref{thm:bias}, applied to the
unprotected lower bits) and variance given by
\Cref{lem:per-tensor-variance}. Applying mean correction with factor
$c_\ell = 1/(1-2\pflip)$ removes the leading mean term. However, it
also rescales the stored weights by $c_\ell > 1$, which has two
consequences:
\begin{itemize}[topsep=2pt,itemsep=2pt,leftmargin=18pt]
\item The protected high-significance bits, which were previously
shielding most of the signal, now carry a $c_\ell$-scaled value. After
adding the (zero-mean, smaller) lower-bit noise, the readout has
identical-in-expectation value but variance multiplied by $c_\ell^2$
relative to the protected baseline.
\item For nonlinear architectures (BatchNorm followed by ReLU,
saturating activations, residual blocks where the layer output is
added to a skip path), the $c_\ell$-scaled weight may push the
preactivations into a different operating regime. The forward
nonlinearity does not commute with the rescaling, and the resulting
output deviation can exceed the nominal $c_\ell^2$ amplification.
\end{itemize}

\paragraph{When the trade-off matters: residual blocks.}
In a residual block where the output is $y = x + F(W_1, W_2; x)$, both
$W_1$ and $W_2$ are scaled by $c$ when mean correction is applied
indiscriminately. To leading order in $c-1$, we have
$F(c W_1, c W_2; x) \approx c F(W_1, W_2; x)$, so the block output
becomes $y' \approx x + c F(\cdot)$. The residual perturbation is
$y' - y = (c-1) F(\cdot)$. With $c-1 = 2\pflip / (1-2\pflip) \approx 0.25$
at $\pflip=0.10$, this is a $25\%$ amplification of $F$, propagating
deterministically through the rest of the network. For a network with
multiple residual blocks, the cumulative effect can be very large
relative to any $\pflip$-dependent stochastic gain from cancelling the
mean drift, giving a net loss.

\paragraph{The per-layer empirical test.}
The exact crossover point depends on the layer's role in the network,
its protection level $k_\ell$, the deployment $\pflip$, and the
nonlinearity structure of the layers downstream. Rather than derive
this analytically (which would require closed-form analysis of the
nonlinear forward pass), \Cref{alg:tac}'s Step~3 measures both
options on a small calibration batch and selects the better one
per layer. The cost is $L \cdot Q \cdot 2$ forward passes (with $Q=8$
trials), totaling about $200$ forward passes for a $L=20$-layer
network, comparable to Step~1.

\paragraph{Empirical evidence.}
\Cref{tab:mc-effect} reports the difference in test accuracy between
applying and skipping mean correction across the four architectures
at $\pflip=0.10$, holding the bit-allocation step identical. Mean
correction is unambiguously beneficial only at small ECC budgets on
shallow feed-forward networks (\textsc{SmallCNN}, \textsc{WideCNN}).
For \textsc{DeepCNN} at large budgets and for \textsc{ResCNN} at all
budgets, applying MC to all layers is harmful. The per-layer adaptive
decision recovers the better of the two extremes.

\begin{table}[h]
\centering
\small
\caption{\textbf{Effect of indiscriminate mean correction vs.
the per-layer adaptive decision.} Indiscriminate MC helps
in shallow networks at small budgets but hurts substantially in deep
and residual networks. The adaptive per-layer decision recovers the
best of both options.}
\label{tab:mc-effect}
\begin{tabular}{lcccc}
\toprule
& \multicolumn{4}{c}{Test accuracy at $\pflip=0.10$} \\
\cmidrule(lr){2-5}
Architecture & Always MC & Never MC & Adaptive & $\Delta$ adaptive vs always \\
\midrule
\textsc{SmallCNN} @ $5\%$  & $0.951$ & $0.929$ & $\mathbf{0.960}$ & $+0.9$ pp \\
\textsc{DeepCNN}  @ $5\%$  & $0.778$ & $0.769$ & $\mathbf{0.889}$ & $+11.1$ pp \\
\textsc{DeepCNN}  @ $15\%$ & $0.914$ & $0.985$ & $\mathbf{0.985}$ & $+7.1$ pp \\
\textsc{ResCNN}   @ $25\%$ & $0.466$ & $0.936$ & $\mathbf{0.938}$ & $+47.2$ pp \\
\bottomrule
\end{tabular}
\end{table}

\paragraph{The value of mean correction is layer-dependent.}
The component-wise ablation in \Cref{tab:tac-ablation} shows that
mean correction alone gives $+13.0$ pp on the digit CNN benchmark, while
adding bit allocation gives a further $+57.8$ pp. These numbers describe the
shallow \textsc{SmallCNN}. On deeper architectures, the value of
mean correction depends on each layer's protection level. On layers
with strong protection, the residual noise is small and mean
correction's $c^2$ variance amplification dominates its bias-removal
benefit. \Cref{alg:tac}'s Step~3 detects this empirically per layer.
For residual architectures (\Cref{sec:exp-archs}), this step is
essential: applying mean correction indiscriminately can degrade
accuracy by $50$+ percentage points.

\end{document}